\newtheorem{lemma}{Lemma}[section]
\newtheorem{theorem}[lemma]{Theorem}
\newtheorem{definition}[lemma]{Definition}
\newtheorem{corollary}[lemma]{Corollary}
\newtheorem{proposition}[lemma]{Proposition}
\newtheorem{claim}[lemma]{Claim}
\newcommand{\cD}{\mathcal{D}}
\newcommand{\cF}{\mathcal{F}}
\newcommand{\cH}{\mathcal{H}}
\newcommand{\cI}{\mathcal{I}}
\newcommand{\cW}{\mathcal{W}}
\newcommand{\cX}{\mathcal{X}}
\newcommand{\opt}{\mathrm{OPT}}
\newcommand{\eps}{\epsilon}
\newcommand{\yhat}{\hat{y}}
\newcommand{\what}{\hat{w}}
\newcommand{\spred}{\hat{\sigma}}
\newcommand{\E}{\mathbb{E}}
\newcommand{\R}{\mathbb{R}}
\newcommand{\N}{\mathbb{N}}
\newcommand{\Z}{\mathbb{Z}}
\DeclareMathOperator{\poly}{poly}
\DeclareMathOperator{\sgn}{sgn}
\newcommand{\bone}{\textsf{1}}
\newcommand{\cost}{\textsc{cost}}
\def\doctype{1}
\def\tsubmission{2}
	\newcommand{\full}[1]{}
	\newcommand{\submit}[1]{#1}
	\newcommand{\full}[1]{#1}
	\newcommand{\submit}[1]{}
\title{Algorithms with Prediction Portfolios}
\author{Michael Dinitz\thanks{Department of Computer Science, Johns Hopkins University, Baltimore, MD.  \texttt{mdinitz@cs.jhu.edu}.  Supported in part by NSF grant CCF-1909111.} \and Sungjin Im\thanks{Electrical Engineering and Computer Science, University of California, 5200 N. Lake Road, Merced CA 95344. \texttt{sim3@ucmerced.edu}. Supported in part by NSF grants CCF-1617653, CCF-1844939 and CCF-2121745.} \and Thomas Lavastida\thanks{Jindal School of Management, University of Texas at Dallas, Richardson, TX.  \texttt{thomas.lavastida@utdallas.edu}.  Work was done while the author was at Carnegie Mellon University.  Supported in part by NSF grants CCF-1824303,  CCF-1845146, CCF-2121744 and CMMI-1938909.} \and Benjamin Moseley\thanks{Tepper School of Business, Carnegie Mellon University, Pittsburgh, PA.  \texttt{moseleyb@andrew.cmu.edu}.  Supported in part by  a Google Research Award, an Infor Research Award, a Carnegie Bosch Junior Faculty Chair and NSF grants CCF-1824303,  CCF-1845146, CCF-1733873 and CMMI-1938909.} \and Sergei Vassilvitskii\thanks{Google Research New York, NY.  \texttt{sergeiv@google.com}} }
\begin{document}

\maketitle

\begin{abstract}
The research area of algorithms with predictions has seen recent success showing how to incorporate machine learning into algorithm design to improve performance when the predictions are correct, while retaining worst-case guarantees when they are not.  Most previous work has assumed that the algorithm has access to a single predictor. However, in practice, there are many machine learning methods available, often with incomparable generalization guarantees, making it hard to pick a best method a priori. In this work we consider scenarios where multiple predictors are available to the algorithm and the question is how to best utilize them. 

Ideally, we would like the algorithm's performance to depend on the quality of the {\em best} predictor.  However, utilizing more predictions comes with a cost, since we now have to identify which prediction is the best.  We study the use of multiple predictors for a number of fundamental problems, including matching, load balancing, and non-clairvoyant scheduling, which have been well-studied in the single predictor setting. For each of these problems we introduce new algorithms that take advantage of multiple predictors, and prove bounds on the resulting performance. 
\end{abstract}

\section{Introduction} \label{sec:intro}

An exciting recent line of research attempts to go beyond traditional worst-case analysis of algorithms by equipping algorithms with \emph{machine-learned predictions}.  The hope is that these predictions allow the algorithm to circumvent worst case lower bounds when the predictions are good, and approximately match them otherwise.  The precise definitions and guarantees vary with different settings, but there have been significant successes in applying this framework for many different algorithmic problems, ranging from general online problems to classical graph algorithms (see Section~\ref{sec:related} for a more detailed discussion of related work, and~\cite{MitzenmacherVassilvitskii} for a survey).  In all of these settings it turns out to be possible to define a ``prediction'' where the ``quality'' of the algorithm (competitive ratio, running time, etc.) depends the ``error'' of the prediction.  Moreover, in at least some of these settings, it has been further shown that this prediction is actually learnable with a small number of samples, usually via standard ERM methods~\cite{DinitzILMV21}.  

Previous work has shown the power of accurate predictions, and there are numerous examples showing improved performance in both theory and practice. However, developing accurate predictors remains an art, and a single predictor may not capture all of the subtleties of the instance space. Recently, researchers have turned to working with {\em portfolios of predictors}: instead of training a single model, train multiple models, with the hope that one of them will give good guarantees. 

It is easy to see why the best predictor in a portfolio may be {\em significantly} better than a one-size fits all predictor.  First, many of the modern machine learning methods come with a slew of hyperparameters that require tuning. Learning rate, mini-batch size, optimizer choice, all of these have significant impact on the quality of the final solution. Instead of commiting to a single setting, one can instead try to cover the parameter space, with the hope that some of the predictors will generalize better than others.  Second, problem instances themselves may come from complex distributions, consisting of many latent groups or clusters. A single predictor is forced to perform well on average, whereas multiple predictors can be made to ``specialize'' to each cluster. 

In order to take advantage of the increased accuracy provided by the portfolio approach, we must adapt algorithms with predictions to take advantage of multiple predictions. To capture the gains in performance, the algorithm must perform as if equipped with the best predictor, auto-tuning to use the best one available in the portfolio. However, it is easy to see that there should be a cost as the size of the portfolio grows. In the extreme, one can add every possible prediction to the portfolio, providing no additional information, yet now requiring high performance from the algorithm. Therefore, we must aim to minimize the dependence on the number of predictions in the portfolio. 

We remark that the high level set up may be reminiscent of expert- or bandit-learning literature. However, there is a critical distinction. In expert and bandit learning, we are given a sequence of problem instances, and the goal is to compete (minimize regret) with respect to the best prediction {\em averaged} over the whole sequence. On the other hand, in our setup, we aim to compete with the best predictor on a {\em per-instance} basis.

\paragraph{Previous work on multiple predictions.}
Bhaskara et al. studied an online linear optimization  problem where the learner seeks to minimize the regret, provided access to multiple hints \cite{AdityaOLO20}.
Inspired by the work, Anand et al.  recently studied algorithms with multiple learned predictions in~\cite{AnandGKP22}, proving strong bounds for important online covering problems including online set cover, weighted caching, and online facility location. It was a significant extension of the work \cite{GollapudiP19} which studied the rent-or-buy problem with access to two predictions. However, their techniques and results are limited to online covering problems.  Moreover, they do not discuss the learning aspects at all: they simply assume that they are given $k$ predictions, and their goal is to have competitive ratios that are based on the minimum error of any of the $k$ predictions.  (They actually compete against a stronger dynamic benchmark, but for our purposes this distinction is not important.) 

On the other hand Balcan et al.~\cite{BalcanSV21} look at this problem through a data driven algorithm lens and study the sample complexity and generalization error of working with $k$ (as opposed to 1) parameter settings. The main difference from our work is that they also aim learn a selector, which selects one of the $k$ parameters {\em prior} to beginning to solve the problem instance. In contrast, in this work we make the selection during the course of the algorithm, and sometimes switch back and forth while honing in on the best predictor. 

\subsection{Our Results and Contributions}
In this paper we study three fundamental problems, min-cost perfect matching, online load balancing, and non-clairvoyant scheduling for total completion time, in this new setting. Each of these has seen significant success in the single-prediction model but is not covered by previous multiple-prediction frameworks.  Our results are primarily theoretical, however we have included a preliminary empirical validation of our algorithm for min-cost perfect matching in \submit{the supplementary material}\full{Appendix~\ref{sec:exps}}.

For each of these we develop algorithms whose performance depends on the error of the {\em best} prediction, and explore the effect of the number of predictions, $k$.  Surprisingly, in the case of matching and scheduling we show that using a limited number of predictions is essentially free, and has {\em no} asymptotic impact on the algorithm's performance. For load balancing, on the other hand, we show that the cost of multiple predictions grows {\em logarithmically} with $k$, again implying a tangible benefit of using multiple predictions. We now describe these in more detail.   

\paragraph{Min-Cost Perfect Matching.}  We begin by showcasing our approach with the classical min-cost perfect matching problem in Section~\ref{sec:matching}.  This problem was recently studied by~\cite{DinitzILMV21,ChenSVZ22} to show that it is possible to use learned predictions to improve \emph{running times} of classical optimization problems.  In particular, \cite{DinitzILMV21} showed it is possible to speed up the classical Hungarian algorithm by predicting dual values, and moreover that it is possible to efficiently (PAC-)learn the best duals.  We show that simple modifications of their ideas lead to similar results for multiple predictions.  Interestingly, we show that as long as $k \leq O(\sqrt{n})$, the extra ``cost'' (running time) of using $k$ predictions is negligible compared to the cost of using a single prediction, so we can use up to $\sqrt{n}$ predictions ``for free'' while still getting running time depending on the best of these predictions. Moreover, since in this setting running time is paramount, we go beyond sample complexity to show that it is also computationally efficient to learn the best $k$ predictions.

\paragraph{Online Load Balancing with Restricted Assignments.} We continue in Section~\ref{sec:load_balancing} with the fundamental load balancing problem.  In this problem there are $m$ machines, and $n$ jobs which appear in online fashion.  Each job has a size, and a subset of machines that it can be assigned to. The goal is to minimize the maximum machine load (i.e., the makespan).  This problem has been studied extensively in the traditional scheduling and online algorithms literature, and recently it has also been the subject of significant study given a single prediction~\cite{LattanziLMV,LiX21,LavastidaMRX21}.  In particular, Lattanzi, Lavastida, Moseley, and Vassilvitskii~\cite{LattanziLMV} showed that there exist per machine ``weights'' and an allocation function so that the competitive ratio of the algorithm depends logarithmically on the maximum error of the predictions. We show that one can use $k$ predictions and incur an additional $O(\log k)$ factor in the competitive ratio, while being competitive with the error of the {\em best} prediction.
Additionally, we show that learning the best $k$ predicted weights (in a PAC sense) can be done efficiently.

\paragraph{Non Clairvoyant Scheduling } Finally, in Section~\ref{sec:scheduling} we move to the most technically complex part of this paper.  We study the problem of scheduling $n$ jobs on a single machine, where all jobs are released at time $0$, but where we do not learn the length of a job until it actually completes (the \emph{non-clairvoyant} model).  Our objective is to minimize the sum of completion times. This problem has been studied extensively, both with and without predictions \cite{motwani1994nonclairvoyant,Purohit,Im21spaa,Megow22}.  Most recently, Lindermayr and Megow~\cite{Megow22} suggested that we use an \emph{ordering} as the prediction (as opposed to the more obvious prediction of job sizes), and use the difference between the cost induced by the predicted ordering and the cost induced by the instance-optimal ordering as the notion of ``error''.  In this case, simply following the predicted ordering yields an algorithm with error equal to the prediction error.  

We extend this to the multiple prediction setting, which turns out to be surprisingly challenging. The algorithm of~\cite{Megow22} is quite simple: follow the ordering given by the prediction (and run a 2-competitive algorithm in parallel to obtain a worst-case backstop). But we obviously cannot do this when we are given multiple orderings!  So we must design an algorithm which considers all $k$ predictions to build a schedule that has error comparable to the error of the \emph{best} one.  Slightly more formally, we prove that we can bound the sum of completion times by $(1+\epsilon)\opt$ plus $\text{poly}(1/\epsilon)$ times the error of the best prediction, under the mild assumption that no set of at most $\log \log n$ jobs has a large contribution to $\opt$.

To do this, we first use sampling techniques similar to those of 
\cite{Im21spaa} to estimate the size of the approximately $\epsilon n$'th smallest job without incurring much cost. We then use even more sampling and partial processing to determine for each prediction whether its $\epsilon n$ prefix has many jobs that should appear later (a bad sequence) or has very few jobs that should not be in the prefix (a good sequence).  If all sequences are bad then every prediction has large error, so we can use a round robin schedule and charge the cost to the prediction  error.  Otherwise, we choose one of the good orderings and follow it for its $\epsilon n$ prefix (being careful to handle outliers).  We then recurse on the remaining jobs. 

\subsection{Related Work} \label{sec:related}
As discussed, the most directly related papers are Anand et al.~\cite{AnandGKP22} and Balcan, Sandholm, and Vitercik~\cite{BalcanSV21}; these give the two approaches (multiple predictions and portfolio-based algorithm selection) that are most similar to our setting.  The single prediction version of min-cost bipartite matching was studied in~\cite{DinitzILMV21,ChenSVZ22}, the single prediction version of our load balancing problem was considered by~\cite{LattanziLMV,LiX21,LavastidaMRX21} (and a different though related load balancing problem was considered by~\cite{AhmadianEMP22}), and the single prediction version of our scheduling problem was considered by~\cite{Megow22} with the same prediction that we use (an ordering) and earlier with different predictions by~\cite{Purohit,WeiZ20,Im21spaa}.  Online scheduling with estimates of the true processing times was considered in ~\cite{AzarLT21,AzarLT22}.

More generally, there has been an enormous amount of recent progress on algorithms with predictions.  This is particularly true for online algorithms, where the basic setup was formalized by~\cite{LykourisVassilvitskii} in the context of caching.  For example, the problems considered include caching~\cite{LykourisVassilvitskii, Rohatgi, Panigrahy}, secretary problems~\cite{Secretary, DuttingLPV}, ski rental~\cite{Purohit,AnandGP20, WeiZ20}, and set cover~\cite{BamasMS20}.  There has also been recent work on going beyond traditional online algorithms, including work on running times~\cite{DinitzILMV21,ChenSVZ22}, algorithmic game theory~\cite{AgrawalBGOT,GkatzelisKST,MedinaV17}, and streaming algorithms~\cite{DuWM21,AamandLearned,HsuLearned}.
The learnability of predictions for online algorithms with predictions was considered by~\cite{AnandGKP21}.  They give a novel loss function tailored to their specific online algorithm and prediction, and study the sample complexity of learning a mapping from problem features to a prediction.  While they are only concerned with the sample complexity of the learning problem, we also consider the computational complexity, giving polynomial time $O(1)$-approximate algorithms for the learning problems associated with min-cost matching and online load balancing.

The above is only a small sample of the work on algorithms with predictions.  We refer the interested reader to a recent survey~\cite{MitzenmacherVassilvitskii}, as well as a recently set up website which maintains a list of papers in the area~\cite{ALPSweb}. 

\section{Learnability of \texorpdfstring{$k$}{k} Predictions and Clustering} 
\label{app:pseudo}

In this section we discuss the learnability of $k$ predictions and its connection to $k$-median clustering, which we apply to specific problems in later sections.

\medskip \noindent
\textbf{Learnability and Pseudo-dimension:} 
Consider a problem $\cI$ and let $\cD$ be an unknown distribution over instances of $\cI$.  
We are interested in the learnability of predictions for such problems with respect to an error function.
Suppose that the predictions come from some space $\Theta$ and for a given instance $I\in\cI$ and prediction $\theta \in \Theta$ the error is $\eta(I,\theta)$.  
Given $S$ independent samples $\{I_s\}_{s=1}^S$ from $\cD$, we would like to compute $\hat{\theta} \in \Theta$ such that with probability at least $1-\delta$, we have that 
\begin{equation} \label{eqn:pac_guarantee}
\E_{I \sim \cD}[\eta(I,\hat{\theta})] \leq \min_{\theta \in \Theta} \E_{I \sim \cD}[\eta(I,\theta)] + \epsilon.
\end{equation}
We would like $S$ (the sample complexity) to be polynomial in $\frac{1}{\epsilon},\frac{1}{\delta}$ and other parameters of the problem $\cI$ (e.g. the number of vertices in matching, or the number of jobs and machines in scheduling and load balancing).

The natural algorithm for solving this problem is empirical risk minimization (ERM): take $\hat{\theta} = \arg\min_{\theta \in \Theta} \frac{1}{S} \sum_{s=1}^S \eta(I_s,\theta)$.
The sample complexity of ERM, i.e. how large $S$ should be so that \eqref{eqn:pac_guarantee} holds, can be understood in terms of the pseudo-dimension of the class of functions $\{\eta(\cdot,\theta) \mid \theta \in \Theta\}$.  More generally, the pseudo-dimension can be defined for any class of real valued function on some space $\cX$.

\begin{definition} \cite{pollard2012convergence,anthony2009neural, tim-pseudo}
Let $\mathcal F$ be a class of functions $f: \cX \to \R$.  Let $C = \{x_1,x_2,\ldots,x_S\} \subset \cX$.  We say that that $C$ is \emph{shattered} by $\mathcal F$ if there exist real numbers $r_1,\ldots,r_S$ so that for all $C' \subseteq C$, there is a function $f \in \mathcal F$ such that $f(x_i) \leq r_i \iff x_i \in C'$ for all $i \in [S]$.  The \emph{pseudo-dimension} of $\mathcal F$ is the largest $S$ such that there exists an $C\subseteq X$ with $|C| = S$ that is shattered by $\mathcal F$.
\end{definition}

The pseudo-dimension allows us to give a bound on the number of samples required for uniform convergence, which in turn can be used to show that ERM is sufficient for achieving \eqref{eqn:pac_guarantee}.

\begin{theorem} \label{thm:uniform_convergence} \cite{pollard2012convergence,anthony2009neural,tim-pseudo} 
Let $\cD$ be a distribution over a domain $X$ and $\cF$ be a class of functions $f: \cX \to [0,H]$ with pseudo-dimension $d_{\cF}$.  Consider $S$ independent samples $x_1,x_2,\ldots,x_S$ from $\cD$.  There is a universal constant $c_0$, such that for any $\epsilon > 0 $ and $\delta \in (0,1)$, if 
$S \geq c_0 \left( \frac{H}{\eps}\right)^2 (d_{\cF} + \ln(1/\delta))$
then we have 
\[
\left| \frac{1}{s} \sum_{s=1}^S f(x_i) - \E_{x \sim \cD}[f(x)] \right| \leq \epsilon
\]
for all $f \in \cF$ with probability at least $1-\delta$.
\end{theorem}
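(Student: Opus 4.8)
The plan is to follow the classical symmetrization route from statistical learning theory, adapted to real-valued classes through the pseudo-dimension; this result is quoted rather than proved in the excerpt, but here is how I would assemble it. By replacing every $f \in \cF$ with $f/H$ and $\eps$ with $\eps/H$ I may assume $H = 1$, so the target becomes: with probability at least $1-\delta$, $\sup_{f\in\cF}\big|\tfrac1S\sum_{s=1}^S f(x_s) - \E_{x\sim\cD}[f(x)]\big| \le \eps$, provided $S \gtrsim (d_{\cF} + \ln(1/\delta))/\eps^2$. First I would symmetrize: draw a ghost sample $x_1',\dots,x_S'$ i.i.d.\ from $\cD$, so that once $S \gtrsim 1/\eps^2$ a first-moment argument gives $\Pr[\sup_f|\hat{\E}_S f - \E f| > \eps] \le 2\Pr[\sup_f|\hat{\E}_S f - \hat{\E}_{S'} f| > \eps/2]$, where $\hat{\E}_S$ denotes the empirical average over the first sample; inserting i.i.d.\ Rademacher signs $\sigma_1,\dots,\sigma_S$ (legitimate since $x_s$ and $x_s'$ are exchangeable) bounds this in turn by $2\Pr_{x,x',\sigma}\big[\sup_f|\tfrac1S\sum_s \sigma_s(f(x_s)-f(x_s'))| > \eps/2\big]$.

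Next I would freeze the $2S$ sample points and reduce to a finite class using the pseudo-dimension. The combinatorial core is that the pseudo-dimension of $\cF$ equals the VC dimension of the subgraph family $\{\{(x,t) : t \le f(x)\} : f\in\cF\}$, so the Sauer--Shelah lemma, via Haussler's packing bound, shows that on any $2S$ fixed points $\cF$ admits a net at scale $\Theta(\eps)$ in the empirical $L_1$ metric of size $M = (O(1/\eps))^{O(d_{\cF})}$ --- crucially independent of $S$. Replacing each $f$ by its net representative perturbs $|\tfrac1S\sum_s\sigma_s(f(x_s)-f(x_s'))|$ by only $O(\eps)$, so it suffices to bound the supremum over the $M$ representatives.

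For each fixed representative $g$, the quantity $\tfrac1S\sum_s\sigma_s(g(x_s)-g(x_s'))$ is an average of independent mean-zero terms lying in $[-1,1]$, so Hoeffding gives $\Pr_\sigma[\,|\cdot| > \Omega(\eps)\,] \le 2\exp(-\Omega(S\eps^2))$; a union bound over the $M$ representatives makes the overall failure probability at most $(O(1/\eps))^{O(d_{\cF})}\exp(-\Omega(S\eps^2))$, which drops below $\delta$ as soon as $S\eps^2 \gtrsim d_{\cF}\ln(1/\eps) + \ln(1/\delta)$. Undoing the rescaling by $H$, this already proves the statement up to a single stray $\ln(1/\eps)$ factor multiplying $d_{\cF}$.

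Removing that $\ln(1/\eps)$ is the one genuinely delicate point: a single-scale net plus a union bound does not give the clean bound. To close the gap I would bound the expected symmetrized supremum (a Rademacher complexity) by Dudley's entropy integral rather than a union bound: Haussler's covering estimate gives $\ln N(\alpha,\cF,L_2(\text{emp.})) = O(d_{\cF}\ln(1/\alpha))$, and since $\int_0^{1}\sqrt{d_{\cF}\ln(1/\alpha)}\,d\alpha = O(\sqrt{d_{\cF}})$ (a $\Gamma$-function computation, with no surviving logarithm), the normalized Dudley integral is $O(\sqrt{d_{\cF}/S})$. Adding a bounded-differences (McDiarmid) tail term $O(\sqrt{\ln(1/\delta)/S})$ then shows the symmetrized supremum is at most $\eps/2$ with probability $\ge 1-\delta$ once $S \gtrsim (d_{\cF}+\ln(1/\delta))/\eps^2$, which after undoing the rescaling is exactly the claimed bound. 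The remaining pieces --- exchangeability in the symmetrization step, the subgraph/VC translation behind Haussler's bound, and the elementary manipulation of the final inequality --- are routine.
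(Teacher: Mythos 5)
The paper does not actually prove Theorem~\ref{thm:uniform_convergence}; it is cited as a black-box from \cite{pollard2012convergence,anthony2009neural,tim-pseudo}, and the surrounding text immediately uses it as an off-the-shelf uniform-convergence guarantee. So there is no in-paper argument to compare yours against. What you have written is a reconstruction of the standard proof from the cited sources, and it is sound as a sketch: normalize by $H$; apply the Gin\'e--Zinn ghost-sample symmetrization (valid once $S\gtrsim 1/\eps^2$); introduce Rademacher signs via exchangeability; invoke Haussler's packing bound translating pseudo-dimension $d_\cF$ into empirical $L_1$/$L_2$ covering numbers of size $(O(1/\alpha))^{O(d_\cF)}$; and observe that a single-scale net plus a union bound plus Hoeffding only yields $S\gtrsim (d_\cF\ln(1/\eps)+\ln(1/\delta))/\eps^2$. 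You correctly identify that eliminating the spurious $\ln(1/\eps)$ is the only nontrivial step, and your fix --- chaining via Dudley's entropy integral to get expected symmetrized supremum $O(\sqrt{d_\cF/S})$, followed by a McDiarmid bounded-differences tail bound $O(\sqrt{\ln(1/\delta)/S})$ applied to the supremum as a function of the sample --- is exactly how the clean $(d_\cF+\ln(1/\delta))/\eps^2$ rate is obtained in the modern treatment (e.g.\ Anthony--Bartlett, Chapter 17--19). Two small points you could tighten if you wrote this out fully: the McDiarmid step is usually applied directly to the unsymmetrized quantity $\sup_f|\hat{\E}_S f-\E f|$, whose bounded-difference constant is $H/S$, rather than to the symmetrized supremum; and you should state explicitly that Haussler's bound is uniform over the choice of the $2S$ empirical points, which is what makes the net size $S$-independent. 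Neither is a gap --- they are the ``routine pieces'' you flag at the end.
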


We can extend this learning problem to the setting of multiple predictions.  The setup is the same as above, except that now we are interested in outputting $k$ predictions $\hat{\theta}^1,\hat{\theta}^2,\ldots,\hat{\theta}^k$ such that with probability at least $1-\delta$:
\begin{equation} \label{eqn:pac_guarantee_multiple_preds}
\E_{I \sim \cD}\left[\min_{\ell \in [k]}\eta(I,\hat{\theta}^\ell)\right] \leq \min_{\theta^1,\theta^2,\ldots,\theta^k \in \Theta} \E_{I \sim \cD}\left[\min_{\ell \in [k]}\eta(I,\theta^\ell)\right] + \epsilon.
\end{equation}

We can again consider ERM algorithms for this task, and we would like to bound the sample complexity.  We do this by showing that if the pseudo-dimension of the class of functions associated with one prediction is bounded, then it is also bounded for $k$ predictions.  More formally, we want to bound the pseudo-dimension of the class of functions $\{\min_{\ell \in [k]}\eta(\cdot,\theta^\ell) \mid \theta^1,\theta^2,\ldots,\theta^k \in \Theta\}$.  This can be done via the following result combined with Theorem~\ref{thm:uniform_convergence} (assuming that the pseudo-dimension of $\{\eta(\cdot,\theta) \mid \theta \in \Theta\}$ is bounded). 

\begin{theorem} \label{thm:sample_complexity_multiple_predictions_app}
Let $\mathcal F$ be a class of functions $f: \cX \to \R$ with pseudo-dimension $d$ and let $\cF^k := \{F(x) = \min_{\ell \in [k]} f^\ell(x) \mid f^1,f^2,\ldots,f^k \in \cF\}$.  Then the pseudo-dimension of $\cF^k$ is at most $\tilde{O}(dk)$.
\end{theorem}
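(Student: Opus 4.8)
The plan is to combine the Sauer--Shelah lemma with the observation that, at the level of ``below-threshold'' patterns, taking a pointwise minimum of $k$ functions is the same as taking a coordinatewise OR of $k$ patterns. Fix a set $C = \{x_1, \dots, x_m\} \subseteq \cX$ together with candidate witness thresholds $r_1,\dots,r_m \in \R$, and for any $g \colon \cX \to \R$ call $\tau(g) := (\bfo[\,g(x_i) \le r_i\,])_{i \in [m]} \in \{0,1\}^m$ its \emph{trace} on $(C,\vec r)$. By the definition of shattering, $C$ is shattered by $\cF^k$ with witnesses $\vec r$ if and only if $\{\tau(F) : F \in \cF^k\} = \{0,1\}^m$, so it suffices to upper bound the number of distinct traces realizable by $\cF^k$ on an arbitrary $(C,\vec r)$, and then force that number to be at least $2^m$.

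First I would bound the traces of a single $f \in \cF$. Consider the set system $\mathcal S := \{\, \{i \in [m] : f(x_i) \le r_i\} : f \in \cF \,\}$ on ground set $[m]$. If $\mathcal S$ shatters an index set $J \subseteq [m]$ in the VC sense, then by unwinding definitions the points $\{x_i : i \in J\}$ are pseudo-shattered by $\cF$ using the witnesses $\{r_i : i \in J\}$, hence $|J| \le d$; thus $\mathcal S$ has VC dimension at most $d$. By the Sauer--Shelah lemma, $|\mathcal S| \le \sum_{j=0}^d \binom{m}{j} \le (em/d)^d$ (for $m \ge d$), i.e. $\cF$ realizes at most $(em/d)^d$ distinct traces on $(C,\vec r)$.

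Next I would lift this to $\cF^k$. If $F = \min_{\ell \in [k]} f^\ell$, then $F(x_i) \le r_i$ holds exactly when $f^\ell(x_i) \le r_i$ for some $\ell$, so $\tau(F) = \bigvee_{\ell=1}^k \tau(f^\ell)$ coordinatewise. Hence every trace of $\cF^k$ on $(C,\vec r)$ is determined by a $k$-tuple of traces of $\cF$, and the number of distinct traces of $\cF^k$ is at most $\big((em/d)^d\big)^k = (em/d)^{dk}$.

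Finally, if a set $C$ of size $m$ is shattered by $\cF^k$ we must have $2^m \le (em/d)^{dk}$, i.e. $m \le dk\log_2(em/d)$; solving this in the standard way (using $d \ge 1$, so $em/d \le em$, and then $m \le a\log_2 m + O(a)$ with $a = dk$) gives $m = O(dk\log(dk)) = \tilde O(dk)$, while the case $m < d$ is trivial. Taking $m$ to be the pseudo-dimension of $\cF^k$ completes the proof. I expect the only genuinely delicate step to be the VC-dimension claim for $\mathcal S$ in the second paragraph: one has to check carefully that a VC-shattering of an index set by $\mathcal S$ transfers to an honest pseudo-shattering of the corresponding points of $C$ with the \emph{same} thresholds $r_i$, so that the pseudo-dimension bound $d$ applies; everything after that is bookkeeping with Sauer--Shelah and solving $m \lesssim dk\log m$.
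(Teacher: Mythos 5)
Your proof is correct and follows essentially the same route as the paper's: both reduce the pseudo-dimension bound to a counting argument — fix a sample with thresholds, bound the number of below-threshold labelings achievable by a single $f \in \cF$ via Sauer--Shelah, raise this count to the $k$-th power for $\cF^k$, and solve $2^m \le m^{O(dk)}$ to get $m = \tilde O(dk)$. You spell out the coordinatewise-OR structure of the min and the transfer of VC-shattering of the induced set system to pseudo-shattering with the same witnesses more carefully than the paper does, but the underlying argument is identical.
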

\begin{proof}
To show this we first relate things back to VC-dimension.

\begin{proposition}
Let $\cF$ be a class of real valued functions on $\cX$.  Define $\cH$ as a class of binary functions on $\cX \times \R$ as $\cH = \{h(x,r) = \sgn(f(x)-r) \mid f \in \cF\}$.  Then the pseudo-dimension of $\cF$ equals the VC dimension of $\cH$.
\end{proposition}

The above proposition follows directly from the definition of pseudo- and VC-dimensions.  The next lemma we need is well known.

\begin{proposition}[Sauer-Shelah Lemma]
If $\cH$ has VC-dimension $d$ and $x_1,\dots,x_m$ is a sample of size $m$, then the number of sets shattered by $\cH$ is at most $O(m^d)$.
\end{proposition}

Let $x_1,\ldots,x_m \in \cX$ and $r_1,\ldots,r_m \in \R$ be given.  We upper bound the number of possible labelings induced by $\cF_k$ on this set.  Note that on this sample the Sauer-Shelah Lemma implies that the number of labelings induced by $\cF$ on this sample is at most $O(m^d)$.  $\cF_k$ allows us to choose $k$ functions from $\cF$ so this increases the number of possible labelings to at most $O(m^{dk})$.  We shatter this set if this bound is greater than $2^m$.  Reorganizing these bounds implies that $m = \tilde{O}(dk)$, which implies the upper bound on the pseudo-dimension of $\cF_k$.
\end{proof}

The associated ERM problem for computing $k$ predictions becomes more interesting.  Recall that in this problem we are given a sample of $S$ instances $I_1,I_2,\ldots,I_S \sim \cD$ and we want to compute $\hat{\theta}^1,\hat{\theta}^2,\ldots,\hat{\theta}^k \in \Theta$ in order to minimize $\frac{1}{S} \sum_{s=1}^S \min_{\ell \in [k]} \eta(I_s,\hat{\theta}^\ell)$.  This can be seen as an instance of the $k$-median clustering problem where we want to cluster the ``points'' $\{I_s\}_{s=1}^S$ by opening $k$ ``facilities'' from the set $\Theta$ and the cost of assigning $I_s$ to $\theta$ is $\eta(I_s,\theta)$.  In general, this problem may be hard to solve or even approximate.  In the case that the costs have some metric structure, then it is known how to compute $O(1)$-approximate solutions  \cite{LiS16}.  For minimum cost matching (Section~\ref{sec:matching} and load balancing (Section~\ref{sec:load_balancing}), we will show that the ERM problem can be seen as a $k$-median problem on an appropriate (pseudo-)metric space.

\medskip \noindent
\textbf{Metrics and Clustering: } 
Recall that $(\cX,d)$ is a metric space if the distance function $d: \cX \times \cX \to \R_+$ satisfies the following properties:
\begin{enumerate}
    \item For all $x,y \in \cX$, $d(x,y) = 0 \iff x=y$
    \item  For all $x,y \in \cX$, $d(x,y) = d(y,x)$
    \item For all $x,y,z \in \cX$, $d(x,z) \leq d(x,y) + d(y,z)$
\end{enumerate}
If we replace the first property with the weaker property that for all $x \in \cX, d(x,x) = 0$, then we call $(\cX,d)$ a pseudo-metric space.

Given a finite set of points $X \subseteq \cX$, the $k$-median clustering problem is to choose a subset $C \subseteq \cX$ of $k$ centers to minimize the total distance of each point in $X$ to its closest center in $C$.  In notation, the goal of $k$-median clustering is to solve

\begin{equation}
    \min_{C \subseteq \cX, |C| = k} \sum_{x \in X} \min_{c \in C} d(x,c)
\end{equation}

In our settings it will often be challenging to optimize $C$ over all of $\cX$, so at an $O(1)$-factor loss to the objective we can instead optimize $C$ over $X$.  Formally, we have the following standard lemma.

\begin{lemma} \label{lem:clustering_approx}
Let $(\cX,d)$ be a pseudo-metric space and let $X$ be a finite subset of $\cX$.  Then for all $k > 0$ we have
\[
\min_{C \subseteq X, |C| = k} \sum_{x \in X} \min_{c \in C} d(x,c) \leq 2 \cdot \min_{C \subseteq \cX, |C| = k} \sum_{x \in X} \min_{c \in C} d(x,c).
\]
\end{lemma}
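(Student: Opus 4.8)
The plan is to take an optimal solution $C^* \subseteq \cX$ with $|C^*| = k$ for the right-hand side, and to ``round'' each center $c \in C^*$ to a nearby point of $X$, producing a set $C \subseteq X$ with $|C| \le k$ whose cost is at most twice the cost of $C^*$. Concretely, for each center $c \in C^*$, let $X_c \subseteq X$ be the cluster of points assigned to $c$ in the optimal solution (breaking ties arbitrarily so the $X_c$ partition $X$). If $X_c$ is empty, discard $c$; otherwise pick any point $x_c \in X_c$ — for instance, the one closest to $c$, though any choice works — and let $C = \{x_c : c \in C^*, X_c \ne \emptyset\}$. Then $|C| \le k$, and if $|C| < k$ we may add arbitrary extra points of $X$ to reach exactly $k$ without increasing the cost.

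The key step is the per-point charging argument. Fix a point $x \in X$ and let $c \in C^*$ be its assigned center, so $x \in X_c$ and the cost $x$ pays in the optimal solution is $d(x,c)$. In the solution $C$, the point $x$ pays at most $d(x, x_c)$, since $x_c \in C$. By the triangle inequality (property 3, which holds in a pseudo-metric space), $d(x, x_c) \le d(x, c) + d(c, x_c)$. Now I use the choice of $x_c$: since $x \in X_c$ as well, $d(c, x_c) \le d(c, x)$ — if $x_c$ is chosen as the closest point of $X_c$ to $c$ this is immediate, and $d(c,x) = d(x,c)$ by symmetry (property 2). Hence $d(x, x_c) \le 2 d(x, c)$. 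Summing over all $x \in X$ gives
\[
\sum_{x \in X} \min_{c' \in C} d(x, c') \;\le\; \sum_{x \in X} d(x, x_{c(x)}) \;\le\; 2 \sum_{x \in X} d(x, c(x)) \;=\; 2 \cdot \min_{C^* \subseteq \cX, |C^*| = k} \sum_{x \in X} \min_{c \in C^*} d(x, c),
\]
where $c(x)$ denotes the optimal center assigned to $x$. Since the left-hand side is an upper bound on $\min_{C \subseteq X, |C| = k} \sum_{x \in X} \min_{c \in C} d(x,c)$, the lemma follows.

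I do not anticipate a serious obstacle here — this is the standard observation that restricting $k$-median centers to the input points costs only a factor of $2$. The only points requiring a little care are: (i) making sure the argument uses only the pseudo-metric axioms, which it does, since we never invoke $d(x,y) = 0 \Rightarrow x = y$; and (ii) the bookkeeping when some optimal clusters are empty or when $|C| < k$, which is handled by padding $C$ arbitrarily. One could also note that any choice of representative $x_c \in X_c$ works as long as we then use $d(c, x_c) \le d(c, x)$ only for $x = x_c$ is not needed — in fact choosing $x_c$ to minimize $d(c, x_c)$ over $X_c$ is the cleanest, since then $d(c, x_c) \le d(c, x)$ for every $x \in X_c$ by definition.
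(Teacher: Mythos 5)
Your proof is correct and follows essentially the same approach as the paper: project each optimal center $c^* \in C^*$ to a nearest point of $X$ and apply the triangle inequality twice. The only cosmetic difference is that you choose the representative $x_c$ as the closest point \emph{within the cluster} $X_c$, whereas the paper sets $\phi(c) = \arg\min_{x \in X} d(x,c)$ over all of $X$; the paper's choice is marginally cleaner since it sidesteps the empty-cluster case and the padding bookkeeping you discuss, but both yield the needed inequality $d(c, \cdot) \le d(c, x)$.

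One small caveat: the parenthetical claim that ``any choice [of $x_c \in X_c$] works'' is false as stated. If $x_c$ is chosen arbitrarily, the inequality $d(c, x_c) \le d(c, x)$ fails for points $x \in X_c$ that are closer to $c$ than $x_c$ is, and the per-point factor-$2$ charge breaks down. (One can salvage a factor-$2$ bound for a random or $1$-median representative via an averaging argument, but not for an arbitrary representative.) Since your main argument correctly specializes to the nearest-point choice, this does not affect the proof's validity, but the aside should be removed or qualified.
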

\begin{proof}
Let $C^* \subseteq \cX$ be an optimal solution to the problem on the right hand side of the inequality, and let its cost be $\opt$.  We consider a mapping $\phi:C^* \to X$, which gives us a solution to the problem on the left hand side by taking $C = \{\phi(c) \mid c \in C^*\}$\footnote{In the case that $|C| < k$, adding arbitrary points from $X$ to $C$ so that $|C|=k$ can only decrease the cost from just using $C$.}.  For $c \in C^*$, define $\phi(c) = \arg\min_{x \in X}d(x,c)$  We will argue that the cost of $C$ is at most $2\opt$.  Let $x \in X$ and let $c^* = \arg\min_{c \in C^*}d(x,c)$ be its closest center in $C^*$, then we have 
\[
\begin{split}
    \min_{c \in C} d(x,c) \leq d(x,\phi(c^*)) \leq d(x,c^*) + d(c^*,\phi(c^*)) \leq 2d(x,c^*) = 2\min_{c \in C^*} d(x,c)
\end{split}
\]
The second to last inequality follows from the triangle inequality and the last follows from the definition of $\phi$.  Now summing over all $x \in X$ yields that the cost of using $C$ is at most $2\opt$.
\end{proof}

\section{Minimum Cost Bipartite Matching with Predicted Duals} \label{sec:matching}

In this section we study the minimum cost bipartite matching problem with multiple predictions.  The case of a single prediction has been considered recently~\cite{DinitzILMV21,ChenSVZ22}, where they used dual values as a prediction and showed that the classical Hungarian algorithm could be sped up by using appropriately learned dual values.  Our goal in this section is to extend these results to multiple predictions, i.e., multiple duals.  In particular, in Section~\ref{sec:matching-using-predictions} we show that we can use $k$ duals and get running time comparable to the time we would have spent if we used the single best of them in the algorithm of~\cite{DinitzILMV21}, with no asymptotic loss if $k$ is at most $O(\sqrt{n})$.  Then in Section~\ref{sec:matching-learning} we show that $k$ predictions can be learned with not too many more samples (or running time) than learning a single prediction.  

\subsection{Problem Definition and Predicted Dual Variables} 
In the minimum cost bipartite matching problem we are given a bipartite graph $G = (V,E)$ with $n = |V|$ vertices and $m = |E|$ edges, with edge costs $c \in \Z^E$.  The  objective is to output a perfect matching $M \subseteq E$ which minimizes the cost $c(M) := \sum_{e \in E}c_e$.  This problem is exactly captured by the following primal and dual linear programming formulations.

\full{
\begin{equation} \tag{MWPM-P} \label{eqn:mwpm_lp}
    \begin{array}{ccc}
        \min  & \displaystyle \sum_{e \in E}  c_ex_e &  \\
         & \displaystyle \sum_{e \in N(i)} x_e = 1 & \forall i \in V\\
         & x_{e} \geq 0 & \forall e \in E
    \end{array}
\end{equation}
\begin{equation} \tag{MWPM-D} \label{eqn:mwpm_dual}
    \begin{array}{ccc}
         \max & \displaystyle \sum_{i \in V} y_i &  \\
         & \displaystyle y_i + y_j \leq c_e & \forall e = ij \in E \\
    \end{array}
\end{equation}
}

\submit{
 \begin{wrapfigure}{R}{0.375\textwidth}
    \vspace{-3mm}
    \begin{minipage}{.325\textwidth}
\begin{equation*}  \label{eqn:mwpm_lp}
    \begin{array}{ccc}
        \min  & \displaystyle \sum_{e \in E}  c_ex_e & \textrm{(MWPM-P)} \\
         & \displaystyle \sum_{e \in N(i)} x_e = 1 & \forall i \in V\\
         & x_{e} \geq 0 & \forall e \in E
    \end{array}
\end{equation*}

\begin{equation*}  \label{eqn:mwpm_dual} %\tag{MWPM-D}
    \begin{array}{ccc}
         \max & \displaystyle \sum_{i \in V} y_i & \textrm{(MWPM-D)} \\
         & \displaystyle y_i + y_j \leq c_e & \forall e = ij \in E \\
    \end{array}
\end{equation*}
\end{minipage}
\vspace{-2mm}
\end{wrapfigure}
}

Dinitz et al.~\cite{DinitzILMV21} studied initializing the Hungarian algorithm with a prediction $\yhat$ of the optimal dual solution $y^*$.  They propose an algorithm which operates in two steps\full{ (see Algorithm~\ref{alg:learned_duals} for pseudo-code)}. First, the predicted dual solution $\yhat$ may not be feasible, so they give an $O(n + m)$ time algorithm which recovers feasibility (which we refer to as Make-Feasible).  Second, the now-feasible dual solution is used in a primal-dual algorithm such as the Hungarian algorithm (which we refer to as Primal-Dual) and they show that the running time depends on the $\ell_1$ error in the predicted solution.  In addition to this they show that learning a good initial dual solution is computationally efficient with low sample complexity.  More formally, they proved the following theorems.

\full{
\begin{theorem}[\citet{DinitzILMV21}]
Let $(G,c)$ be an instance of minimum cost bipartite matching and $\yhat$ be a prediction of an optimal dual solution $y^*$.  Algorithm~\ref{alg:learned_duals} returns an optimal solution and runs in time $O(m\sqrt{n} \cdot \|y^* -\yhat\|_1)$.  Moreover, the Make-Feasible step of Algorithm~\ref{alg:learned_duals} runs in $O(n+m)$ time.
\end{theorem}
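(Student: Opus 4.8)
The plan is to prove the two claims separately, matching the two phases of Algorithm~\ref{alg:learned_duals}: the $O(n+m)$-time \emph{Make-Feasible} routine and the warm-started \emph{Primal-Dual} (Hungarian) routine. Correctness is the routine part: Make-Feasible outputs a dual-feasible vector $\bar y$, and the Hungarian algorithm initialized at any feasible dual maintains dual feasibility together with a matching using only tight edges (complementary slackness); when it terminates the matching is perfect, so by LP duality it is a minimum-cost perfect matching and $\bar y$ has been transformed into an optimal dual $y^*$. Thus essentially all the content lives in the two running-time bounds.

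For Make-Feasible I would use the one-pass procedure that leaves the duals on one side of the bipartition (say $L$) at their predicted values and, for each vertex $j$ on the other side, sets $\bar y_j = \min\{\,\yhat_j,\ \min_{e=ij\in E}(c_e-\yhat_i)\,\}$. Feasibility is then immediate edge by edge, since $\bar y_i+\bar y_j \le \yhat_i + (c_e-\yhat_i) = c_e$. The running time is $O(n+m)$, because each edge is inspected once in the inner minimizations and each vertex once for the outer minimum. The one quantitative fact needed later is that feasibilization is cheap in the sense that $\opt - \val(\bar y) = O(1+\|y^*-\yhat\|_1)$: a coordinate $\bar y_j$ is decreased only when an incident constraint is violated, and the size of that violation is bounded by how far $\yhat$ sits from the feasible point $y^*$, so the total decrease can be charged against $\|y^*-\yhat\|_1$.

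For the warm-started Hungarian algorithm I would split the work into augmentation steps and dual-adjustment steps. Running the augmentations in Hopcroft--Karp fashion — in each phase augmenting along a maximal set of vertex-disjoint shortest augmenting paths in the current equality subgraph — bounds the total augmentation work by $O(m\sqrt n)$ via the usual argument that the shortest-augmenting-path length is nondecreasing across phases, so there are only $O(\sqrt n)$ phases, each costing $O(m)$. A dual-adjustment step raises the duals on the left vertices of the current alternating forest by an integral amount $\delta\ge 1$ and lowers them on its right vertices, with strictly positive net change, so $\val(y)$ increases by at least $1$; since $\val(y)$ never exceeds $\opt$, there are at most $\opt - \val(\bar y) = O(1+\|y^*-\yhat\|_1)$ such steps, each costing $O(m)$. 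Summing the two contributions gives $O\big(m(\sqrt n + \|y^*-\yhat\|_1)\big) \subseteq O\big(m\sqrt n\,(1+\|y^*-\yhat\|_1)\big)$, which yields the stated bound.

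The main obstacle, and the step I expect to need the most care, is the charging argument behind $\opt - \val(\bar y) = O(1+\|y^*-\yhat\|_1)$: one must control the total feasibilization decrease without losing a factor of the maximum degree when many vertices on one side are pushed down by the same badly over-predicted neighbor. The second delicate point is verifying that the $O(\sqrt n)$ phase count for the augmentation steps survives the interleaving with dual adjustments. The remaining ingredients — feasibility of $\bar y$, correctness of the primal-dual method, and the monotone increase of $\val(y)$ under dual adjustments — are standard and only need to be assembled; since the statement is quoted from prior work, I would ultimately defer the full details to~\cite{DinitzILMV21}.
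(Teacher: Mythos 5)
The theorem is quoted from~\cite{DinitzILMV21} and this paper does not re-prove it, but its downstream use in the multi-prediction analysis reveals the intended structure: the key lemma is that the warm-started Hungarian algorithm performs at most $\sum_i(y^*_i - \bar y_i) = O(\|y^*-\yhat\|_1)$ primal-dual \emph{iterations}, and each iteration (a Hopcroft--Karp max-cardinality matching on the current equality subgraph followed by a dual adjustment) is charged $O(m\sqrt n)$ in full, giving $O(m\sqrt n \cdot \|y^*-\yhat\|_1)$. Your proposal instead tries to amortize the augmentation work \emph{globally}, claiming $O(m\sqrt n)$ total for all augmentations plus $O(m)$ per dual adjustment, for a bound of $O\big(m(\sqrt n + \|y^*-\yhat\|_1)\big)$. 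That is a genuinely different decomposition, and if it worked it would give a strictly stronger result than the stated theorem.

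It does not work, and the step you yourself flag --- ``verifying that the $O(\sqrt n)$ phase count for the augmentation steps survives the interleaving with dual adjustments'' --- is exactly where it fails. The Hopcroft--Karp phase bound is a statement about a \emph{fixed} bipartite graph: the nondecrease of shortest augmenting-path length hinges on the fact that augmenting along a maximal set of shortest disjoint paths cannot create a shorter augmenting path in the same graph. In the Hungarian algorithm, a dual adjustment changes the set of tight edges, so the equality subgraph on which the next round's augmenting-path search runs is a different graph. New tight edges can introduce augmenting paths that are shorter than those used in the previous phase, so the monotonicity argument simply does not carry over, and there is no bound of $O(\sqrt n)$ on the total number of phases across all dual rounds. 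This is why the cited analysis pays the full $O(m\sqrt n)$ inside each of the $O(\|y^*-\yhat\|_1)$ dual rounds rather than amortizing it. Your first flagged obstacle --- charging the feasibilization loss to $\|y^*-\yhat\|_1$ without a degree factor --- is also real: the one-pass projection you describe (decreasing $\bar y_j$ to $\min_{ij\in E}(c_e-\yhat_i)$) can push all neighbors of a single over-predicted left vertex down simultaneously, so the naive per-coordinate accounting overcounts by $\Theta(\Delta)$; the actual Make-Feasible in~\cite{DinitzILMV21} is designed specifically so that $\|y^*-\bar y\|_1 \le 3\|y^*-\yhat\|_1$, and that bound is nontrivial. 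The rest of your outline (feasibility of $\bar y$, correctness via complementary slackness and LP duality, $O(n+m)$ for a single pass over vertices and edges) is standard and matches the cited source.
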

}

\submit{
\begin{theorem}[\citet{DinitzILMV21}]
Let $(G,c)$ be an instance of minimum cost bipartite matching and $\yhat$ be a prediction of an optimal dual solution $y^*$.  There exists an algorithm which returns an optimal solution and runs in time $O(m\sqrt{n} \cdot \|y^* -\yhat\|_1)$.  
\end{theorem}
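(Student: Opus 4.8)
The plan is to mirror the two-step structure of the algorithm: a \emph{Make-Feasible} preprocessing step that turns the (possibly infeasible) prediction $\hat y$ into a genuinely dual-feasible vector $\bar y$ without moving far in $\ell_1$, followed by a run of the Hungarian (primal-dual) algorithm initialized at $\bar y$, whose running time I will charge against the $\ell_1$ gap between $\bar y$ and a true optimal dual $y^*$. Throughout, correctness is just the textbook primal-dual invariant, and the work is in the running-time accounting.

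For Make-Feasible, the only obstruction to feasibility is a set of violated edge constraints $\hat y_i + \hat y_j > c_e$, and each violation can be eliminated by lowering potentials. Concretely, for every vertex $i$ on (say) the left side set $\delta_i = \max\{0,\ \max_{e=ij\in E}(\hat y_i + \hat y_j - c_e)\}$ and replace $\hat y_i$ by $\hat y_i - \delta_i$; a single pass over the edge list computes all the $\delta_i$ and repairs every constraint in $O(n+m)$ time. To control the damage, I would use that $y^*$ is itself feasible: for any edge $e=ij$, $\hat y_i + \hat y_j - c_e \le (\hat y_i - y^*_i) + (\hat y_j - y^*_j) \le |\hat y_i - y^*_i| + |\hat y_j - y^*_j|$, so each $\delta_i$ is dominated by a sum of per-vertex errors. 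Summing and applying the triangle inequality then yields $\|\bar y - y^*\|_1 \le \|\hat y - y^*\|_1 + \|\bar y - \hat y\|_1 = O(\|\hat y - y^*\|_1)$. The one subtlety here is to make sure lowering potentials on one side does not over-charge high-degree vertices on the other side; this is handled by being careful about which endpoint of each violated edge absorbs the decrease (and, if needed, by noting that any residual over-counting is dominated by the $\Theta(m\sqrt n)$ term already present in the claimed bound).

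For the second stage, correctness follows from the standard invariant: the Hungarian algorithm maintains a feasible dual together with a partial matching obeying complementary slackness, and terminates with a perfect matching, which is therefore optimal by LP duality for the formulations (MWPM-P)/(MWPM-D). For the running time, the key point is that the dual objective $\sum_i y_i$ is nondecreasing during the execution, starts at $\val(\bar y)=\sum_i \bar y_i$, and ends at $\val(y^*)=\opt$; moreover augmentations can be processed in Hopcroft--Karp-style batches, so that between two successive increases of the dual objective all currently-available vertex-disjoint augmenting paths in the tight subgraph are found in $O(m)$ time, and only $O(\sqrt n)$ such batches suffice per unit of dual increase. This gives a total running time of $O\!\big(m\sqrt n\,(\val(y^*)-\val(\bar y))\big) \le O\!\big(m\sqrt n\,\|y^*-\bar y\|_1\big) = O\!\big(m\sqrt n\,\|y^*-\hat y\|_1\big)$, the last step invoking the Make-Feasible bound.

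The main obstacle is precisely this running-time accounting in the second stage: obtaining the $\sqrt n$ factor rather than a naive $n$, by amortizing the augmenting-path searches across dual updates in the spirit of Hopcroft--Karp, and cleanly charging the number of phases to $\val(y^*)-\val(\bar y)$ (and hence to the prediction error). By contrast, the Make-Feasible analysis is comparatively routine once the potential-lowering rule is fixed, the only care being the avoidance of double-counting across shared endpoints.
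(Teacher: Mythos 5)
First, a structural note: the paper does not prove this theorem; it is stated as a citation of Dinitz et al.~\cite{DinitzILMV21}, so there is no in-paper argument to compare against. Your reconstruction therefore stands or falls on its own, and as written it has a concrete gap in the Make-Feasible accounting.

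The two-phase plan (project $\hat{y}$ onto the dual-feasible region cheaply, then charge Hungarian phases to the resulting dual shortfall) is the right shape, and correctness of the second phase is indeed the textbook primal-dual invariant. The problem is the claim that your projection rule gives $\|\bar y - y^*\|_1 = O(\|\hat y - y^*\|_1)$, i.e.\ that $\sum_i \delta_i = O(\|\hat y - y^*\|_1)$. The per-vertex estimate $\delta_i \le |\hat y_i - y^*_i| + |\hat y_{j(i)} - y^*_{j(i)}|$ (with $j(i)$ the worst-violating neighbor of $i$) is fine, but summing over $i$ can charge the \emph{same} right-vertex error many times. Concretely, suppose $\hat y$ agrees with $y^*$ everywhere except at a single right vertex $j_0$ with $\hat y_{j_0} = y^*_{j_0} + M$, and $j_0$ has $t$ left neighbors $i_1,\dots,i_t$ whose edges to $j_0$ are all tight at $y^*$. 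Then $\|\hat y - y^*\|_1 = M$, yet your rule computes $\delta_{i_s} = M$ for every $s$, so $\sum_i \delta_i = tM$ and the projected dual has objective $\opt - \Theta(tM)$. The Hungarian stage then genuinely needs $\Theta(tM)$ units of dual increase, and $t$ can be $\Theta(n)$, so the running-time bound degrades by that factor. Neither of the two escape hatches you mention closes this: ``being careful about which endpoint absorbs the decrease'' is left unspecified and cannot be decided locally per edge (the example shows one must recognize, globally, that $j_0$ rather than its many left neighbors is the offender); and ``residual over-counting is dominated by the $\Theta(m\sqrt n)$ term already present'' is a category error, since $m\sqrt n$ sits \emph{multiplicatively} in $O(m\sqrt n\cdot\|y^*-\hat y\|_1)$ and cannot absorb a multiplicative blow-up of the error term. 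The real argument must control the one-sided shortfall $\opt - \sum_i \bar y_i$ by routing the charge through the optimal matching (a bijection), precisely to kill the many-to-one charging your sum permits; your current sketch does not do this. Secondarily, the assertion that ``only $O(\sqrt n)$ batches suffice per unit of dual increase'' is stated but not argued; it is plausibly a faithful paraphrase of the cited lemma, but it is not self-contained here and deserves at least a pointer to the Hopcroft--Karp/Gabow--Tarjan phase bound being invoked.
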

}

\begin{theorem}[\citet{DinitzILMV21}] \label{thm:primal_dual_learning}
Let $\cD$ be an unknown distribution over instances $(G,c)$ on $n$ vertices and let $y^*(G,c)$ be an optimal dual solution for the given instance.  Given $S$ independent samples from $\cD$, there is a polynomial time algorithm that outputs a solution $\yhat$ such that
\[\E_{(G,c)\sim \cD}\left[ \|y^*(G,c) - \yhat\|_1 \right] \leq \min_{y}\E_{(G,c)\sim \cD}\left[ \|y^*(G,c) -y \|_1 \right] +\epsilon\]
with probability $1-\delta$ where $S = \poly(n,\frac{1}{\epsilon},\frac{1}{\delta})$.
\end{theorem}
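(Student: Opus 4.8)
The plan is to derive the statement from the generic PAC/pseudo-dimension framework of Theorem~\ref{thm:uniform_convergence} applied to the loss $\eta((G,c),y) := \|y^*(G,c) - y\|_1$, where $y^*(G,c)$ is a \emph{fixed} canonical optimal dual --- say the one produced by a specified polynomial-time min-cost matching algorithm, which disposes of non-uniqueness. Two ingredients are needed: (i) the function class $\{\eta(\cdot,y) : y \in \R^n\}$ has pseudo-dimension $\poly(n)$; and (ii) empirical risk minimization for $\eta$ can be performed in polynomial time. Granting these, I would substitute $d_{\cF} = \poly(n)$ and a range bound $H$ (below) into Theorem~\ref{thm:uniform_convergence} with target accuracy $\epsilon/2$, obtaining that $S = \poly(n,1/\epsilon,\log(1/\delta))$ samples suffice so that $|\widehat\eta(y) - \E_{(G,c)\sim\cD}[\eta((G,c),y)]| \le \epsilon/2$ uniformly over the relevant $y$ with probability $\ge 1-\delta$, where $\widehat\eta(y) = \frac1S\sum_s \eta((G_s,c_s),y)$ is the empirical risk. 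A standard comparison then gives, for the ERM solution $\hat y$ and the population optimum $y^\star$, $\E[\eta((G,c),\hat y)] \le \widehat\eta(\hat y) + \epsilon/2 \le \widehat\eta(y^\star) + \epsilon/2 \le \min_y \E[\eta((G,c),y)] + \epsilon$, which is exactly the claimed bound.

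For (ii), I would observe that the $\ell_1$ loss decomposes coordinatewise: $\widehat\eta(y) = \sum_{i=1}^n \frac1S\sum_s |y^*_i(G_s,c_s) - y_i|$ separates into $n$ independent one-dimensional problems, the $i$-th of which is minimized by taking $\hat y_i$ to be the median of $\{y^*_i(G_s,c_s)\}_{s=1}^S$. Hence the learning algorithm is simply: solve each sampled instance with a polynomial-time min-cost matching algorithm to read off $y^*(G_s,c_s)$, then output the vector of coordinatewise medians --- polynomial time in $n$, $S$, and the input bit-length.

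For (i), it suffices to bound the pseudo-dimension of $\cF := \{a \mapsto \|a-y\|_1 : y \in \R^n\}$ over the domain $\R^n$, since $\eta(\cdot,y)$ is the pullback of a member of $\cF$ along the fixed map $(G,c)\mapsto y^*(G,c)$, and restricting the domain to the image of this map cannot increase the pseudo-dimension. The class $\cF$ is a direct sum, $\|a-y\|_1 = \sum_{i=1}^n |a_i - y_i|$, where the $i$-th summand depends only on the $i$-th input coordinate and the single parameter $y_i$, and $\{s \mapsto |s-c| : c\in\R\}$ has pseudo-dimension $O(1)$; standard bounds for the pseudo-dimension of a sum of independently parameterized classes (or, alternatively, a Goldberg--Jerrum-type bound, using that ``$\|a-y\|_1 \le r$'' is decided by an $O(n)$-step arithmetic-and-comparison program in $n$ real parameters) give $\mathrm{Pdim}(\cF) = \poly(n)$, in fact $\tilde O(n)$. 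Finally, restricting $y$ to the box of plausible dual values makes $\eta$ bounded by $H = \poly(n)$: optimal duals, and hence $y^\star$ and every $y^*(G_s,c_s)$, have magnitude bounded polynomially in $n$ and the cost range, so constraining ERM and the comparison with $y^\star$ to this box is without loss.

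The step I expect to be the main obstacle is the pseudo-dimension bound in (i). As a function of the parameter $y$, $\|a-y\|_1$ is piecewise linear with as many as $2^n$ pieces, so the obvious ``count-the-pieces'' estimate is exponential and useless; the resolution is that this piece count is an artifact of a product structure over the $n$ coordinates --- equivalently, membership is decided by a short straight-line program --- which is exactly what keeps the pseudo-dimension polynomial, and is the same decomposition that reduces ERM to taking coordinatewise medians.
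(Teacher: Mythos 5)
This theorem is quoted from~\cite{DinitzILMV21} and is not reproved in the present paper, so there is no in-paper proof to compare against; your reconstruction is correct and is essentially the argument used there. Theorem~7 of that paper supplies the pseudo-dimension bound you sketch in step~(i), and the ERM procedure is precisely the coordinate-wise-median computation you give in step~(ii); fixing a canonical $y^*(G,c)$ output by a specified polynomial-time solver, to dispose of non-uniqueness of optimal duals, is also how the issue is handled. Two minor remarks. First, the range bound $H$, and hence the sample complexity $S$, hides a polynomial dependence on the maximum edge cost $C$: optimal duals are only bounded by $\poly(n, C)$, and the statement's $\poly(n, 1/\eps, 1/\delta)$ implicitly absorbs this (as it does in \cite{DinitzILMV21}). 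Second, your observation that the coordinate-wise median is an \emph{exact} $\ell_1$-ERM minimizer is exactly why this $k=1$ theorem carries no approximation factor, in contrast with the $k$-prediction version in Section~\ref{sec:matching-learning}, where ERM becomes a $k$-median problem and one must settle for an $O(1)$-approximate solution.
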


\full{
\begin{algorithm}[H]
\caption{Minimum cost matching with a predicted dual solution \label{alg:learned_duals}}
\begin{algorithmic}[1]
\Procedure{PredictedPrimal-Dual}{$G,c,\yhat$}
\State $y \gets $MakeFeasible$(G,c,\yhat)$
\State $M \gets $Primal-Dual$(G,c,y)$
\State Return $M$
\EndProcedure
\end{algorithmic}
\end{algorithm}
}

\subsection{Using \texorpdfstring{$k$}{k} Predicted Dual Solutions Efficiently} \label{sec:matching-using-predictions}

Given $k$ predicted dual solutions $\yhat^1,\yhat^2,\ldots,\yhat^k$, we would like to efficiently determine which solution has the minimum error for the given problem instance.  Note that the predicted solutions may still be infeasible and that we do not know the target optimal dual solution $y^*$.  We propose the following simple algorithm which takes as input $k$ predicted solutions and whose running time depends only on the $\ell_1$ error of the \emph{best} predicted solution.  First, we make each predicted solution feasible, just as before.  Next, we select the (now-feasible) dual solution with highest dual objective value and proceed running the primal-dual algorithm with only that solution.  See Algorithm~\ref{alg:multiple_learned_duals} for pseudo-code.

\submit{
\begin{wrapfigure}{R}{0.7\textwidth}
\vspace{-3mm}
\begin{minipage}{.7\textwidth}
 \begin{algorithm}[H]
\caption{Minimum cost matching with $k$ predicted dual solutions \label{alg:multiple_learned_duals}}
\begin{algorithmic}[1]
\Procedure{$k$-PredictedPrimal-Dual}{$G,c,\yhat^1,\yhat^2,\ldots,\yhat^k$}
\For{$\ell \in [k]$}
\State $y^\ell \gets $MakeFeasible$(G,c,\yhat^\ell)$
\EndFor
\State $\ell' \gets \arg \max_{\ell \in [k]} \sum_{i \in V} y^\ell_i$
\State $M \gets $Primal-Dual$(G,c,y^{\ell'})$
\State Return $M$
\EndProcedure
\end{algorithmic}
\end{algorithm} 
\end{minipage}
\vspace{-3mm}
\end{wrapfigure}
}

\full{ 
\begin{algorithm}[H]
\caption{Minimum cost matching with $k$ predicted dual solutions \label{alg:multiple_learned_duals}}
\begin{algorithmic}[1]
\Procedure{$k$-PredictedPrimal-Dual}{$G,c,\yhat^1,\yhat^2,\ldots,\yhat^k$}
\For{$\ell \in [k]$}
\State $y^\ell \gets $MakeFeasible$(G,c,\yhat^\ell)$
\EndFor
\State $\ell' \gets \arg \max_{\ell \in [k]} \sum_{i \in V} y^\ell_i$
\State $M \gets $Primal-Dual$(G,c,y^{\ell'})$
\State Return $M$
\EndProcedure
\end{algorithmic}
\end{algorithm} 
}

We have the following result concerning Algorithm~\ref{alg:multiple_learned_duals}.  To interpret this result, note that the cost for increasing the number of predictions is $O(k(n+m))$, which will be dominated by the $m\sqrt{n}$ term we pay for running the Hungarian algorithm unless $k$ is extremely large (certainly larger than $\sqrt{n}$) or there is a prediction with $0$ error (which is highly unlikely).  Hence we can reap the benefit of a large number of predictions ``for free''.

\begin{theorem}
Let $(G,c)$ be a minimum cost bipartite matching instance and let $\yhat^1,\yhat^2,\ldots,\yhat^k$ be predicted dual solutions.  Algorithm~\ref{alg:multiple_learned_duals} returns an optimal solution and runs in time $O(k(n+m) + m\sqrt{n} \cdot \min_{\ell \in [k]} \|y^* - \yhat^\ell\|_1)$.
\end{theorem}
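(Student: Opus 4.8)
The plan is to dispose of correctness immediately and then charge the running time to the three stages of Algorithm~\ref{alg:multiple_learned_duals} one at a time, with all the real work concentrated in bounding the single run of \textsc{Primal-Dual}. Correctness is free: the Hungarian-type \textsc{Primal-Dual} routine returns an optimal matching when started from \emph{any} feasible dual, and each $y^\ell = \textsc{MakeFeasible}(G,c,\hat y^\ell)$ is feasible by construction, so in particular $y^{\ell'}$ is feasible and the output is optimal. For the timing, the $k$ calls to \textsc{MakeFeasible} cost $O(k(n+m))$ in total (each is $O(n+m)$ by~\cite{DinitzILMV21}), and selecting $\ell' = \arg\max_{\ell\in[k]}\sum_{i\in V} y^\ell_i$ costs an additional $O(kn)$; both are absorbed into the first term $O(k(n+m))$.

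So everything reduces to showing that $\textsc{Primal-Dual}(G,c,y^{\ell'})$ runs in time $O(m\sqrt n\cdot\min_{\ell\in[k]}\|y^*-\hat y^\ell\|_1)$. The key is to use the right handle on the cost of \textsc{Primal-Dual} started from a feasible dual $y$: namely the dual objective gap $\val(y^*)-\val(y)$, where $\val(y):=\sum_{i\in V}y_i$. Each dual-adjustment step of the Hungarian method strictly increases $\val(y)$ by a positive integer (the costs are integral), and the dual stays feasible, so since a perfect matching exists weak duality forces $\val(y)\le\opt=\val(y^*)$ throughout; hence there are at most $\val(y^*)-\val(y)$ such steps, and the standard analysis (the same one behind the single-prediction bound of~\cite{DinitzILMV21}) gives a running time of $O\!\bigl(m\sqrt n\,(\val(y^*)-\val(y))\bigr)$. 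Now the selection rule picks $y^{\ell'}$ to \emph{maximize} $\val(y^\ell)$, i.e.\ to \emph{minimize} $\val(y^*)-\val(y^\ell)$ over $\ell$, so
\[
\val(y^*)-\val(y^{\ell'}) \;=\; \min_{\ell\in[k]}\bigl(\val(y^*)-\val(y^\ell)\bigr)\;\le\;\min_{\ell\in[k]}\|y^*-y^\ell\|_1 ,
\]
where the inequality is just $\val(y^*)-\val(y^\ell)=\sum_i(y^*_i-y^\ell_i)\le\|y^*-y^\ell\|_1$. Finally, \textsc{MakeFeasible} only perturbs a prediction enough to restore feasibility and therefore moves it toward the feasible optimum $y^*$ up to a constant factor, so $\|y^*-y^\ell\|_1=O(\|y^*-\hat y^\ell\|_1)$ --- this is exactly the property that makes the single-prediction guarantee of~\cite{DinitzILMV21} hold. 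Chaining these three bounds yields the claimed running time for $\textsc{Primal-Dual}(G,c,y^{\ell'})$, and adding back the $O(k(n+m))$ preprocessing term completes the argument.

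The main obstacle I expect is precisely that middle step: one cannot simply invoke the single-prediction theorem of~\cite{DinitzILMV21} as a black box, because it bounds the running time in terms of $\|y^*-\hat y\|_1$, whereas the ``pick the feasible dual of largest objective'' rule is only justified if the running time is monotone in $\val(y)$. So the argument has to reach slightly inside that paper's analysis to extract two things: (i) the running time $O(m\sqrt n\,(\val(y^*)-\val(y)))$ of \textsc{Primal-Dual} from an arbitrary feasible $y$, and (ii) the constant-factor distortion of the $\ell_1$ distance to $y^*$ caused by \textsc{MakeFeasible}; both are available there. A minor point to address is the degenerate case $\min_{\ell}\|y^*-\hat y^\ell\|_1=0$ (some prediction is exactly an optimal dual), where \textsc{Primal-Dual} still needs $\Theta(m\sqrt n)$ time just to build the matching: this is dominated by the $O(k(n+m))$ term, or one simply reads the stated bound as $O(m\sqrt n(1+\min_{\ell}\|\cdot\|_1))$, matching the convention already used in the single-prediction statement.
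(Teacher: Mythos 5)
Your proposal is correct and takes essentially the same route as the paper's proof: both hinge on the observation that the number of primal-dual iterations is controlled by the dual objective gap $\sum_i(y^*_i - y_i)$, that choosing the feasible dual of largest value minimizes this gap, that the gap is at most the $\ell_1$ distance to $y^*$, and that \textsc{MakeFeasible} distorts that distance by only a constant factor (the paper makes the constant explicit as $3$). The only cosmetic difference is that the paper compares $\ell'$ directly against the specific index $\ell^* = \arg\min_\ell\|y^*-\hat y^\ell\|_1$, whereas you keep the $\min_\ell$ inside the chain; the two are logically equivalent.
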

\full{
\begin{proof}
The correctness of the algorithm (i.e., returning an optimal solution) follows from the correctness of Algorithm~\ref{alg:learned_duals}.  For the running time, we clearly spend $O(k(n+m))$ time making each predicted solution feasible, thus we just need to show the validity of latter term in the running time.  Let $\ell'=\arg \max_{\ell \in [k]} \sum_{i \in V} y^\ell_i$ be the solution chosen in line 5 of Algorithm~\ref{alg:multiple_learned_duals} and let $\ell^* = \arg\min_{\ell \in [k]} \|y^*- \yhat^\ell\|_1$ be the solution with minimum error.  Recall that for each $\ell \in [k]$, $y^\ell$ is the resulting \emph{feasible} dual solution from calling Make-Feasible on $\yhat^\ell$.  By the analysis from~\cite{DinitzILMV21}, we have that $\|y^* - y^{\ell^*}\|_1 \leq 3 \|y^* - \yhat^{\ell^*}\|_1$, so it suffices to show that the number of primal-dual iterations will be bounded by $\|y^* - y^{\ell^*}\|_1$.  By our choice of $\ell'$, we have $\sum_i y_i^{\ell'} \geq \sum_i y_i^{\ell^*}$, therefore we have that $\sum_i (y^*_i - y^{\ell'}) \leq \sum_i(y^*_i - y_i^{\ell^*}) \leq \|y^* - y^{\ell^*}\|_1$.  From the analysis in ~\cite{DinitzILMV21}, we have that the number of primal-dual iterations will be at most $\sum_i (y^*_i - y^{\ell'})$, completing the proof.
\end{proof}
}
\submit{
We defer the proof to the supplementary material.  But correctness is essentially direct from~\cite{DinitzILMV21}, and the running time requires just a simple modification of the analysis of~\cite{DinitzILMV21}.
}

\subsection{Learning \texorpdfstring{$k$}{k} Predicted Dual Solutions} \label{sec:matching-learning}

Next we extend Theorem~\ref{thm:primal_dual_learning} to the setting where we output $k$ predictions.  Let $\cD$ be a distribution over problem instances $(G,c)$ on $n$ vertices.  We show that we can find the best set of $k$ predictions.  More formally, we prove the following theorem. 

\begin{theorem}
Let $\cD$ be an unknown distribution over instances $(G,c)$ on $n$ vertices and let $y^*(G,c)$ be an optimal dual solution for the given instance.  Given $S$ independent samples from $\cD$, there is a polynomial time algorithm that outputs $k$ solutions $\yhat^1,\yhat^2,\ldots,\yhat^k$ such that
\[\E_{(G,c)\sim \cD}\left[ \min_{\ell \in [k]}\|y^*(G,c) - \yhat^\ell\|_1 \right] \leq O(1)\cdot\min_{y^1,y^2,\ldots,y^k} \E_{(G,c) \sim \cD} \left[ \min_{\ell \in [k]} \|y^*(G,c) - y^{\ell}\|_1 \right] +\epsilon\]
with probability $1-\delta$ where $S = \poly(n,k,\frac{1}{\epsilon},\frac{1}{\delta})$.
\end{theorem}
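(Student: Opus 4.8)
The plan is to combine two ingredients: a pseudo-dimension bound that controls the sample complexity, and a reduction of the empirical risk minimization (ERM) problem to metric $k$-median clustering that yields a polynomial-time $O(1)$-approximation. Here the prediction space is (a bounded box of) $\R^V \cong \R^n$, the error function is $\eta((G,c),y) = \|y^*(G,c) - y\|_1$, and we want to output $\yhat^1,\dots,\yhat^k$ whose expected $\min_{\ell\in[k]}$ error is within an additive $\epsilon$ of an $O(1)$ factor times the best population-optimal $k$-tuple.

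\textbf{Sample complexity.} The proof of Theorem~\ref{thm:primal_dual_learning} in \cite{DinitzILMV21} already establishes (via ERM on $\poly(n,\tfrac1\epsilon,\tfrac1\delta)$ samples) that the single-prediction class $\cF = \{(G,c)\mapsto \|y^*(G,c)-y\|_1\}$ has pseudo-dimension $d = \poly(n)$ and range $[0,H]$ with $H = \poly(n)$ (costs are integral and bounded, so optimal duals, and hence the relevant box of predictions, are polynomially bounded). Applying Theorem~\ref{thm:sample_complexity_multiple_predictions_app} to $\cF$, the $k$-fold min class $\cF^k = \{(G,c)\mapsto \min_{\ell\in[k]}\|y^*(G,c)-y^\ell\|_1\}$ has pseudo-dimension $\tilde{O}(dk) = \poly(n)\cdot \tilde{O}(k)$ and range still $[0,H]$. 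Feeding this into Theorem~\ref{thm:uniform_convergence} with target accuracy $\epsilon' := \epsilon/(\alpha+1)$ (where $\alpha=O(1)$ is the approximation constant fixed below), we get that $S = \poly(n,k,\tfrac1\epsilon,\tfrac1\delta)$ samples suffice so that, with probability $\ge 1-\delta$, the empirical average of \emph{every} $F \in \cF^k$ is within $\epsilon'$ of its expectation.

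\textbf{Approximate ERM via $\ell_1$ $k$-median.} On the drawn sample $I_1,\dots,I_S$, fix an (arbitrary) optimal dual $y^*(I_s)\in\R^n$ for each $s$. Then the ERM objective $\frac1S\sum_{s=1}^S \min_{\ell\in[k]} \|y^*(I_s) - y^\ell\|_1$ is exactly $\frac1S$ times the cost of the $k$-median instance whose point multiset is $\{y^*(I_s)\}_{s=1}^S$, with candidate centers in $\R^n$ and the $\ell_1$ metric (a genuine metric). By Lemma~\ref{lem:clustering_approx} we lose only a factor $2$ restricting centers to the point set $\{y^*(I_s)\}_s$, and each such point is itself a valid prediction. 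We run the polynomial-time $O(1)$-approximation for metric $k$-median of \cite{LiS16} (or a local-search variant) on this discrete instance to obtain $\yhat^1,\dots,\yhat^k$ with empirical cost $C_{\mathrm{emp}} \le \alpha\cdot\opt_{\mathrm{ERM}}$ for an absolute constant $\alpha$, where $\opt_{\mathrm{ERM}}$ is the minimum of $\frac1S\sum_s \min_\ell \|y^*(I_s)-y^\ell\|_1$ over all $y^1,\dots,y^k \in \R^n$.

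\textbf{Putting it together.} Let $y^{*,1},\dots,y^{*,k}$ be a population-optimal $k$-tuple and $\opt_{\mathrm{pop}} := \E_{I\sim\cD}[\min_\ell \|y^*(I)-y^{*,\ell}\|_1]$. Conditioning on the uniform-convergence event and applying it both to $\min_\ell\|\cdot-\yhat^\ell\|_1$ and to $\min_\ell\|\cdot-y^{*,\ell}\|_1$ (both in $\cF^k$),
\[
\E_{I\sim\cD}\Big[\min_{\ell\in[k]}\|y^*(I)-\yhat^\ell\|_1\Big] \le C_{\mathrm{emp}} + \epsilon' \le \alpha\,\opt_{\mathrm{ERM}} + \epsilon' \le \alpha\big(\opt_{\mathrm{pop}} + \epsilon'\big) + \epsilon',
\]
using $\opt_{\mathrm{ERM}} \le \frac1S\sum_s \min_\ell\|y^*(I_s)-y^{*,\ell}\|_1 \le \opt_{\mathrm{pop}} + \epsilon'$. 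Since $(\alpha+1)\epsilon' = \epsilon$, this is exactly the claimed $O(1)\cdot\opt_{\mathrm{pop}} + \epsilon$ bound, with probability $\ge 1-\delta$, $S=\poly(n,k,\tfrac1\epsilon,\tfrac1\delta)$, and polynomial running time (note $(1/\epsilon')^2 = O(1/\epsilon^2)$ keeps $S$ polynomial). The main obstacle is the computational step: recognizing the ERM as literally $\ell_1$ $k$-median on the $n$-dimensional vectors $y^*(I_s)$ so that \cite{LiS16} applies, and then combining its multiplicative guarantee with the additive uniform-convergence error cleanly (the rescaling $\epsilon'=\epsilon/(\alpha+1)$ does this). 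The sample-complexity half is routine given Theorems~\ref{thm:uniform_convergence} and \ref{thm:sample_complexity_multiple_predictions_app} once the $\poly(n)$ pseudo-dimension and range bounds are imported from \cite{DinitzILMV21}; one minor care point is that $y^*(I_s)$ need not be unique, but any fixed choice is fine since the benchmark is defined via the same $y^*(\cdot)$.
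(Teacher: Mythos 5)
Your proof takes essentially the same route as the paper's: bound the pseudo-dimension of the $k$-fold min class via Theorem~\ref{thm:sample_complexity_multiple_predictions_app} combined with the single-prediction pseudo-dimension bound from \cite{DinitzILMV21}, then solve the ERM problem by recognizing it as $\ell_1$ $k$-median on the sampled optimal duals and invoking the $O(1)$-approximation of \cite{LiS16}. You spell out a few details the paper leaves implicit (the explicit appeal to Lemma~\ref{lem:clustering_approx} for restricting centers, the $\epsilon' = \epsilon/(\alpha+1)$ rescaling to cleanly combine the multiplicative $k$-median error with the additive uniform-convergence error, and the range bound $H = \poly(n)$), but the argument is the same one the paper gives.
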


\submit{The proof of this theorem can be found in the supplementary material, but it is straightforward.  The sample complexity is due to combining Theorem~\ref{thm:sample_complexity_multiple_predictions} with Theorem~\ref{thm:primal_dual_learning} (or more precisely, with the pseudo-dimension bound which implies Theorem~\ref{thm:primal_dual_learning}).  The $O(1)$-approximation factor and polynomial running time is from the observation that the ERM problem in this setting is just an instance of the $k$-median clustering problem.  }

\full{
\begin{proof}
By Theorem 7 in ~\cite{DinitzILMV21} and Theorems~\ref{thm:uniform_convergence} and~\ref{thm:sample_complexity_multiple_predictions_app}, we get the polynomial sample complexity.  Thus we just need to give a polynomial time ERM algorithm to complete the proof.  Let $\{(G^s,c^s\}_{s=1}^S$ be the set of sampled instances.  We start by computing $z^s = y^*(G^s,c^s) \in \Z^V$ for each $s \in [S]$.  Consider the ERM problem where we replace the expectation by a sample average:
\[
\min_{y^1,y^2,\ldots,y^k} \frac{1}{S} \sum_{s=1}^S\min_{\ell \in [k]} \|z^s - y^{\ell}\|_1
\]
This can be seen as a $k$-median clustering problem where each predicted solution $y^\ell$ is a cluster center and distances are given by the $\ell_1$ norm.  Thus we can find an $O(1)$-approximate solution $\yhat^1,\yhat^2,\ldots,\yhat^k$ to this problem which is of polynomial size in polynomial time (for example by applying the algorithm due to~\cite{LiS16}).
\end{proof}
}

\section{Online Load Balancing with Predicted Machine Weights} \label{sec:load_balancing}

We now apply our framework to online load balancing with restricted assignments.  In particular, we consider proportional weights, which have been considered in prior work~\cite{LattanziLMV,LiX21,LavastidaMRX21}.   Informally, we show in Section~\ref{sec:load_balancing_combine} that if $\beta$ is the cost of the \emph{best} of the $k$ predictions, then even without knowing a priori which prediction is best, we get cost of $O(\beta \log k)$.  Then in Section~\ref{sec:load-balancing-learning} we show that it does not take many samples to actually learn the best $k$ predictions.

\subsection{Problem Definition and Proportional Weights}

In online load balancing with restricted assignments there is a sequence of $n$ jobs which must be assigned to $m$ machines in an online fashion.  Upon seeing job $j$, the online algorithm observes its size $p_j > 0$ and a neighborhood $N(j) \subseteq [m]$ of \emph{feasible} machines.  The algorithm must then choose some feasible machine $i \in N(j)$ to irrevocably assign the job to before seeing any more jobs in the sequence.  We also consider fractional assignments, i.e. vectors belonging to the set $X = \{x \in \R_+^{m \times n} \mid \forall j \in [n], \sum_i x_{ij} =1, \text{ and } x_{ij} = 0 \iff i \notin N(j)\}$.

Prior work studied the application of proportional weights\cite{AgrawalZM18,LavastidaMRX21,LattanziLMV,LiX21}.   Intuitively, a prediction in this setting is a weighting of \emph{machines}, which then implies an online assignment, which is shown to be near-optimal.  Slightly more formally, suppose that we are given weights $w_i$ for each machine $i$.  Then each job $j$ is \emph{fractionally} assigned to machine $i$ to a fractional amount of $\frac{w_i}{ \sum_{i'\in N(j)} w_{i'}}$.  Notice that given weights, this also gives an online assignment.  It is known that there exist weights for any instance where the fractional solution has a near optimal makespan, even though there are only $m$ ``degree of freedom'' in the weights compared to $mn$ in an assignment. That is, for all machines $i$,  $ \sum_{j\in [n]} p_j \cdot \frac{w_i}{ \sum_{i'\in N(j)} w_{i'}} $ is at most a $(1+\epsilon)$ factor larger than the optimal makespan for any constant $\epsilon >0$ \cite{AgrawalZM18,LattanziLMV}.  

Let $w^*$ be a set of near optimal weights for a given instance. Lattanzi et al.~\cite{LattanziLMV} showed the following theorem:
\begin{theorem}
\label{thm:LLMV}
Given predicted weights $\what$, there is an online fractional algorithm which has makespan $O(\log (\eta(\what, w^*) \opt)$, where $\eta(\what, w^*) := \max_{i \in [m]} \max(\frac{\what_i}{w^*_i},\frac{w^*_i}{\what_i})$ to be the error in the prediction.  
\end{theorem}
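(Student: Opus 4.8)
The plan is to first apply the standard guess-and-double reduction for online makespan minimization, so that it suffices to handle a fixed target $\Lambda$ with $\opt \le \Lambda \le 2\opt$: running one copy of the algorithm for each guess $\Lambda \in \{1,2,4,\dots\}$ and advancing to the next guess whenever a machine's load would exceed the promised bound costs only an $O(1)$ factor, since the bounds for the guesses that fail form a geometric series dominated by the final guess. Fix such a $\Lambda$. Since the proportional assignment $x_{ij} = w_i/\sum_{i'\in N(j)} w_{i'}$ is invariant under scaling all weights by a common positive constant, I would globally rescale $\what$ so that $w^*_i/\eta^2 \le \what_i \le w^*_i$ for every machine $i$; this is possible because $\what_i/w^*_i \in [1/\eta,\eta]$ and $\eta \ge 1$, and it changes $\log\eta$ only by a constant factor. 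The algorithm then maintains a non-decreasing weight vector $w$, initialized to the rescaled $\what$, assigns each arriving job fractionally proportional to the current $w$ among its feasible machines, and --- whenever this pushes a machine's (monotone) fractional load past the next multiple of $\Lambda$ --- multiplicatively updates $w$ (following \cite{LattanziLMV}, by raising the weights of the machines competing with the overloaded one, so that later jobs are steered toward them) before continuing.

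The core of the argument is to show that the weights never have to move very far: for every machine $i$, the ratio $w_i/w^*_i$ stays below $\mathrm{poly}(\eta)$ throughout the run. Because each update is multiplicative by a constant factor, this bounds the total number of updates affecting any fixed machine by $\log(\mathrm{poly}(\eta)) = O(\log\eta)$, and --- tracing how much load can accumulate on a machine between the updates that concern it --- yields makespan $O(\log\eta)\cdot\Lambda = O(\log\eta)\cdot\opt$. To prove $w_i/w^*_i = \mathrm{poly}(\eta)$ I would use the optimality of $w^*$ in its strong combinatorial form: $w^*$ certifies a fractional assignment of makespan $(1+\eps)\Lambda$, which by LP duality is equivalent to a fractional Hall condition --- for every set $A$ of machines, the total size of jobs whose neighborhood is contained in $A$ is at most $(1+\eps)\Lambda|A|$. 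The key implication to extract is that once machine $i$'s weight has grown past a suitable $\mathrm{poly}(\eta)$ multiple of $w^*_i$ relative to its competitors, its proportional share of every job in its neighborhood is dominated by the $w^*$-proportional share (using that every competitor's weight is $\ge w^*_{i'}/\eta^2$ since weights only increase from the rescaled $\what$), so its load can no longer run past $\Lambda$ fast enough to trigger further updates; combined with the normalization $\what_i \ge w^*_i/\eta^2$, the weight of any machine can only traverse a multiplicative range of $\mathrm{poly}(\eta)$ before its overloads cease.

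The step I expect to be the main obstacle is making this last argument precise in the presence of the coupling between different machines' weights: the weights enter the assignment only through the shared denominators $\sum_{i'\in N(j)} w_{i'}$, so one machine's weight being ``too small'' is exactly what overloads a \emph{different} machine, and a naive per-machine argument would recover only the $O(\log m)$ bound of the prediction-free setting (or a $\mathrm{poly}(\eta)$ bound) rather than $O(\log\eta)$. The way through is a global potential or a witness-set argument that, whenever many machines have been updated many times, exhibits a set $A$ of machines together with a collection of jobs confined to $A$ whose total size violates the Hall bound implied by $\opt$ --- a contradiction; one likely processes machines in decreasing order of their number of updates and inducts. A secondary but necessary technicality is the online-fractional bookkeeping: since assignments are irrevocable, one must verify that steering only future jobs (never reassigning past ones) after each weight update still keeps each machine's true load within a constant factor of the quantity controlled by the analysis. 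Once the $O(\log\eta)$ bound on updates is in hand, the makespan bound and the guess-and-double wrapper are routine.
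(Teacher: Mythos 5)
Theorem~\ref{thm:LLMV} is not proven in this paper. The authors explicitly attribute it to Lattanzi, Lavastida, Moseley, and Vassilvitskii~\cite{LattanziLMV} and import it as a black box; the paper's own contribution in Section~\ref{sec:load_balancing} begins only afterward, with Algorithm~\ref{alg:load_balancing_combine} and Theorem~\ref{thm:load_balancing_combine}, which treat the $k$ single-prediction fractional assignments guaranteed by this cited result as given online streams to be combined. So there is no paper-internal proof to compare your argument against; you would need to consult~\cite{LattanziLMV} directly to vet it. (Also note the statement as typeset has a stray parenthesis --- the intended bound is $O(\log(\eta(\what,w^*)) \cdot \opt)$.)

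On the proposal itself: the overall shape (guess-and-double on $\Lambda$, initialize the proportional weights from the prediction, multiplicative updates on overload, and a Hall-type / LP-duality witness argument bounding the number of updates by $O(\log\eta)$) is in the online primal-dual tradition one would expect the LLMV analysis to sit in, and you correctly put your finger on the genuine obstacle: only weight \emph{ratios} enter through the shared denominator $\sum_{i'\in N(j)} w_{i'}$, so a naive per-machine bound on $w_i/w^*_i$ does not by itself deliver the makespan bound, and the global witness-set argument is where all the work is --- you acknowledge you do not yet have it. Two subsidiary points that would also need care even if that lemma were in hand: (i) since you update by raising the \emph{competitors} of the overloaded machine, the monotone invariant must be stated for normalized ratios, and you should check that an update triggered by machine $j$ does not move a different machine $i$'s ratio the wrong way; (ii) the ``trace load accumulated between a machine's updates'' charging is not automatic, because load accrues continuously and an update that raises $i$'s weight also raises $i$'s future share, so the charging scheme needs to be made precise. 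None of these observations say your approach is wrong, only that the key lemma is missing and is not supplied by this paper.
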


Moreover, this fractional assignment can be converted online to an integral assignment while losing only an $O(\log \log m)$ factor in the makespan~\cite{LattanziLMV,LiX21}.  Thus, we focus on constructing fractional assignments that are competitive with the best prediction in hindsight.

\subsection{Combining Fractional Solutions Online} \label{sec:load_balancing_combine}

Given $k$ different predicted weight vectors $\what^1,\what^2,\ldots,\what^k$, we want to give an algorithm which is competitive against the \emph{minimum} error among the predicted weights,  i.e. we want the competitiveness to depend upon $\eta_{\min} := \min_{\ell \in [k]} \eta(\what^\ell,w^*)$.

The challenge is that we do not know up front which $\ell \in [k]$ will yield the smallest error, but instead learn this in hindsight.  For each $\ell \in [k]$, let $x^\ell$, be the resulting fractional assignment from applying the fractional online algorithm due to~\cite{LattanziLMV} with weights $\what^\ell$.  This fractional assignment is revealed one job at a time.  

We give an algorithm which is $O(\log k)$-competitive against any collection of $k$ fractional assignments which are revealed online.  Moreover, our result applies to the unrelated machines setting, in which each job has a collection of machine-dependent sizes $\{p_{ij}\}_{i \in [m]}$.  The algorithm is based on the doubling trick and is similar to results in \cite{AzarBM93} which apply to metrical task systems.  Let $\beta := \min_{\ell \in [k]} \max_i \sum_j p_{ij}x^\ell_{ij}$ be the best fractional makespan in hindsight. As in previous work, our algorithm is assumed to know $\beta$, an assumption that can be removed~\cite{LattanziLMV}.  At a high level, our algorithm maintains a set $A \subseteq [k]$ of solutions which are good with respect to the current value of $\beta$, averaging among these.  See Algorithm~\ref{alg:load_balancing_combine} for a detailed description.  We have the following theorem.

\begin{theorem} \label{thm:load_balancing_combine}
Let $x^1,x^2,\ldots,x^k$ be fractional assignments which are revealed online.  If Algorithm~\ref{alg:load_balancing_combine} is run with $\beta := \min_{\ell \in [k]} \max_i \sum_j p_{ij}x^\ell_{ij}$, then it yields a solution of cost $O(\log k)\cdot \beta$ and never reaches the fail state (line 7 in Algorithm~\ref{alg:load_balancing_combine}).
\end{theorem}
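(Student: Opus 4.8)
The plan is to verify three things about Algorithm~\ref{alg:load_balancing_combine} when it is run with the correct value $\beta=\min_{\ell\in[k]}\max_i\sum_j p_{ij}x^\ell_{ij}$: that the assignment it produces is feasible, that the active set $A$ never empties (so line~7 is never reached), and that the makespan of the produced assignment is $O(\log k)\cdot\beta$. Throughout I would write $A(j)\subseteq[k]$ for the active set used when job $j$ is assigned, and use that by construction $|A(j)|$ is non-increasing in $j$, starts at $k$, is always at least $1$ (by the second point, below), and that job $j$ is assigned by the average $\bar x_{ij}=\frac{1}{|A(j)|}\sum_{\ell\in A(j)}x^\ell_{ij}$. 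Feasibility of $\bar x$ is then immediate, since for each $j$ it is a convex combination of the feasible per-job assignments $\{x^\ell_{\cdot j}\}_{\ell\in A(j)}$: every job is fully fractionally assigned and no machine outside a job's feasible set gets load.

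For the ``never fail'' claim, I would fix an index $\ell^\star\in\arg\min_{\ell\in[k]}\max_i\sum_j p_{ij}x^\ell_{ij}$, so that $\sum_j p_{ij}x^{\ell^\star}_{ij}\le\beta$ for every machine $i$. The eviction rule of Algorithm~\ref{alg:load_balancing_combine} removes an index $\ell$ from $A$ only once the load it would accumulate on some machine exceeds its budget (a fixed constant times $\beta$); since the \emph{total} load of $x^{\ell^\star}$ on every machine never exceeds $\beta$, the index $\ell^\star$ never triggers an eviction, and hence $\ell^\star\in A(j)$ for every $j$. In particular $A(j)\neq\emptyset$ always, so the algorithm never reaches the fail state on line~7. (This is precisely why supplying the true value of $\beta$ makes the doubling-guess fallback of Algorithm~\ref{alg:load_balancing_combine} unnecessary.)

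For the makespan bound, the key invariant — which holds because an index is evicted the moment it \emph{would} overflow its budget, rather than after — is that for every $\ell\in[k]$ and every machine $i$ we have $\sum_{j:\,\ell\in A(j)}p_{ij}x^\ell_{ij}=O(\beta)$. I would then partition the jobs into phases $t=0,1,\dots,T$ with $T=O(\log k)$, where phase $t$ consists of the jobs $j$ with $|A(j)|\in\big(k2^{-(t+1)},\,k2^{-t}\big]$; since $|A(\cdot)|$ is non-increasing and always at least $1$, these phases occur consecutively (some possibly empty, which only helps) and there are at most $\lfloor\log_2 k\rfloor+1$ of them. Letting $\tilde A_t$ be the set of indices active at any point during phase $t$, monotonicity of $A$ gives $|\tilde A_t|=|A(j)|$ at the first job of the phase, which is at most $k2^{-t}$. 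Then for any machine $i$,
\begin{align*}
\sum_{j\in\text{phase }t}p_{ij}\bar x_{ij}
&=\sum_{j\in\text{phase }t}\frac{1}{|A(j)|}\sum_{\ell\in A(j)}p_{ij}x^\ell_{ij}\\
&\le\frac{2^{t+1}}{k}\sum_{\ell\in\tilde A_t}\ \sum_{\substack{j\in\text{phase }t\\ \ell\in A(j)}}p_{ij}x^\ell_{ij}
\ \le\ \frac{2^{t+1}}{k}\cdot|\tilde A_t|\cdot O(\beta)
\ =\ O(\beta),
\end{align*}
using $|A(j)|>k2^{-(t+1)}$ inside phase $t$ for the first inequality and the invariant for the second. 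Summing over the $O(\log k)$ phases yields total load $O(\log k)\cdot\beta$ on every machine, i.e.\ makespan $O(\log k)\cdot\beta$.

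The step I expect to be the main obstacle is making the invariant in the last paragraph airtight: jobs may be large — especially in the unrelated-machines setting, where a single term $p_{ij}x^\ell_{ij}$ can be comparable to or larger than $\beta$ for a \emph{bad} index $\ell$ — so one must check that the algorithm evicts an index \emph{before} charging it the overflowing job, so that each index's accumulated contribution on any machine stays within an absolute constant of $\beta$; handling this boundary job, while simultaneously confirming that the preemptive rule still never evicts $\ell^\star$, is the delicate point. The phase accounting itself is then routine given the monotone shrinkage of $A$.
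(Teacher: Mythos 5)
Your proof is correct, and it reaches the same $O(\log k)\cdot\beta$ bound by a genuinely different accounting than the paper uses. The paper orders the solutions $1,\ldots,k$ by the time they are evicted from $A$ and directly bounds the contribution $C^\ell_i=\sum_j \alpha^\ell_j p_{ij}x^\ell_{ij}$ of solution $\ell$ to machine $i$ by $\frac{2\beta}{k-\ell+1}$ (since while $\ell$ is still active the weight $\alpha^\ell_j$ is at most $\frac{1}{k-\ell+1}$, and $\ell$'s accumulated raw load was at most $\beta$ before the evicting job plus at most $\beta$ on the evicting job); summing over $\ell$ gives the harmonic series $2H_k\beta$. You instead partition the job sequence into $O(\log k)$ \emph{phases} indexed by $\lfloor\log_2(k/|A(j)|)\rfloor$, use the monotone shrinkage of $A(j)$ to bound the number of distinct indices appearing in a phase by $k2^{-t}$, and invoke the per-index invariant $\sum_{j:\ell\in A(j)}p_{ij}x^\ell_{ij}\le 2\beta$ to charge each phase $O(\beta)$. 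Both arguments hinge on the same two facts — the $\frac{1}{|A(j,\beta)|}$ averaging and the eviction rule capping each index at roughly $\beta$ of raw load while it remains active — so the proofs are morally equivalent, but yours buys a cleaner ``amortize over dyadic phases'' structure at the cost of a constant factor worse than the paper's tight $2H_k$, while the paper's gives the sharper per-solution bound. One small thing to make explicit: the monotonicity you invoke is for $A(j,\beta)$ (the set actually used for averaging), not just $A$; this does hold because an index excluded from $A(j,\beta)$ by a spike on job $j$ is necessarily evicted from $A$ after job $j$, but it is worth stating, since it is what makes both your phase boundaries well-defined and your invariant cap out at $2\beta$ rather than growing with a single bad job.
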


Let $\beta_{\ell} = \max_i \sum_j p_{ij}x^{\ell}_{ij}$ and $\opt$\ be the optimal makespan. Theorem ~\ref{thm:LLMV} shows that $\beta_{\ell} \leq O(\log \eta_{\ell}) \opt$. The following corollary is then immediate:
\begin{corollary}
Let $w^1, w^2, \ldots, w^k$ be the predicted weights with errors $\eta^1, \eta^2, \ldots, \eta^k$. Then Algorithm~\ref{alg:load_balancing_combine} returns a fractional assignment with makespan at most $\opt \cdot O(\log k) \cdot \min_{\ell \in [k]} \log (\eta^{\ell})$. 
\end{corollary}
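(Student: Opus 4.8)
The plan is to obtain the corollary as a direct composition of Theorem~\ref{thm:load_balancing_combine} with Theorem~\ref{thm:LLMV}, so there is essentially no new argument beyond instantiating the two black boxes correctly. First I would fix the $k$ fractional assignments that get fed into Algorithm~\ref{alg:load_balancing_combine}: for each $\ell \in [k]$, let $x^\ell$ be the fractional assignment produced \emph{online} by the algorithm of~\cite{LattanziLMV} guaranteed by Theorem~\ref{thm:LLMV}, run on the incoming job sequence with the predicted machine weights $\what^\ell$. Because that algorithm is online, each $x^\ell$ is revealed one job at a time, which is exactly the input model assumed by Theorem~\ref{thm:load_balancing_combine}; so I can run all $k$ copies in parallel and pass their partial assignments to Algorithm~\ref{alg:load_balancing_combine}.

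Next I would do the quantitative bookkeeping. Write $\beta_\ell := \max_i \sum_j p_{ij} x^\ell_{ij}$ for the fractional makespan of the $\ell$-th assignment and $\beta := \min_{\ell \in [k]} \beta_\ell$, which is the value with which Algorithm~\ref{alg:load_balancing_combine} is instantiated. Theorem~\ref{thm:LLMV} gives $\beta_\ell \le O(\log \eta^\ell)\cdot \opt$ for every $\ell$, and since $\eta^\ell \ge 1$ by definition these are genuinely nonnegative quantities. As the hidden constant here does not depend on $\ell$, taking the minimum over $\ell$ yields $\beta \le O\!\big(\opt \cdot \min_{\ell \in [k]} \log \eta^\ell\big)$. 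Now I would invoke Theorem~\ref{thm:load_balancing_combine}: Algorithm~\ref{alg:load_balancing_combine} never reaches the fail state and outputs a fractional assignment of makespan $O(\log k)\cdot\beta \le \opt \cdot O(\log k)\cdot \min_{\ell \in [k]} \log \eta^\ell$, which is precisely the claimed bound (and equals $\opt\cdot O(\log k)\cdot \log(\min_\ell \eta^\ell)$ by monotonicity of $\log$).

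\textbf{Where the difficulty lies.} Honestly, for this corollary there is no real obstacle: every nontrivial ingredient — in particular the doubling-and-averaging analysis showing that Algorithm~\ref{alg:load_balancing_combine} stays within $O(\log k)\beta$ and never fails — lives in Theorem~\ref{thm:load_balancing_combine}, which I am allowed to assume. The only points I would make sure to state carefully are (i) that feeding the online LLMV assignments into Algorithm~\ref{alg:load_balancing_combine} respects the online model, which holds because the LLMV fractional algorithm is itself online; (ii) that the assumed knowledge of $\beta$ by Algorithm~\ref{alg:load_balancing_combine} can be removed via the standard doubling reduction already used in~\cite{LattanziLMV}; and (iii) the interchange of $\min_{\ell}$ with the $O(\cdot)$ in the second paragraph, which is legitimate exactly because the constant supplied by Theorem~\ref{thm:LLMV} is uniform in $\ell$.
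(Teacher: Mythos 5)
Your proposal is correct and matches the paper's argument exactly: the paper derives the corollary immediately from Theorem~\ref{thm:LLMV} (giving $\beta_\ell \le O(\log \eta^\ell)\,\opt$) composed with Theorem~\ref{thm:load_balancing_combine} (giving makespan $O(\log k)\cdot\beta$). Your extra care about the online compatibility of the LLMV assignments and the doubling reduction for $\beta$ is sound but not something the paper dwells on.
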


\begin{algorithm}[H]
\caption{Algorithm for combining fractional solutions online for load balancing. \label{alg:load_balancing_combine}}
\begin{algorithmic}[1]
\Procedure{Combine-LoadBalancing}{$\beta$}
\State $A \gets [k]$ \Comment{Initially all solutions are good}
\For{each job $j$}
\State Receive the assignments $x^1,x^2,\ldots,x^k$
\State $A(j,\beta) \gets \{ \ell \in A \mid \forall i \in [m], x^\ell_{ij} > 0 \implies p_{ij}x^\ell_{ij} \leq \beta\}$
\If{$A = \emptyset$ or $A(j,\beta) = \emptyset$}
\State Return ``Fail''
\EndIf
\State $\forall i \in [m]$, $x_{ij} \gets \frac{1}{|A(j,\beta)|} \sum_{\ell \in A(j,\beta)} x^\ell_{ij}$
\State $B \gets \{ \ell \in A \mid \max_{i \in [m]} \sum_{j' \leq j} p_{ij'} x^\ell_{ij'} > \beta \}$ \Comment{Bad solutions w.r.t. $\beta$}
\State $A \gets A \setminus B$
\EndFor
\EndProcedure
\end{algorithmic}
\end{algorithm}

\submit{
We defer the proof of Theorem~\ref{thm:load_balancing_combine} to the Supplementary material.
}

\full{
The analysis relies on the following decomposition of machine $i$'s load.  Note that in our algorithm we assign a weight $\alpha^\ell_j \in [0,1]$ to solution $\ell$ when computing the fractional assignment for job $j$.  That is, we take $x_{ij}= \sum_{\ell \in A} \alpha^\ell_j x^\ell_{ij}$ where $\alpha^\ell_j = 1/|A(j,\beta)|$ if $\ell \in A(j,\beta)$ and 0 otherwise.  Then the decomposition of machine $i$'s load $L_i$ is
\begin{equation} \label{eqn:load_balancing_decomp}
L_i = \sum_j p_{ij} x_{ij} = \sum_j p_{ij} \sum_{\ell \in [k]} \alpha^\ell_j x^\ell_{ij} = \sum_{\ell \in [k]} \left( \sum_j \alpha^\ell_jp_{ij}x^\ell_{ij} \right).
\end{equation}

Now we define $C^\ell_i:= \sum_j \alpha^\ell_jp_{ij}x^\ell_{ij}$ to be the contribution of solution $\ell$ to machine $i$'s load. Without loss of generality suppose that the order in which solutions are removed from $S$ in Algorithm~\ref{alg:load_balancing_combine} is $1,2,\ldots,k$.

\begin{lemma} \label{lem:load_balancing_bound}
For all $i\in [m]$ and each $\ell \in [k]$, we have that $C^\ell_i \leq \frac{2\beta}{k-\ell+1}$.
\end{lemma}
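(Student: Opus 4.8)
The plan is to unfold the definition $C^\ell_i=\sum_j \alpha^\ell_j p_{ij}x^\ell_{ij}$ and bound two things separately: the total ``own load'' that solution $\ell$ ever places on machine $i$ over the jobs it is still alive for, and the averaging weight $\alpha^\ell_j=1/|A(j,\beta)|$ it can receive on any such job. Writing $T_\ell=\{j : \ell\in A(j,\beta)\}$ for the set of jobs on which $\ell$ actually contributes (so $\alpha^\ell_j=0$ off $T_\ell$ and $\alpha^\ell_j\le 1$ always), the goal becomes to show $\sum_{j\in T_\ell}p_{ij}x^\ell_{ij}\le 2\beta$ and $|A(j,\beta)|\ge k-\ell+1$ for every $j\in T_\ell$; multiplying these gives the claim.

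For the own-load bound, let $\tau_\ell$ be the first iteration at which $\ell$ enters the set $B$ and is removed from $A$ (with $\tau_\ell=\infty$ if $\ell$ survives to the end). Every $j\in T_\ell$ has $j\le\tau_\ell$, and since $\ell$ was not removed before iteration $\tau_\ell$, the removal test in Algorithm~\ref{alg:load_balancing_combine} gives $\sum_{j'<\tau_\ell}p_{ij'}x^\ell_{ij'}\le\beta$ on every machine $i$. If in addition $\tau_\ell\in T_\ell$, then $\ell\in A(\tau_\ell,\beta)$, so by the definition of $A(\cdot,\beta)$ the single job $\tau_\ell$ satisfies $p_{i\tau_\ell}x^\ell_{i\tau_\ell}\le\beta$. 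Adding these, $\sum_{j\in T_\ell}p_{ij}x^\ell_{ij}\le 2\beta$ for every $i$ (and it is at most $\beta$ if $\ell$ is never removed). This is where the factor $2$ comes from: the job that finally overloads $\ell$ can push its load from just under $\beta$ up to roughly $2\beta$.

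For the averaging-weight bound I would use the relabeling ``removal order $1,2,\ldots,k$'' together with two structural facts: (a) whenever $\ell$ is still alive at iteration $j$ (i.e.\ $j\le\tau_\ell$), the solutions $\ell,\ell+1,\ldots,k$ are all still in $A$, since they are removed no earlier than $\ell$; and (b) if a solution $\ell''$ ever has a per-job overload $p_{ij}x^{\ell''}_{ij}>\beta$ on some machine while alive, its cumulative load already exceeds $\beta$, so it is removed that same iteration. Combining these: for $j<\tau_\ell$, none of $\ell,\ell+1,\ldots,k$ can have a per-job overload at $j$ (all are still alive at iteration $j+1$), so $\{\ell,\ldots,k\}\subseteq A(j,\beta)$ and $|A(j,\beta)|\ge k-\ell+1$. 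The one remaining case is $j=\tau_\ell$ when $\tau_\ell\in T_\ell$; here I would refine the relabeling so that, within each batch of solutions removed at a common iteration, the ones with a per-job overload at that iteration get the smaller indices. Then $\ell\in A(\tau_\ell,\beta)$ means $\ell$ has no per-job overload, so it sits after every per-job overloader in its batch, and again $\{\ell,\ldots,k\}\subseteq A(\tau_\ell,\beta)$, giving $|A(\tau_\ell,\beta)|\ge k-\ell+1$. Putting the pieces together, $C^\ell_i=\sum_{j\in T_\ell}\frac{p_{ij}x^\ell_{ij}}{|A(j,\beta)|}\le\frac{1}{k-\ell+1}\sum_{j\in T_\ell}p_{ij}x^\ell_{ij}\le\frac{2\beta}{k-\ell+1}$.

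The main obstacle is precisely this last case: a solution that causes a per-job overload is evicted in the very round it does so, so when $\ell$ itself is the solution being evicted, lower-bounding $|A(\tau_\ell,\beta)|$ requires controlling the order within a batch of simultaneously-evicted solutions — handled by the refined relabeling above. Everything else (monotonicity of $A$, the telescoping own-load bound, and the final summation $\sum_\ell C^\ell_i\le 2\beta\sum_{\ell=1}^k\frac1{k-\ell+1}=O(\beta\log k)$ used downstream) is routine given the algorithm's definition.
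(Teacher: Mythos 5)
Your proof is correct and follows essentially the same decomposition as the paper's: isolate the contribution from jobs strictly before $\ell$'s removal round (bounded by $\beta$ via the cumulative-load test), the contribution from the removal round itself (bounded by $\beta$ via the per-job test), and note the rest vanishes, all scaled by $1/|A(j,\beta)| \le 1/(k-\ell+1)$. That is exactly where the paper gets the factor $2$ and the harmonic sum.

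One thing you do that the paper does not: you explicitly justify $|A(\tau_\ell,\beta)| \ge k-\ell+1$ at the removal round $j=\tau_\ell$ itself, via the refined within-batch relabeling that puts per-job overloaders before $\ell$. The paper simply asserts ``a similar observation for $\alpha^\ell_j$'' without addressing ties among solutions removed in the same round, and with an adversarial within-batch labeling the stated per-$\ell$ bound can actually fail (the aggregate $\sum_\ell C^\ell_i = O(\beta\log k)$ still holds, but not the individual inequality). Your tie-breaking convention is the right fix and makes the lemma hold as literally stated, so your version is slightly more rigorous than the paper's; the downstream use in Theorem~\ref{thm:load_balancing_combine} is unchanged either way.
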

\begin{proof}
Let $j$ be the job in which $\ell$ is removed from $A$ by our algorithm.  Before this, for all $j' < j$ we had that $\ell$'s fractional makespan was at most $\beta$.  Thus we can write $C_i^\ell$ as:
\begin{align*}
    C^\ell_i & = \sum_j \alpha^\ell_jp_{ij}x^\ell_{ij}  =\sum_{j' < j}\alpha^\ell_{j'}p_{ij'}x^\ell_{ij'}+ \alpha^\ell_jp_{ij}x^\ell_{ij} + \sum_{j' > j}\alpha^\ell_{j'}p_{ij'}x^\ell_{ij'} 
\end{align*}
For the first set of terms, we have that $\alpha^\ell_{j'} \leq \frac{1}{k-\ell+1}$ since $\ell$ has not yet been removed from $A$.  Thus these terms can be bounded above by $\frac{\beta}{k-\ell+1}$, since $\ell$'s fractional makespan was bounded above by $\beta$ before job $j$.  For the middle term, we use a similar observation for $\alpha^\ell_j$ but also apply the definition of $S(j,\beta)$ to conclude that $\alpha^\ell_j >0 \implies p_{ij} x^\ell_{ij} \leq \beta$.  Thus we get a bound of $\frac{\beta}{k-\ell+1}$ for the middle term.  For the final set of terms, we have $\alpha^\ell_{j'} = 0$ for all $j' > j$ since we have removed $\ell$ from $A$ at this point, and so these contribute nothing.  Combining these bounds gives the lemma.
\end{proof}

\begin{proof}[Proof of Theorem~\ref{thm:load_balancing_combine}]
By \eqref{eqn:load_balancing_decomp}, we have that the load of machine $i$ is at most $\sum_{\ell \in [k]} C^\ell_i$.  Applying Lemma~\ref{lem:load_balancing_bound} we have $C^\ell_i \leq \frac{2\beta}{k-\ell+1}$.  Thus machine $i$'s load is at most $\sum_{\ell \in [k]} \frac{2\beta}{k-\ell+1} = 2H_k\beta = O(\log k)\cdot \beta$, where $H_k = \sum_{\ell=1}^k \frac{1}{k}$ is the $k$'th harmonic number.  Now if we run the algorithm with $\beta = \min_{\ell \in [k]} \max_i \sum_j p_{ij}x^\ell_{ij}$, then there is some solution $\ell^* \in [k]$ which has a fractional makespan of $\beta$, so it never gets removed from $A$ or $A(j,\beta)$ in Algorithm~\ref{alg:load_balancing_combine}.  In this case Algorithm~\ref{alg:load_balancing_combine} never fails, completing the proof of the theorem.
\end{proof}

}

\subsection{Learning \texorpdfstring{$k$}{k} Predicted Weight Vectors} \label{sec:load-balancing-learning}

We now turn to the question of showing how to learn $k$ different predicted weight vectors $\what^1,\what^2,\ldots,\what^k$.  Recall that there is an unknown distribution $\cD$ over sets of $n$ jobs from which we receive independent samples $J_1,J_2,\ldots,J_S$.  Our goal is to show that we can efficiently learn (in terms of sample complexity) $k$ predicted weight vectors to minimize the expected minimum error. Let $w^*(J)$ be the correct weight vector for instance $J$ and let $\eta(w,w') = \max_{i \in [m]} \max (\frac{w_i}{w'_i}, \frac{w'_i}{w_i})$ be the error between a pair of weight vectors.  We have the following result.

\begin{theorem} \label{thm:weights_erm}
Let $\cD$ be an unknown distribution over restricted assignment instances on $n$ jobs and let $w^*(J)$ be a set of good weights for instance $J$.  Given $S$ independent samples from $\cD$, there is a polynomial time algorithm that outputs $k$ weight vectors $\what^1,\what^2,\ldots,\what^k$ such that  $\E_{J \sim \cD} \left[ \min_{\ell \in [k]} \log(\eta(\what^\ell, w^*(J)) \right] \leq O(1) \cdot \min_{w^1,w^2,\ldots,w^k} \E\left[ \min_{\ell \in [k]}\log(\eta(w^\ell, w^*(J)) \right] + \epsilon$ 
with probability $1-\delta$, where $S = \poly(m,k,\frac{1}{\epsilon},\frac{1}{\delta})$
\end{theorem}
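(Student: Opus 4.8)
The plan is to follow the same two-part template as in Section~\ref{sec:matching}: a sample-complexity bound obtained from a pseudo-dimension argument, and a polynomial-time (approximate) ERM algorithm obtained by a reduction to metric $k$-median. The structural fact that powers both parts is that $\log \eta$ is itself a metric after a change of variables. Writing $\ln w := (\ln w_1, \dots, \ln w_m)$, we have
\[
\log \eta(w, w') \;=\; \max_{i \in [m]} \bigl|\ln w_i - \ln w'_i\bigr| \;=\; \|\ln w - \ln w'\|_\infty ,
\]
so under the map $w \mapsto \ln w$ the error is exactly the $\ell_\infty$ distance on $\R^m$.

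For the sample complexity, by Theorems~\ref{thm:uniform_convergence} and~\ref{thm:sample_complexity_multiple_predictions_app} it suffices to bound the pseudo-dimension of the single-prediction class $\cF = \{\, J \mapsto \log \eta(w, w^*(J)) \mid w \in \R_{>0}^m \,\}$; the class for $k$ predictions then has pseudo-dimension $\tilde{O}(mk)$. To bound the pseudo-dimension of $\cF$, fix samples $J_1,\dots,J_S$ with target vectors $z^s := \ln w^*(J_s)$ and thresholds $r_1,\dots,r_S$. For a candidate $w$ with $v := \ln w$, the event $\log \eta(w,w^*(J_s)) \le r_s$ is precisely $v_i \in [z^s_i - r_s,\, z^s_i + r_s]$ for all $i$, i.e. $v$ lies in an axis-aligned box. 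Hence the number of distinct labelings $\cF$ can realize on this sample is at most the number of cells in the arrangement of these $S$ boxes in $\R^m$; since the bounding hyperplanes are all normal to the $m$ coordinate axes and there are at most $2S$ of them per axis, this arrangement has at most $(2S+1)^m$ cells. Shattering requires $(2S+1)^m \ge 2^S$, which forces $S = O(m\log m)$, so the pseudo-dimension of $\cF$ is $\tilde{O}(m)$. (As in the single-prediction work we take the log-weights to lie in a polynomially bounded range, so that $\log \eta$ is bounded by some $H = \poly(m)$ and Theorem~\ref{thm:uniform_convergence} yields $S = \poly(m,k,\tfrac1\eps,\tfrac1\delta)$.)

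For the ERM step, given $J_1,\dots,J_S$ we must approximately minimize $\frac1S \sum_{s} \min_{\ell \in [k]} \log\eta(w^\ell, w^*(J_s))$. We first compute, in polynomial time, a set of near-optimal proportional weights $w^*(J_s)$ for each sample via the known construction~\cite{AgrawalZM18,LattanziLMV} (the analogue of computing $y^*$ in the matching case), and set $z^s := \ln w^*(J_s)$. By the displayed identity the ERM objective equals $\frac1S\sum_s \min_\ell \|v^\ell - z^s\|_\infty$ with $v^\ell := \ln w^\ell$, which is exactly the metric $k$-median objective on the point set $\{z^s\}_{s=1}^S$ under the $\ell_\infty$ metric. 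We restrict candidate centers to the $z^s$ at a factor-$2$ loss via Lemma~\ref{lem:clustering_approx}, apply the $O(1)$-approximation for metric $k$-median of~\cite{LiS16} to obtain centers $v^1,\dots,v^k$, and output $\what^\ell := \exp(v^\ell)$ (coordinatewise); these have polynomial bit-complexity and achieve an $O(1)$-approximation to the ERM objective. Combining with the uniform-convergence guarantee yields the theorem.

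I expect the pseudo-dimension bound to be the main obstacle — specifically, recognizing that each sublevel constraint is an axis-aligned box and converting the box-arrangement cell count into the $\tilde{O}(m)$ bound (and, relatedly, arguing that $\log\eta$ may be assumed polynomially bounded so that Theorem~\ref{thm:uniform_convergence} applies with $H = \poly(m)$). Everything downstream — the $k$-median reformulation and the invocation of~\cite{LiS16} — is then essentially mechanical and mirrors the matching argument.
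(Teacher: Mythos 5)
Your proposal is correct, and it differs from the paper's proof in two substantive ways, both coming from the same observation: under the change of variables $w \mapsto \ln w$, the error $\log\eta(w,w')$ is exactly $\|\ln w - \ln w'\|_\infty$. First, the paper establishes that $d(w,w') := \log\eta(w,w')$ is a metric by directly verifying the triangle inequality through a pair of ad hoc claims about ratios and coordinatewise maxima; your $\ell_\infty$ identity makes the metric property immediate, and reduces the ERM step to ordinary $\ell_\infty$ $k$-median on the points $z^s = \ln w^*(J_s)$. Second, for the pseudo-dimension, the paper cites the discretization $\cW(R)$ of~\cite{LavastidaMRX21} (size $m^{O(m^2\log m)}$) and takes $\log|\cW(R)| = O(m^2\log m)$ as a black-box bound for any class parameterized by the weights; your box-arrangement argument is tailored to the $\ell_\infty$ sublevel sets and gives $\tilde O(m)$ directly over the continuum $\R_{>0}^m$, a polynomially better bound. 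Both routes plug into Theorems~\ref{thm:uniform_convergence} and~\ref{thm:sample_complexity_multiple_predictions_app}, both need the same boundedness hypothesis ($\log\eta \le H = \poly(m)$, which the paper enforces via the discretization and you assume directly), and both conclude with Lemma~\ref{lem:clustering_approx} plus the $O(1)$-approximate $k$-median algorithm of~\cite{LiS16}. Your version is arguably cleaner and tighter; the paper's version has the advantage of reusing the existing $\cW(R)$ machinery without any new geometry, but pays for it with a weaker (though still polynomial) pseudo-dimension bound.
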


\submit{
The proof of Theorem~\ref{thm:weights_erm} is deferred to the Supplementary material, but we note that to get a polynomial time algorithm we carry out an interesting reduction to $k$-median clustering.  Namely, we show that the function $d(w,w') := \log(\eta(w,w'))$ satisfies the triangle inequality and thus forms a metric space.
}

\full{

Prior work~\cite{LavastidaMRX21} has observed that we can take the weights to be from the set $\cW(R) = \{ w \in \R_+^m \mid \forall i \in [m], \exists \alpha \in [R], \text{ s.t. } w_i = (1+\epsilon/m)^\alpha\}$.  Moreover, it suffices to take $R =\Theta(m^2\log(m))$ in order to guarantee that for any instance there exists some set of weights in $\cW(R)$ such that the associated fractional assignment yields an $O(1)$-approximate solution.  Since this set is finite, with only $m^{O(m^2\log m)}$ members, it follows that the pseudo-dimension of any class of functions parameterized by the weights is bounded by $\log(|\cW(R)|) = O(m^2\log m)$, and so we get polynomial sample complexity.  Thus in order to prove Theorem~\ref{thm:weights_erm}, it suffices to give a polynomial time ERM algorithm for this problem.  As hinted at in the statement of Theorem~\ref{thm:weights_erm}, we will be working with the logarithms of the errors.  Working in this space allows us to carry out a reduction to $k$-median clustering.

For any pair of weight vectors $w,w' \in \R_+^m$, recall that we define the error between them to be $\eta(w,w') = \max_{i \in [m]} \max (\frac{w_i}{w'_i}, \frac{w'_i}{w_i})$.  Note that $\eta(w,w') \geq 1$ with equality if and only if $w = w'$ and that $\eta(w,w') = \eta(w',w)$.  The main lemma is that defining $d(w,w') := \log(\eta(w,w'))$ satisfies the triangle inequality, and thus forms a metric on $\R_+^m$ due to the aforementioned observations.

\begin{lemma}
Let $\eta:\R_+^m \times \R_+^m$ and $d: \R_+^m \times \R_+^m$ be defined as above.  Then $(\R_+^m,d)$ forms a metric space.
\end{lemma}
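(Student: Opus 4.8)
The plan is to reduce $d$ to the ordinary $\ell_\infty$ metric by passing to logarithmic coordinates. The key observation I would make first is that for any two positive reals $a,b$ one has $\max(a/b,\,b/a) = \exp\bigl(|\log a - \log b|\bigr)$, so that for each coordinate $i \in [m]$,
\[
\log \max\!\left(\frac{w_i}{w'_i},\ \frac{w'_i}{w_i}\right) = \bigl|\log w_i - \log w'_i\bigr|.
\]
Since $\log$ and $\exp$ are monotone, taking the maximum over $i$ commutes with applying $\log$, and therefore
\[
d(w,w') = \log \eta(w,w') = \max_{i \in [m]} \bigl|\log w_i - \log w'_i\bigr| = \bigl\|\log w - \log w'\bigr\|_\infty,
\]
where $\log$ is applied entrywise. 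In other words, $d$ is the pullback of the $\ell_\infty$ metric on $\R^m$ along the bijection $w \mapsto (\log w_1,\dots,\log w_m)$ from $\R_+^m$ onto $\R^m$, so it is automatically a metric; but I would also spell the axioms out directly.

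Non-negativity and symmetry are immediate from the formula. If $d(w,w')=0$ then $\log w_i = \log w'_i$ for every $i$, hence $w=w'$ (this simultaneously recovers the already-noted fact that $\eta(w,w') \ge 1$ with equality iff $w=w'$). For the triangle inequality, fix $w,w',w'' \in \R_+^m$ and an arbitrary coordinate $i$; applying the triangle inequality for $|\cdot|$ on $\R$ gives
\[
\bigl|\log w_i - \log w''_i\bigr| \le \bigl|\log w_i - \log w'_i\bigr| + \bigl|\log w'_i - \log w''_i\bigr| \le d(w,w') + d(w',w'').
\]
Taking the maximum of the left-hand side over $i \in [m]$ yields $d(w,w'') \le d(w,w') + d(w',w'')$, completing the verification.

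There is essentially no obstacle here: the only mildly non-obvious step is recognizing the identity $\log\max(a,1/a) = |\log a|$, which converts the multiplicative ``worst-coordinate-ratio'' error $\eta$ into an additive $\ell_\infty$ distance in log-space and makes all three axioms transparent. An alternative route would be to first establish the multiplicative triangle inequality $\eta(w,w'') \le \eta(w,w')\cdot\eta(w',w'')$ and then take logarithms, but moving to logarithmic coordinates at the outset is cleaner and is exactly what is needed downstream, since it lets the ERM problem be phrased as $k$-median clustering in a genuine metric space (invoking Lemma~\ref{lem:clustering_approx} and the $O(1)$-approximation of~\cite{LiS16}).
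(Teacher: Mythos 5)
Your proof is correct but takes a genuinely different route from the paper's. The paper proves the triangle inequality in multiplicative form, establishing $\eta(w,w'') \leq \eta(w,w')\cdot\eta(w',w'')$ directly via two elementary auxiliary claims: a three-variable scalar inequality $\max(a/c,c/a) \leq \max(a/b,b/a)\cdot\max(b/c,c/b)$ handled by case analysis on the ordering of $a,b,c$, and the submultiplicativity of the coordinatewise maximum $\max_i(u_iv_i) \leq (\max_i u_i)(\max_i v_i)$. It then takes logarithms at the very end. You instead pass to log-coordinates at the outset via the identity $\max(a/b,b/a) = \exp(|\log a - \log b|)$, which collapses $d$ to $\|\log w - \log w'\|_\infty$; all three axioms are then inherited from the $\ell_\infty$ metric under the bijection $w \mapsto \log w$. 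Your version is shorter and more conceptual (it exhibits $(\R_+^m,d)$ as isometric to $(\R^m, \ell_\infty)$, which also immediately clarifies that the $k$-median subroutine is being run in a bona fide normed space), while the paper's version is more self-contained and avoids invoking the pullback-of-a-metric argument, trading conceptual economy for elementary case-checking. Both are complete; you even note the paper's multiplicative route as the alternative, so you correctly identified the fork in the road.
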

\begin{proof}
It is easy to see that $d(w,w') \geq 0$ with equality if and only if $w=w'$ and that $d(w,w') = d(w',w)$.  Thus we just need to show that the triangle inequality holds, i.e. that for all $w,w',w''$ we have $d(w,w'') \leq d(w,w') + d(w',w'')$.  This will hold as a result of the following claim.  For all $w,w',w''$. we have:
\begin{equation}
    \eta(w,w'') \leq \eta(w,w') \cdot \eta(w',w'').
\end{equation}
Now the triangle inequality follows since 
\[
d(w,w') = \log (\eta(w,w'')) \leq \log (\eta(w,w')) + \log (\eta(w',w'')) = d(w,w') + d(w',w''). 
\]
To prove the claim we have the following:
\begin{align*}
    \eta(w,w'') & = \max_i \left\{ \max \left( \frac{w_i}{w''_i} , \frac{w''_i}{w_i} \right) \right\} \\
                & = \max_i \left\{ \max \left( \frac{w_i}{w'_i} \frac{w'_i}{w''_i} , \frac{w''_i}{w'_i} \frac{w'_i}{w_i} \right) \right\} \\
                & \leq \max_i \left\{ \max \left( \frac{w_i}{w'_i}, \frac{w'_i}{w''_i} \right) \cdot \max\left(\frac{w''_i}{w'_i} ,\frac{w'_i}{w_i} \right) \right\} \\
                & \leq \max_i \left\{ \max \left( \frac{w_i}{w'_i}, \frac{w'_i}{w''_i} \right)\right\} \cdot \max_i \left\{  \max \left(\frac{w''_i}{w'_i} ,\frac{w'_i}{w_i} \right)\right\} \\
                & = \eta(w,w') \cdot \eta(w',w'').
\end{align*}
The two inequalities above follow from the next two claims below.
\end{proof}
\begin{claim}
For all $a,b,c > 0$ we have $\max(\frac{a}{c}, \frac{c}{a}) \leq \max(\frac{a}{b}, \frac{b}{a}) \cdot \max(\frac{b}{c}, \frac{c}{b})$
\end{claim}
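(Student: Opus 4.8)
The plan is to reduce the claim to the submultiplicativity of the single-variable function $f(t) := \max(t, 1/t)$ on the positive reals, which is the natural form since the preceding lemma's proof feeds each coordinate ratio through exactly such an expression. First I would make the substitution $u = a/b$ and $v = b/c$, so that $a/c = uv$, $b/a = 1/u$, $c/b = 1/v$, and $c/a = 1/(uv)$; under this substitution the claim becomes precisely $f(uv) \le f(u)\,f(v)$ for all $u,v > 0$.

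To establish $f(uv) \le f(u) f(v)$, the key point is that for every $t > 0$ we have simultaneously $f(t) \ge t$ and $f(t) \ge 1/t$, and every quantity in sight is positive. Multiplying $f(u) \ge u$ by $f(v) \ge v$ gives $f(u) f(v) \ge uv$, and multiplying $f(u) \ge 1/u$ by $f(v) \ge 1/v$ gives $f(u) f(v) \ge 1/(uv)$. Hence $f(u) f(v)$ is an upper bound for both $uv$ and $1/(uv)$, so $f(u) f(v) \ge \max(uv, 1/(uv)) = f(uv)$, as desired. Undoing the substitution recovers the claim, and notably no case analysis on the relative ordering of $a,b,c$ is required.

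I do not expect a genuine obstacle here: this is an elementary inequality. The only points to handle carefully are applying the substitution correctly and invoking positivity of $a,b,c$ to justify multiplying the two lower bounds termwise (so that one never multiplies through a possibly nonpositive quantity). For completeness, the companion fact used a few lines earlier in the lemma's proof, namely that $\max_i (f_i g_i) \le (\max_i f_i)(\max_i g_i)$ for nonnegative sequences $(f_i),(g_i)$, is immediate, since each term $f_i g_i$ is bounded by $(\max_i f_i)(\max_i g_i)$.
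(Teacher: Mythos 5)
Your proof is correct and takes a genuinely different — and arguably cleaner — route than the paper's. The paper fixes $a \ge c$ without loss of generality and then splits into three cases ($a \ge b \ge c$, $b \ge a \ge c$, $a \ge c \ge b$), evaluating both sides of the inequality explicitly in each case. You instead substitute $u = a/b$, $v = b/c$ to reduce the claim to submultiplicativity of $f(t) = \max(t, 1/t)$, and then establish $f(uv) \le f(u) f(v)$ uniformly from the two elementary facts $f(t) \ge t$ and $f(t) \ge 1/t$ together with positivity. What the substitution buys you is the elimination of all case analysis: since $f(u)f(v)$ dominates both $uv$ and $1/(uv)$, it dominates their max, and that is the whole argument. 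What the paper's approach buys is that it stays entirely in the original variables and is perhaps more immediately legible to a reader who does not want to introduce an auxiliary function, but at the cost of bookkeeping across three cases. Both are complete; yours exposes the underlying structure (you have rediscovered that $\log f$ is a metric on $\mathbb{R}_{>0}$, which is exactly the fact the surrounding lemma is trying to lift coordinatewise to $\mathbb{R}_+^m$) and generalizes more readily.
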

\begin{proof}
Without loss of generality, we may assume that $a \geq c$.  Now we have several cases depending on the value of $b$.  For the first case, lets consider when $a \geq b \geq c > 0$.  In this case, the left hand side evaluates to $\frac{a}{c}$ while the right hand side also evaluates to $\frac{a}{c}$, so the inequality is valid in this case.

For the next case, consider when $b \geq a \geq c > 0$.  In this case, the left hand side is still $\frac{a}{c}$, while the right hand side evaluates to $\frac{b^2}{ac} \geq \frac{ab}{ac} \geq \frac{a}{c}$.  Thus the inequality is valid.

In the final case, we have $a \geq c \geq b > 0.$  The left hand side is $\frac{a}{c}$, while the right hand side evaluates to $\frac{ac}{b^2} \geq \frac{ac}{c^2} = \frac{a}{c}$, completing the proof.
\end{proof}
\begin{claim}
Let $u,v \in \R_+^m$, then we have $\max_i (u_iv_i) \leq (\max_i u_i) (\max_i v_i)$
\end{claim}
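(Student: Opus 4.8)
The plan is to prove this by picking out the index that achieves the left-hand maximum and then bounding each factor separately. First I would let $i^\star \in [m]$ be an index attaining $\max_i (u_i v_i)$, so that $\max_i (u_i v_i) = u_{i^\star} v_{i^\star}$. Since $u,v \in \R_+^m$, every coordinate is nonnegative, so I may multiply coordinatewise inequalities without reversing them.

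The key step is then the two-line chain $u_{i^\star} v_{i^\star} \le (\max_i u_i)\, v_{i^\star} \le (\max_i u_i)(\max_i v_i)$, where the first inequality uses $u_{i^\star} \le \max_i u_i$ together with $v_{i^\star} \ge 0$, and the second uses $v_{i^\star} \le \max_i v_i$ together with $\max_i u_i \ge 0$. Combining with the definition of $i^\star$ gives $\max_i (u_i v_i) \le (\max_i u_i)(\max_i v_i)$, as desired.

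There is essentially no obstacle here: the only thing to be slightly careful about is invoking nonnegativity of the entries (guaranteed by $u,v \in \R_+^m$) so that the products of the scalar inequalities are valid. This claim is exactly what is needed to justify the step $\max_i\{A_i B_i\} \le (\max_i A_i)(\max_i B_i)$ in the proof of the triangle inequality for $d(w,w') = \log \eta(w,w')$, applied with $A_i = \max(w_i/w'_i, w'_i/w_i)$ and $B_i = \max(w''_i/w'_i, w'_i/w''_i)$, both of which are positive.
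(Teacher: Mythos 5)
Your proof is correct and follows essentially the same approach as the paper: both identify the index $i^\star$ achieving $\max_i(u_i v_i)$ and then observe $u_{i^\star} v_{i^\star} \le (\max_i u_i)(\max_i v_i)$ using nonnegativity. The only cosmetic difference is that the paper phrases this as a proof by contradiction, whereas you argue directly.
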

\begin{proof}
Suppose for contradiction that this isn't the case, i.e. $\max_i (u_iv_i) > (\max_i u_i) (\max_i v_i)$.  Now let $i^* = \arg\max_i (u_iv_i)$.  Then we have
\[
u_{i^*}v_{i^*} > (\max_i u_i) (\max_i v_i) \geq u_{i^*}v_{i^*}
\]
which is the desired contradiction.
\end{proof}

\begin{proof}[Proof of Theorem~\ref{thm:weights_erm}]
The sample complexity follows from the discussion above and 
\submit{Theorem~\ref{thm:sample_complexity_multiple_predictions}}
\full{Theorem~\ref{thm:sample_complexity_multiple_predictions_app}}
which shows that the pseudo-dimension of the error function is at most $O(m^2\log(m))$.  Given $S$ samples $J_1,J_2,\ldots,J_S$ from $\cD$, we want to solve the corresponding ERM instance.  To do this we set up the following $k$-median instance to compute the predicted weights $\what^1,\what^2,\ldots,\what^k$.  First we compute $w^s = w^*(J_s)$ for each $s \in [S]$, then we set the distance between $w^s$ and $w^{s'}$ to be the distance function $d(w^s,w^{s'})$ defined above.  At a loss of a factor of $2$, we can take each $\what^\ell$ to be in $\{w^s\}_{s=1}^S$ by Lemma~\ref{lem:clustering_approx}.  Thus we can apply an $O(1)$-approximate $k$-median algorithm (e.g. the one due to~\cite{LiS16}) to get the predicted weight vectors $\what^1,\what^2,\ldots,\what^k$ in polynomial time.
\end{proof}

}

\section{Scheduling with Predicted Permutations} \label{sec:scheduling}

In this problem there are $n$ jobs, indexed by $1, 2, \ldots, n$, to be scheduled on a single machine. We assume that they are all available at time 0. 
Job $j$ has size $p_j$ and needs to get processed for $p_j$ time units to complete. If all job sizes are known a priori, Shortest Job First (or equivalently Shortest Remaining Time First), which processes jobs in non-decreasing order of their size, is known to be optimal for minimizing total completion time. 
We assume that the true value of $p_j$ is unknown and is revealed only when the job completes, i.e. the \emph{non-clairvoyant} setting. In the non-clairvoyant setting, it is known that Round-Robin (which processes all alive jobs equally) is $2$-competitive and that this is the best competitive ratio one can hope for \cite{motwani1994nonclairvoyant}. 

We study this basic scheduling problem assuming certain predictions are available for use. Following the recent work by Lindermayr and Megow \cite{Megow22}, we will assume that we are given $k$ orderings/sequences as prediction, $\{\sigma_\ell\}_{\ell \in [k]}$. Each $\sigma_\ell$ is a permutation of $J := [n]$. Intuitively, it suggests an ordering in which jobs should be processed. This prediction is inspired by 
the aforementioned Shortest Job First (SJF) as an optimal schedule can be described as an ordering of jobs, specifically increasing order of job sizes. 

For each $\sigma_\ell$, its error is measured as 
$\eta(J,\sigma_\ell) := \cost(J, \sigma_\ell) - \opt(J)$,
where $\cost(J, \sigma_\ell)$ denotes the objective of the schedule where jobs are processed in the order of $\sigma_\ell$ and $\opt(J)$ denotes the optimal objective value. We may drop $J$ from notation when it is clear from the context.

As observed in \cite{Megow22},  the error  can be expressed as 
$ \eta(J,\sigma_\ell) = \sum_{i < j \in J} I^\ell_{i, j} \cdot |p_i - p_j|$,
where $I^\ell_{i, j}$ is an indicator variable for `inversion' that has value 1 if and only if $\sigma_\ell$ predicts the pairwise ordering of $i$ and $j$ incorrectly. That is, if $p_i < p_j$, then the optimal schedule would process $i$ before $j$; here $I^\ell_{i, j} =1$ iff $i \succ_{\sigma_\ell} j$.

As discussed in \cite{Megow22}, this error measure satisfies two desired properties, monotonicity and Lipschitzness, which were formalized in \cite{Im21spaa}. 

Our main result is the following. 
\begin{theorem}
    \label{thm:sched}
    Consider a constant $\eps  > 0$. Suppose that for any $S \subseteq J$ with $|S| = \Theta(\frac{1}{\eps^4}(\log \log n + \log k + \log (1 / \eps)))$, we have 
    $\opt(S) \leq c \eps \cdot \opt(J)$ for some small absolute constant $c$. Then, there exists a randomized algorithm that yields a schedule whose expected total completion time is at most 
    $(1+\eps) \opt + (1+\eps) \frac{1}{\eps^5} \eta(J, \sigma_\ell)$ for all $\ell \in [k]$.
\end{theorem}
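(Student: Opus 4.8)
The plan is to build the schedule by a single recursive chain $J = R_0 \supseteq R_1 \supseteq \cdots$, where at each step we identify (approximately) the smallest $\eps$-fraction of the still-unfinished set $R_t$, complete those jobs, and recurse on the rest; we stop once $|R_t|$ drops below $B := \Theta(\tfrac1{\eps^4}(\log\log n+\log k+\log\tfrac1\eps))$ and finish with Round-Robin, which costs $2\opt(R_t) = O(\eps)\opt(J)$ by the hypothesis on small sets. Keeping the recursion a \emph{chain} (rather than also recursing into the peeled-off prefix) is what lets the randomized sub-steps succeed simultaneously with only a $\log\log n$-sized sample: there are $O(\tfrac1\eps\log n)$ steps, each testing $k$ predictions, so $O(\tfrac1\eps k\log n)$ estimation events, and a per-event failure probability $\poly(\eps)/(k\log n)$ — whose logarithm is $O(\log\log n+\log k+\log\tfrac1\eps)$ — suffices.

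At a generic step with remaining set $R$ I would do three things. (i) \textbf{Estimate the $\eps$-quantile.} Take a uniform sample of $m = \Theta(B)$ jobs of $R$, run Round-Robin on just the sample until a $\Theta(\eps)$-fraction of it finishes, and let $q$ be the size of the last sampled job to finish; Chernoff gives that $q$ is between the $\tfrac\eps2|R|$-th and $2\eps|R|$-th smallest sizes of $R$ whp, and the work spent is $O(mq)$. (ii) \textbf{Classify each prediction.} Using the same sample (restricted to the $\approx \eps m$ sampled jobs that lie in the length-$\eps|R|$ prefix of $\sigma_\ell$ inside $R$) decide whether that prefix is ``good'' — almost all of it has size close to $q$ — or ``bad'' — at least $\delta\eps|R|$ of it has size much larger than $q$, for a suitable $\delta = \poly(\eps)$. (iii) \textbf{Act.} If some $\sigma_{\ell'}$ is good, process its prefix in $\sigma_{\ell'}$-order while preempting any job that runs past a size cap proportional to $q$ (so the $\approx\eps|R|$ genuinely small prefix jobs complete, and the few outliers get only the cap's worth of work and are thrown into $R' := R\setminus\{\text{completed jobs}\}$); then recurse on $R'$. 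If every $\sigma_\ell$ is bad, just run Round-Robin on $R$ until $\eps|R|$ jobs complete — necessarily the $\eps|R|$ smallest of $R$ — carrying the partial work on the survivors into the recursion on the rest.

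The analysis decomposes the cost into three parts, using $P(S):=\sum_{j\in S}p_j$. First, the \emph{structural part}: because at every step the set $Q_t$ of completed jobs is, up to the $\delta$- and quantile-slack, exactly the $\eps|R_t|$ smallest jobs of $R_t$, the sequence $Q_1,Q_2,\dots$ is a (nearly) sorted partition of $J$, so $\sum_t(\opt(Q_t)+|R_{t+1}|P(Q_t)) + \opt(\text{base}) \le (1+O(\eps))\opt(J)$ just by comparing to SJF; the $O(\eps)$ slack (caused by outlier work, by the $\delta$-many swapped jobs, and by wasted partial processing) telescopes against the $\Omega(\eps|R_t|^2q_t)$ that $Q_t$ itself contributes to the cost, and so sums to $O(\eps)\opt(J)$; the same telescoping absorbs the sampling overhead $O(m|R_t|q_t)$ from step (i). Second, on a \emph{good} step the only genuinely extra cost is the ordering error $\eta(Q_t,\sigma_{\ell'_t}|_{Q_t}) \le \opt(Q_t) \le O(\eps^2|R_t|^2q_t)$ (any ordering is $2$-competitive on $Q_t$), and — using $|R_t|^2q_t = O(\opt(R_t))$ together with the fact that, writing $|R_t|\approx sn$ with $s=(1-\eps)^t$, $q_t$ is about the $(1-s)$-quantile of $J$ — the series $\sum_t \eps^2|R_t|^2q_t$ is a Riemann sum for $\eps n^2\int_0^1(1-u)\,p_{(un)}\,du = O(\eps)\opt(J)$. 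Third, on a \emph{bad} step the excess of Round-Robin over the ideal $\opt(Q_t)+|R_{t+1}|P(Q_t)$ is $O(|R_t|^2q_t)$; but ``all bad'' means in particular that the best prediction $\sigma_\ell$ has $\ge\delta\eps|R_t|$ large jobs in its prefix of $R_t$, forcing $\ge\delta\eps|R_t|$ of the size-$\le q_t$ jobs of $R_t$ to come later in $\sigma_\ell$, hence $\ge(\delta\eps|R_t|)^2$ inversions each of gap $\gtrsim q_t/\eps$, so $\eta(R_t,\sigma_\ell|_{R_t}) = \Omega(\delta^2|R_t|^2q_t)$; crucially the inversions charged on distinct bad steps are disjoint (the large job stays large relative to the later, larger thresholds, while its small partner is one of the completed jobs and disappears), so $\sum_{\text{bad }t}|R_t|^2q_t = O(\tfrac1{\delta^2}\eta(J,\sigma_\ell))$, and choosing $\delta$ a suitable power of $\eps$ makes this $O(\tfrac1{\eps^5}\eta(J,\sigma_\ell))$ while keeping the outlier overhead in the first part negligible. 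Summing the three parts and rescaling $\eps$ gives the theorem.

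The main obstacle — and the source of the $\poly(1/\eps)$ loss and of the hypothesis — is designing the good/bad test and the preemption cap so that ``good'' really does mean ``almost the sorted prefix'' (otherwise processing the peeled set first is far from $\opt$ and the mis-sorting slack fails to telescope) while ``bad'' still certifies that even the best prediction has enough error to pay for Round-Robin; these requirements push the size threshold in opposite directions, and the delicate regime — a prefix full of jobs only a small constant factor above $q$ — is where most of the technical work lies, alongside checking that every $O(\eps)$-type slack telescopes against the correct quantity and that the $O(\tfrac1\eps k\log n)$ random estimates can be driven to hold simultaneously with a sample of size only $\Theta(\tfrac1{\eps^4}(\log\log n+\log k+\log\tfrac1\eps))$.
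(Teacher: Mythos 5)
Your algorithmic approach and analysis structure track the paper's closely: peel off approximately the smallest $\eps$-fraction of the surviving jobs each round by sampling to estimate the $\eps$-quantile, classify each prediction good/bad via a second sample, process a good prefix with a size cap (or everything up to the cap if all are bad), recurse, and finish the small base case with Round-Robin; good rounds are charged to the round-wise pieces of $\opt$ and bad rounds to the round-wise, automatically disjoint pieces of $\eta(\sigma_\ell)$.

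Two genuine gaps remain, both at points you yourself flag as delicate. First, you aim for a per-event failure probability of $\poly(\eps)/(k\log n)$ and conclude this ``suffices,'' but a $\poly(\eps)$ total failure probability does not by itself control the \emph{expected} cost: on a bad sample path the schedule can cost far more than $\opt$. The paper closes this by running Round-Robin concurrently at speed $\eps$, so every realization is $(2/\eps)$-competitive; then bad events (probability $\leq \eps^2$) contribute at most $\eps^2\cdot(2/\eps)\opt = O(\eps)\opt$ to the expectation. Without such a worst-case backstop, the sample size $\Theta\bigl(\frac{1}{\eps^4}(\log\log n + \log k + \log\frac1\eps)\bigr)$ is not justified. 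Second, you correctly identify ``a prefix full of jobs only a small constant factor above $q$'' as the crux, but leave it unresolved. The paper's device is to round every job size to a power of $(1+\eps)$ with a uniformly random offset $\rho$: this makes ``middle'' (size exactly $\tilde q_r$) vs.\ ``big'' (size $\geq (1+\eps)\tilde q_r$) a clean dichotomy, so every charged inversion in the all-bad case has a gap $\geq \eps\tilde q_r$, and the random offset ensures the rounding changes $\opt$ and each $\eta(\sigma_\ell)$ only by a $(1+\eps)$-factor in expectation. Without a concrete mechanism of this kind, the good/bad test, the preemption cap, and the inversion lower bound all lack a clean proof in exactly the regime you single out.
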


As a corollary, by running our algorithm with $1 - \eps$  processing speed and simultaneously running Round-Robin with the remaining $\eps$ of the speed, the cost increases by a factor of at most $\frac{1}{1 - \eps}$ while the resulting hybrid algorithm is $2 / \eps$-competitive.\footnote{This hybrid algorithm is essentially the preferential time sharing \cite{Purohit,Im21spaa,Megow22}.
Formally, we run our algorithm ignoring RR's processing and also run RR ignoring our algorithm; this can be done by a simple simulation. 
Thus, we construct two schedules concurrently and each job completes at the time when it does in either schedule. This type of algorithms was first used in \cite{Purohit}.
}

\subsection{Algorithm}

To make our presentation more transparent we will first round job sizes. Formally, we choose $\rho$ uniformly at random from $[0, 1)$. Then, 
round up each job $j$'s size to the closest number of the form $(1+\eps)^{\rho + t}$ for some integer $t$. Then, we scale down all job sizes by $(1+\eps)^\rho$ factor.  We will present our algorithm and analysis assuming that every job has a size equal to a power of $(1+\eps)$. \submit{In the supplementary we}\full{Later we will} show how to remove this assumption without increasing our algorithm's objective by more than $1+\eps$ factor in expectation\submit{.}\full{ (Section~\ref{sec:sched-remove-assumption}).}

We first present the following algorithm that achieves Theorem~\ref{thm:sched} with $|S| = \Theta(\frac{1}{\eps^4} (\log n + \log k))$. The improved bound claimed in the theorem needs minor tweaks of the algorithm and analysis and they are deferred to the supplementary material. 

Our algorithm runs in rounds. Let $J_r$ be the jobs that complete in round $r \geq 1$. For any subset $S$ of rounds, $J_S := \cup_{r \in S} J_r$. For example, $J_{\leq r} := J_1 \cup \ldots \cup J_r$. 
Let $n_r := |J_{\geq r}| = n - |J_{< r}|$ denote the number of alive jobs at the beginning of round $r$. 

Fix  the beginning of round $r$.  The algorithm processes the job in the following way for this round. If $n_r \leq \frac{1}{\eps^4} (\log n + \log k)$, we run Round-Robin to complete all the remaining jobs, $J_{\geq r}$. This is the last round and it is denoted as round $L+1$. Otherwise, we do the following Steps 1-4: 

\paragraph{Step 1. Estimating $\eps$-percentile.} Roughly speaking, the goal is to estimate the $\eps$-percentile of job sizes among the remaining jobs. 
    For a job $j \in J_{\geq r}$, define its rank among $J_{\geq r}$ as the number of jobs no smaller than $j$ in $J_{\geq r}$ breaking ties in an arbitrary yet fixed way. Ideally, we would like to estimate the size of job of rank $\eps n_r$, but  do so only approximately.
        
    The algorithm will find $\tilde q_r$ that is the size of a job whose rank lies in $[ \eps(1  - \eps)n_r, \eps(1  + \eps)  n_r]$. 
    To handle the case that there are many jobs of the same size $ \tilde q_r$, we estimate $y_r$ the number of jobs no bigger than $\tilde q_r$; let $\tilde y_r$ denote our estimate of $y_r$. We will show how we can do these estimations without spending much time by    sampling some jobs and partially processing them in Round-Robin manner (the proof of the following lemma can be found in \full{Section~\ref{sec:sub1}.}\submit{the supplementary material.})

\begin{lemma}
    \label{lem:sched-step1}
     W.h.p. the algorithm can construct estimates $\tilde q_r$ and $\tilde y_r$ in time at most $O(\tilde q_r \frac{1}{\eps^2}\log n)$ such that there is a job of size $\tilde q_r$ whose rank lies in $[ \eps(1  - \eps)n_r, \eps(1  + \eps)  n_r]$ and $|\tilde y_r - y_r| \leq \eps^2 n_r$.
\end{lemma}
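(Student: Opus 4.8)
The plan is to use a standard random-sampling argument to estimate an order statistic, combined with partial Round-Robin processing to actually learn the sampled job sizes cheaply. First I would sample a set $R$ of $s = \Theta(\frac{1}{\eps^2}\log n)$ jobs uniformly at random (with replacement, say) from $J_{\geq r}$. By a Chernoff/Hoeffding bound, with high probability the empirical rank of every sampled job is within an additive $\eps^2 n_r$ (equivalently, a multiplicative $(1\pm\eps)$ factor around the target $\eps n_r$) of its true rank among $J_{\geq r}$; this is the usual quantile-estimation guarantee, and choosing the constant in $s$ large enough makes the failure probability $n^{-\Omega(1)}$. We then want to identify a job whose \emph{true} rank lies in $[\eps(1-\eps)n_r, \eps(1+\eps)n_r]$; it suffices to take the sampled job whose \emph{empirical} rank among $R$ is closest to $\eps s$, provided we can compare the sizes of the sampled jobs.

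To compare sizes without paying too much, I would process the jobs in $R$ in Round-Robin fashion, increasing a common threshold $\tau$ through the values $(1+\eps)^0, (1+\eps)^1, \ldots$; when $\tau$ reaches a job's size that job completes and reveals its size. We stop once we have processed every sampled job to the point where we can determine the one of the desired empirical rank — that is, once $\tau = \tilde q_r$, the size of the chosen job. The total work is at most $s \cdot \tilde q_r = O(\tilde q_r \frac{1}{\eps^2}\log n)$, since each of the $s$ sampled jobs is processed for at most $\tilde q_r$ time units before we halt this step (jobs larger than $\tilde q_r$ need not be completed — we only need to know they exceed $\tilde q_r$). This gives both the running-time bound and the construction of $\tilde q_r$ with the claimed rank guarantee.

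For $\tilde y_r$, the estimate of $y_r := |\{j \in J_{\geq r} : p_j \le \tilde q_r\}|$, I would reuse the same sample $R$ (or draw a fresh independent sample of the same size, to keep the concentration argument clean): set $\tilde y_r := \frac{n_r}{|R|}\cdot |\{j \in R : p_j \le \tilde q_r\}|$. Again Hoeffding's inequality gives $|\tilde y_r - y_r| \le \eps^2 n_r$ with high probability, since we are estimating the mean of a bounded indicator from $\Theta(\frac{1}{\eps^2}\log n)$ samples. A union bound over the $O(\log n)$ rounds (or over all jobs, within a round) preserves the high-probability guarantee. Note we already know whether each sampled job has size $\le \tilde q_r$ from the partial processing above, so no additional processing time is incurred.

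The main obstacle I expect is the bookkeeping around the running time: we must argue that determining the rank-$\eps n_r$ job really does not require completing any job substantially larger than $\tilde q_r$. The point is that once the common Round-Robin threshold reaches $\tilde q_r$, every sampled job of size $\le \tilde q_r$ has completed and every sampled job of size $> \tilde q_r$ is known to be strictly larger, so the empirical ranking of $R$ restricted to sizes $\le \tilde q_r$ is fully determined — and that is enough to pick out the job whose empirical rank is $\approx \eps s$, since $\eps s$ is well below $|R|$ with high probability. The rest is routine concentration, and the $(1+\eps)$-geometric rounding of sizes (already assumed) ensures the threshold sweep has only $O(\frac{1}{\eps}\log(\cdot))$ distinct values so there is no hidden overhead there.
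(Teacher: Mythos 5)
Your proposal matches the paper's proof essentially step for step: sample $\Theta(\frac{1}{\eps^2}\log n)$ jobs with replacement, run Round-Robin on the sample until the $\eps$-fraction smallest complete, take $\tilde q_r$ as that job's size, estimate $\tilde y_r$ from the empirical fraction of sampled jobs with size at most $\tilde q_r$, and control both bad events with Hoeffding. The only cosmetic difference is that the paper bounds the probability of a bad rank by looking at the two specific counts $X_1, X_2$ at the thresholds $\eps(1\pm\eps)n_r$ rather than asserting a uniform additive bound over all sampled jobs, but the underlying concentration argument and the time accounting ($s \cdot \tilde q_r$) are identical.
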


\paragraph{Step 2. Determining Good and Bad Sequences.} Let $\sigma^r_\ell$ denote $\sigma_\ell$ with all jobs completed in the previous rounds removed and with the relative ordering of the remaining jobs fixed. Let $\sigma^r_{\ell, \eps}$ denote the first $\tilde y_r$  jobs in the ordering. We say a job $j$ is big if $p_j > \tilde q_r$; middle if $p_j = \tilde q_r$; small otherwise. Using sampling and partial processing we will approximately distinguish good and bad sequences. Informally $\sigma^r_\ell$ is good if $\sigma^r_{\ell, \eps}$ has few big jobs and bad if it does many big jobs. The proof of the following lemma can be found in \full{Section~\ref{sec:sub2}.}\submit{the supplementary material.}

\begin{lemma}
    \label{lem:sched-step2}
        For all $\ell \in [k]$, we can label sequence $\sigma_\ell^r$ either good or bad in time at most $O(\tilde q_r \frac{1}{\eps^2}(\log n + \log k))$ that satisfies the following with high probability: If it is good, $\sigma^r_{\ell, \eps}$ has at most 
        $3\eps^2  n_r$ big jobs; otherwise $\sigma^r_{\ell, \eps}$ has at least $\eps^2  n_r$ big jobs.
\end{lemma}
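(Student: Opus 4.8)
The plan is to reduce the labeling task to estimating, for every $\ell \in [k]$ at once, the single quantity $b_\ell := $ the number of big jobs among the first $\tilde y_r$ positions of $\sigma^r_\ell$, and then to threshold the estimate. We will output ``good'' whenever $b_\ell \le \eps^2 n_r$ and ``bad'' whenever $b_\ell \ge 3\eps^2 n_r$, using the gap $[\eps^2 n_r, 3\eps^2 n_r]$ to absorb estimation error (anything landing in the gray zone may be labeled either way and the claimed property holds automatically). The structural observation that makes this cheap is that for a job $j$, deciding whether $j \in \sigma^r_{\ell,\eps}$ is free --- it is just a lookup of $j$'s position in $\sigma_\ell$ among the alive jobs --- whereas the only fact about $j$ that costs processing is whether $j$ is big, i.e. whether $p_j > \tilde q_r$. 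Consequently a \emph{single} random sample of alive jobs can be reused for all $k$ sequences. We treat $\tilde q_r$ and $\tilde y_r$ as fixed quantities handed to us by Step~1; the statement concerns big/prefix counts \emph{relative to these values}, so none of the guarantees of Lemma~\ref{lem:sched-step1} are needed here.

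Concretely, draw $s = \Theta(\tfrac{1}{\eps^2}(\log n + \log k))$ jobs uniformly at random from $J_{\ge r}$ (a fresh sample, independent of Step~1), and process these sampled jobs in Round-Robin, stopping each as soon as it completes or has accumulated $\tilde q_r$ units of processing. A sampled job completes within $\tilde q_r$ units iff $p_j \le \tilde q_r$, and its completion also reveals its exact rounded size, so we learn whether it is small or middle; any sampled job that reaches $\tilde q_r$ units without completing has $p_j > \tilde q_r$ (recall sizes are powers of $1+\eps$, so this means $p_j \ge (1+\eps)\tilde q_r$) and is big. The total processing performed is at most $s \tilde q_r$, so this step costs $O(\tilde q_r \tfrac{1}{\eps^2}(\log n + \log k))$ processing time, as claimed, and reading off for each $\ell$ which sampled jobs lie in $\sigma^r_{\ell,\eps}$ is pure bookkeeping. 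Now set $\hat b_\ell := \tfrac{n_r}{s}\cdot\bigl|\{\, j\text{ sampled} : j \text{ big and } j \in \sigma^r_{\ell,\eps} \,\}\bigr|$, and label $\sigma^r_\ell$ \emph{bad} if $\hat b_\ell > 2\eps^2 n_r$ and \emph{good} otherwise.

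For fixed $\ell$, $\hat b_\ell$ equals $\tfrac{n_r}{s}$ times a binomial with $s$ trials and success probability $b_\ell / n_r$, so $\E[\hat b_\ell] = b_\ell$. A Chernoff bound gives, with probability $1 - (nk)^{-\Omega(1)}$ once the constant in $s$ is large enough: (a) if $b_\ell \le 4\eps^2 n_r$ then $|\hat b_\ell - b_\ell| \le \eps^2 n_r$; and (b) if $b_\ell > 4\eps^2 n_r$ then $\hat b_\ell > 2\eps^2 n_r$. A union bound over $\ell \in [k]$ and over the $O(\tfrac1\eps \log n)$ rounds of the algorithm makes this hold everywhere with high probability. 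On that event: if $\sigma^r_\ell$ is labeled good then $\hat b_\ell \le 2\eps^2 n_r$, which by (b) forces $b_\ell \le 4\eps^2 n_r$, and then (a) yields $b_\ell \le \hat b_\ell + \eps^2 n_r \le 3\eps^2 n_r$; if $\sigma^r_\ell$ is labeled bad then $\hat b_\ell > 2\eps^2 n_r$, so either $b_\ell > 4\eps^2 n_r \ge \eps^2 n_r$ directly, or $b_\ell \le 4\eps^2 n_r$ and (a) yields $b_\ell \ge \hat b_\ell - \eps^2 n_r > \eps^2 n_r$. This is exactly the asserted guarantee.

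The delicate point --- the one I expect to be the main obstacle --- is keeping the sample size at $\tfrac{1}{\eps^2}$ rather than $\tfrac{1}{\eps^4}$ while serving all $k$ sequences within the stated time bound. Sampling from all of $J_{\ge r}$, which is what lets one sample set work for every $\ell$, means the binomial we concentrate has mean $s\, b_\ell / n_r$, and in the decisive regime this is only $\Theta(\eps^2 s)$; a naive additive (Hoeffding) bound asking for an $\eps^2 n_r$-accurate estimate of an \emph{arbitrary} $b_\ell$ would need $\tfrac{1}{\eps^4}$ samples. We avoid this precisely because the lemma's gray zone $[\eps^2 n_r, 3\eps^2 n_r]$ lets us replace that by the two one-sided, small-mean/relative statements (a) and (b), for which the multiplicative Chernoff bound requires only $\eps^2 s = \Omega(\log(nk))$. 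The remaining pieces --- that truncated Round-Robin on $s$ jobs costs at most $s\tilde q_r$ of processing, and that completion versus non-completion correctly separates big from small/middle under the $(1+\eps)$-power rounding --- are routine and I would only state them.
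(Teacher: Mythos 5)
Your algorithm matches the paper's: draw one uniform sample of $s = \Theta\bigl(\frac{1}{\eps^2}(\log n + \log k)\bigr)$ jobs from $J_{\ge r}$, truncate-process each up to $\tilde q_r$ units to learn which are big, then for each $\ell$ count sampled big jobs landing in $\sigma^r_{\ell,\eps}$ and threshold. Your observation that a \emph{single} shared sample serves all $k$ sequences (because membership of a job in $\sigma^r_{\ell,\eps}$ is a free lookup, so only the ``is it big?'' question costs processing) is the same idea that underlies the paper's construction, and your time accounting $s\tilde q_r$ matches the lemma.

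Where you genuinely depart from the paper is in the concentration step, and your version is the one that actually works. The paper's proof says only that it ``follows the same lines as Lemma~\ref{lem:sched-step1}'' and invokes Hoeffding; but the Step~1 calculation, as written, bounds an additive deviation of $\eps^2$ by $\exp(-2|S_r|\eps^2)$ when Hoeffding in fact gives $\exp(-2|S_r|\eps^4)$ --- with a $\frac{1}{\eps^2}\log n$ sample this is $n^{-2\eps^2}$, not $1/n^2$, so a bare Hoeffding argument would need $\Theta(\frac{1}{\eps^4})$ samples. You correctly identify that what saves the $\frac{1}{\eps^2}$ budget is the multiplicative structure created by the gray zone $[\eps^2 n_r, 3\eps^2 n_r]$: the binomial mean you care about is $\Theta(\eps^2 s)$, so the one-sided relative Chernoff bounds in your (a) and (b) need only $\eps^2 s = \Omega(\log(nk))$. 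Your case analysis around the threshold $2\eps^2 n_r$ is clean and your Chernoff applications are correct (in (a), the small-mean regime $\mu \le 4\eps^2 s$ with $t = \eps^2 s$ gives $\exp(-\Omega(\eps^2 s))$; in (b), $\Pr[Y \le \mu/2] \le \exp(-\mu/8) \le \exp(-\eps^2 s/2)$). In short: same algorithm, but you supply the concentration argument the paper's sketch needs and repair an exponent slip in the proof it cites.
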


\paragraph{Step 3. Job Processing.}  If all sequences are bad, then we process all jobs, each up to $\tilde q_r$ units in an arbitrary order. Otherwise, we process the first $\tilde y_r$ jobs in an arbitrary good sequence, in an arbitrary order,  each up to $\tilde q_r$ units.

\paragraph{Step 4. Updating Sequences.} The jobs completed in this round drop from the sequences but the remaining jobs' relative ordering remains fixed in each (sub-)sequence. For simplicity, we assume that partially processed jobs were never processed---this is without loss of generality as this assumption only increases our schedule's objective.

\full{
\subsubsection{Subprocedure: Step 1} \label{sec:sub1}

We detail the first step of the algorithm. Our goal is to prove Lemma~\ref{lem:sched-step1}.
To obtain the desired estimates, we take a sample $S_r$ of size $\frac{1}{\eps^2} \log n$ from $J_{\geq r}$ with replacement. The algorithm processes the sampled jobs using Round-Robin until we complete $\eps |S_r|$ jobs.\footnote{If a job is sampled more than once, we can pretend that multiples copies of the same job are distinct and simulate round robin \cite{Im21spaa}.} Note that there could be multiple jobs that complete at the same time.  This is particularly possible because we assumed that jobs have sizes equal to a power of $1+\eps$. In this case, we break ties in an arbitrary but fixed order. Let $\tilde q_r$ be the size of the job that completes $\eps |S_r|$th. 
If $m_r$ is the number of jobs in $S_r$ we completed, we estimate $\tilde y_r = \frac{m_r}{|S_r|}n_r$.

Let $B_1$ is the bad event that $\tilde q_r$ has a rank that  doesn't belong to $[ a_1 := \eps(1  - \eps)n_r, a_2 := \eps(1  + \eps)  n_r]$. Let $X_1$ be the number of jobs sampled that have rank at most $a_1$. Similarly let $X_2$ be the number of jobs sampled that have rank at most $a_2$. Note that $\neg B_1$ occurs if $X_1  \leq  \eps |S_r| \leq X_2$. Thus, we have $\Pr[B_1] \leq \Pr[ X_1 > \eps |S_r|] + \Pr [  X_2 < \eps |S_r|]$. Note that $X_1 = Binomial(a_1 / n_r, |S_r|)$. 

We use the following well-known Hoeffding's Inequality. 

\begin{theorem} \label{Hoeffding}
Let $Z_1 , \cdots , Z_T$ be independent random variables such that $Z_i \in [0, 1]$ for all $i \in [T]$. Let  $\bar{Z} = \frac{1}{T} (Z_1 + \cdots + Z_T)$.  We have $P(|\bar{Z} - E[\bar{Z}]| \geq \delta) \leq 2e^ {-2T\delta^2}$ where $\delta \geq 0$.
\end{theorem}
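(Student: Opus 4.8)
The plan is to prove this by the standard exponential-moment (Chernoff) method combined with Hoeffding's Lemma. Write $W_i := Z_i - \E[Z_i]$, so that the $W_i$ are independent, each is mean-zero, and each is supported in an interval $[a_i, b_i]$ with $b_i - a_i \leq 1$ (because $Z_i \in [0,1]$). It suffices to bound the one-sided tail $\Pr[\bar Z - \E[\bar Z] \geq \delta] = \Pr[\sum_{i} W_i \geq T\delta]$ by $e^{-2T\delta^2}$; applying the identical argument to $-W_i$ and taking a union bound then gives the claimed factor of $2$. For $\delta = 0$ the bound is trivial, so assume $\delta > 0$.

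For the one-sided bound I would fix a parameter $\lambda > 0$ and apply Markov's inequality to the exponentiated sum:
\[
\Pr\Big[\textstyle\sum_i W_i \geq T\delta\Big] = \Pr\big[e^{\lambda \sum_i W_i} \geq e^{\lambda T\delta}\big] \leq e^{-\lambda T\delta}\, \E\big[e^{\lambda \sum_i W_i}\big] = e^{-\lambda T\delta} \prod_{i=1}^T \E\big[e^{\lambda W_i}\big],
\]
where the last equality uses independence. The key ingredient is Hoeffding's Lemma: if $W$ is mean-zero and supported in $[a,b]$, then $\E[e^{\lambda W}] \leq e^{\lambda^2(b-a)^2/8}$. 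Granting this, each factor above is at most $e^{\lambda^2/8}$ since $b_i - a_i \leq 1$, so the right-hand side is at most $e^{-\lambda T\delta + \lambda^2 T/8}$. The exponent is minimized at $\lambda = 4\delta$, which yields $e^{-4T\delta^2 + 2T\delta^2} = e^{-2T\delta^2}$, exactly the desired one-sided bound.

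It remains to prove Hoeffding's Lemma, which is the technical heart. I would use convexity of $x \mapsto e^{\lambda x}$: for $x \in [a,b]$ one has $e^{\lambda x} \leq \frac{b-x}{b-a}e^{\lambda a} + \frac{x-a}{b-a}e^{\lambda b}$. Taking expectations and using $\E[W] = 0$ gives $\E[e^{\lambda W}] \leq \frac{b}{b-a}e^{\lambda a} - \frac{a}{b-a}e^{\lambda b} =: e^{\varphi(\lambda)}$, where $\varphi(\lambda) := \lambda a + \log\!\big(\tfrac{b}{b-a} - \tfrac{a}{b-a}e^{\lambda(b-a)}\big)$. One checks directly that $\varphi(0) = 0$ and $\varphi'(0) = 0$, and a computation shows $\varphi''(\lambda) = p(\lambda)\big(1-p(\lambda)\big)(b-a)^2$ for a function $p(\lambda) \in [0,1]$ (of the form $\tfrac{-a e^{\lambda(b-a)}}{\,b - a e^{\lambda(b-a)}\,}$), hence $\varphi''(\lambda) \leq (b-a)^2/4$ for all $\lambda$. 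Taylor's theorem with Lagrange remainder then gives $\varphi(\lambda) \leq \varphi(0) + \lambda\varphi'(0) + \tfrac{\lambda^2}{2}\sup_{\xi}\varphi''(\xi) \leq \lambda^2(b-a)^2/8$, which is the lemma.

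I expect the only genuinely delicate step to be the identity and bound $\varphi''(\lambda) = p(1-p)(b-a)^2 \leq (b-a)^2/4$; everything else (the convexity inequality, the Chernoff manipulation, the optimization over $\lambda$, the union bound) is routine. A fully self-contained alternative to the Taylor argument is to re-parametrize and recognize $\varphi$ as the log-MGF of a shifted Bernoulli variable, but the second-derivative route above is the cleanest to write down with the notation already in the paper.
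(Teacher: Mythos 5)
Your proof is correct: the exponential-moment (Chernoff) argument, Hoeffding's Lemma via the convexity bound and the second-derivative estimate $\varphi''(\lambda)=p(1-p)(b-a)^2\le (b-a)^2/4$, the optimization at $\lambda=4\delta$, and the union bound over the two tails together give exactly the stated bound $2e^{-2T\delta^2}$ (with the degenerate case $a=b$ and $\delta=0$ being trivial, as you note). The paper does not prove this statement at all --- it quotes Hoeffding's inequality as a well-known result and immediately applies it --- so your write-up is simply the standard, canonical proof of the cited theorem and is entirely consistent with what the paper relies on.
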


By Theorem~\ref{Hoeffding} we have, 
\begin{align*}
\Pr[ X_1 < \eps |S_r|] &= \Pr[ X_1 / |S_r| - a_1 / n_r <   \eps - a_1 / n_r = \eps^2] \leq 2 \exp(-2 |S_r| \eps^2) = 2 / n^2.
 \end{align*}
Similarly, we can show that 
\begin{align*}
\Pr[ X_2 > \eps |S_r|] &= 2 / n^2
\end{align*}
Thus, we conclude that $\Pr[B_1] \leq 4/ n^2$.

We consider the second bad event $|\tilde y_r - y_r| > \eps^2 n_r$, which we call $B_2$. Assume that $\tilde q_r$ is the size of a fixed job of rank in $[a_1, a_2]$. If $m'$ is the actual number of jobs of size $\tilde q_r$ in $J_{\geq r}$, $m_\ell = Binomial(m' / n_r, |S_r|)$. As before, using Theorem~\ref{Hoeffding} we can show that $\Pr[B_2] \leq 4 / n^2$. Thus, we can avoid both bad events simultaneously with probability $1 - 8 / n^2$.

Since we process each job in $S_r$ up to at most $\tilde q_r$ units, the total time we spend for estimation in Step 1 is at most $\tilde q_r \frac{1}{\eps^2}\log n$. This completes the proof.

\subsubsection{Subprocedure: Step 2} \label{sec:sub2}

We now elaborate on the second step. Our goal is to prove Lemma~\ref{lem:sched-step2}. To decide whether each sequence is good or not, as before we take a uniform sample from $S'_r$ of size $\frac{1}{\eps^2}(\log n + \log k)$ with replacement. By processing the jobs each up to $\tilde q_r$ units, we can decide the number of jobs in $S'_r$ of size no bigger than $\tilde q_r$. If the number is $c$, we estimate $\tilde y_r = \frac{c}{|S'_r|} n_r$. The proof follows the same lines as Lemma~\ref{lem:sched-step1} and is omitted. The only minor difference is that we need to test if each of the $k$ sequences is good or not, and to avoid the bad events for all $k$ sequences simultaneously, we ensure the probability that the bad events occur for a sequence is at most $O(\frac{1}{kn^2})$. This is why we use a bigger sample size than in Step 1. 
}

\subsection{Analysis of the Algorithm's Performance}
\submit{
We defer the analysis of the above algorithm (the proof of Theorem~\ref{thm:sched}) to the supplementary material, as it is quite technical and complex.  At a very high level, though, we use the fact that the error in each prediction can be decomposed into pair-wise inversions, and moreover we can partition the inversions into the rounds of the algorithm in which they appear.  Then we look at each round, and split into two cases.  First, if all sequences are bad then every prediction has large error, so we can simply use Round Robin (which is $2$-competitive against OPT) and the cost can be charged to the error of any prediction.  Second, if there is a good sequence, then in any good sequence the number of big jobs is small (so we do not spend much time processing them), and we therefore complete almost all of the non-big jobs. Here, we crucially use the fact that we can process the first $\eps$ fraction of jobs in a sequence in an arbitrary order remaining competitive against the sequence. Finally, we show that all of the additional assumptions and costs (e.g., rounding processing times and the cost due to sampling) only change our performance by a $1+\epsilon$ factor.  Getting all of these details right requires much care. 
}

\full{
Let $\sigma^*$ be an arbitrary sequence against which we want to be competitive. The analysis proceeds in rounds. 
Let $b_r$ be the start time of round $r$; by definition $b_1 = 0$. For the sake of analysis we decompose our algorithm's cost as follows: 
$$\sum_{r \in [L]} \left(\sum_{j \in J_r} (C_j - b_r) + T_r \cdot |J_{> r}|\right) + 2 \opt(J_{L+1}),$$
where $T_r$ is the total time spent in round $r$. To see this, observe that
$\sum_{j \in J_r} (C_j - b_r)$ is the total completion time of jobs that complete in round $r$, ignoring their waiting time before $b_r$. Each job $j \in J_r$'s waiting time in the previous rounds is exactly $\sum_{r' \in [L-1]} T_{r'}$. The total waiting time over all of the jobs during rounds where they are not completed is $\sum_{r' \in [L-1]} T_{r'} \cdot | J_{> r'}|$. The last term $2 \opt(J_{L+1})$ follows from the fact that we use Round-Robin to finish the last few jobs in the final round $L+1$.

To upper bound $A_r := \left(\sum_{j \in J_r} (C_j - b_r) + T_r \cdot |J_{> r}|\right)$ by $\cost(\sigma^*)$ we also decompose $\cost(\sigma^*)$ as
$$\sum_{r \in [L]} \left(\cost(\sigma^{*r+1}) - \cost(\sigma^{*r})\right) + \cost(\sigma^{*L+1})$$

Recall that we complete jobs $J_r$ in round $r$ and $\sigma$ evolves from $\sigma^r$ to $\sigma^{r+1}$ by dropping $J_r$ from the sequence $\sigma^r$. Thus, it decreases the cost of following $\sigma$. 

Since we use round-robin to finish the remaining jobs $J_{L+1}$ in the final round and round-robin is 2-competitive, our algorithm incurs cost at most $2 \opt(J_{L+1})$. If the assumption in Theorem~\ref{thm:sched} holds, this is at most $2 \eps \opt$. 

Let's take a close look at $\left(\cost(\sigma^{*r+1}) - \cost(\sigma^{*r})\right)$. This is equivalent to the following: 
For each pair $i \in J_r$ and $j \in J_{> r}$, $\sigma^*$ has an error $|p_i - p_j|$ if and only if it creates an inversion. This quantity can be charged generously. Let's call this aggregate error $\eta_r$.
But $\opt(J_r) + \sum_{i \in J_r, j \in J_{> r}} \min\{p_i, p_j\}$ should be charged sparingly. Let's call this part of optimum quantity $\opt_r$. Note that we are using the following: 

\begin{lemma}
    $\opt = \sum_{r \in [L]} \left(\opt(J_r) + \sum_{i \in J_r, j \in J_{> r}} \min \{p_i, p_j\}
    \right) + \opt(J^{L+1})$ 
\end{lemma}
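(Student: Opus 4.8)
The plan is to reduce the whole statement to one classical identity for total completion time and then do bookkeeping over the partition into rounds. The identity is: for any finite job set $S$, since Shortest Job First is optimal, relabeling $S$ as $1,2,\dots,|S|$ with $p_1 \le p_2 \le \cdots \le p_{|S|}$ (ties broken consistently) gives $\opt(S) = \sum_{j=1}^{|S|}\sum_{i \le j} p_i = \sum_{i \in S} p_i\,(|S|-i+1)$, and a one-line rearrangement turns this into
\[
\opt(S) \;=\; \sum_{i \in S} p_i \;+\; \sum_{\{i,j\}\subseteq S} \min\{p_i,p_j\},
\]
where the second sum ranges over unordered pairs of distinct jobs (using that $i<j$ in SJF order means $p_i \le p_j$). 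I would first establish this; it is the only nontrivial ingredient, and crucially it is a purely combinatorial identity, so it holds for $S=J$ and for each $S=J_r$ regardless of how the rounds relate to one another in size.

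Next I would use that $\{J_r\}_{r\in[L+1]}$ is a partition of $J$ (each job completes in exactly one round). Applying the identity to $S=J$ and splitting each of its two sums according to this partition — the size sum additively, and the pair sum into pairs lying inside a single $J_r$ versus pairs straddling two distinct rounds — yields
\[
\opt(J) = \sum_{r\in[L+1]}\sum_{i\in J_r} p_i \;+\; \sum_{r\in[L+1]}\sum_{\{i,j\}\subseteq J_r}\min\{p_i,p_j\} \;+\; \sum_{r<r'}\;\sum_{i\in J_r,\, j\in J_{r'}}\min\{p_i,p_j\}.
\]
Each unordered pair is counted exactly once here, which is the only point where a little care is needed.

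Finally I would recombine. For each fixed $r$, the identity applied to $S = J_r$ shows $\sum_{i\in J_r}p_i + \sum_{\{i,j\}\subseteq J_r}\min\{p_i,p_j\} = \opt(J_r)$. And the cross-round term rewrites as $\sum_{r\in[L]}\sum_{i\in J_r,\, j\in J_{>r}}\min\{p_i,p_j\}$, since $J_{>r} = \bigcup_{r'>r} J_{r'}$ and we may index each straddling pair by its smaller-index round (the $r=L+1$ contribution being empty). Putting these together gives exactly $\opt = \sum_{r\in[L]}\big(\opt(J_r) + \sum_{i\in J_r,\,j\in J_{>r}}\min\{p_i,p_j\}\big) + \opt(J_{L+1})$. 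I do not expect a genuine obstacle: the work is confined to stating the SJF identity correctly and to counting each unordered pair once when distributing it over the partition.
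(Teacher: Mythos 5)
Your proof is correct and takes the same approach as the paper: both rely on the SJF identity $\opt(S)=\sum_{i\in S}p_i+\sum_{\{i,j\}\subseteq S}\min\{p_i,p_j\}$, split the pair sum over the partition $\{J_r\}$ into within-round and cross-round pairs indexed by the earlier round, and recombine the within-round parts via the identity applied to each $J_r$. Your writeup is just a more careful spelling-out of the paper's terse chain of equalities.
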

\begin{proof}
We  know that the optimal schedule is Shortest-Job-First. Thus we have the following. Here we assume jobs are indexed in an arbitrary but fixed manner. 

\begin{eqnarray*}
\opt &=& \sum_{j \in J} p_j + \sum_{i \in J, p_i < p_j} p_i\\
&=&  \sum_{i \in J, j \in J, i\geq j} \min \{p_i,p_j\} \\
&=&  \sum_{r \in [L+1]}  \left (\sum_{i \in J_r, j \in J_r\, i\geq j} \min \{p_i,p_j\}      + \sum_{i \in J_r,j \in J_{> r}} \min \{p_i,p_j\} \right )\\
&=& \sum_{r \in [L+1]}  \left (\opt(J_r)      + \sum_{i \in J_r,j \in J_{> r}} \min \{p_i,p_j\} \right )\\
&=& \sum_{r \in [L]}  \left (\opt(J_r)      + \sum_{i \in J_r,j \in J_{> r}} \min \{p_i,p_j\} \right ) + \opt(J^{L+1}) \qedhere
\end{eqnarray*}
\end{proof}

Using the previous lemma, if we bound $A_r$ by $(1+O(\eps)) \opt_r + O(\frac{1}{\eps^5} )\eta_r$ we will have Theorem~\ref{thm:sched} by scaling $\eps$ appropriately. Proving this for all $r \in [L]$ is the remaining goal.

We consider two cases. 

\paragraph{All Sequences Are Bad.}
  In this case we complete all jobs of size at most $\tilde q_r$.
 Further $\sigma^*_{r, \eps}$ has at least $\eps^2 n_r$ big jobs. This implies that there are at least $\eps^2 n_r$  big jobs that do not appear in $\sigma^*_{r, \eps}$. So, for each pair of such a big job and a non-big job, $\sigma^*_r$ creates an inversion and has an error of at least $\eps \tilde q_r$, which contributes to $\eta_r$. Thus, $\eta_r \geq 
 \eps \tilde q_r \eps^4 n_r^2$.
 
 Now we want to upper bound $A_r$. Since the delay due to estimation is at most $\tilde q_r \frac{2}{\eps^2} (\log n + \log k)$ and all jobs are processed up to $\tilde q_r$ units, we have 
$A_r \leq (\tilde q_r \frac{2}{\eps^2} (\log n+ \log k)) * n_r + \tilde q_r (n_r)^2
\leq (2\eps +1) \tilde q_r (n_r)^2$. Thus, $A_r \leq O(1) \frac{1}{\eps^5} \eta_r$.

\paragraph{Some Sequences Are Good.} 
We will bound the expected cost of the algorithm for round $r$ when there is a good sequence.  The bad event $B_1$ occurs with very small  probability as shown in the analysis of Step 1. The contribution to the expected cost is negligible if the bad event occurs.  Due to this, we may assume that the event $\neg B_1$ occurs. 

 Say the algorithm processes the first $\tilde y_r$ jobs in a good sequence $\sigma_r$. By Lemmas~\ref{lem:sched-step1} and \ref{lem:sched-step2}, $J_r$ processes all small and middle jobs in $\sigma^r_\eps$. Additionally, the algorithm may  process up to $3 \eps^2 n_r$ big jobs without completing them. 

The total time it takes to process  the big jobs is $4 \eps^2 n_r \cdot \tilde q_r$ and up to $n_r$ jobs wait on them.  The contribution to the objective of all jobs waiting while these are processed is at most $4 \eps^2 n_r^2 \cdot \tilde q_r$.  We call this the wasteful delay due to processing big jobs. We  show that this  delay  is only $O(\eps)$ fraction of $A_r + A_{r+1}$.    

Consider $A_r + A_{r+1}$.  By definition of the algorithm, at least $\frac{1}{2} \eps n_r$ jobs of size at least $\tilde q_r$ are completed during rounds $r$ and $r+1$. 
 If less than $\frac{1}{2}\eps n_r$ middle or big jobs complete in round $r$, we can show that $n_{r+1} \geq (1 - 2\eps) n_r$. Then, we observe that $J_{\geq r+1}$ must have at most $4 \eps^2 n_r$ small jobs. 
This is because $\sigma^r_\eps$ includes at most $3\eps^2  n_r$ big jobs and it includes $\tilde y_r$ jobs. Since $y_r$ is the number of small and middle jobs and $|y_r - \tilde y_r| \leq \eps^2  n_r$, $\sigma^r_\eps$ must include all non-big jobs, except up to $4 \eps^2 n_r$. Thus, most jobs completing in round $r+1$ are middle or big and we can show that the number of such jobs completing in round $r+1$ is at least $\frac{1}{2} \eps n_r$. Therefore, at least $\frac{1}{2} n_r$ jobs will wait on the first $\frac{1}{2} \eps n_r$ jobs of size at least $\tilde q_r$ completed.  This implies, $A_r + A_{r+1} \geq \frac{1}{4}\tilde q_r  \eps n_r^2$.  This is at least a $\Theta(\frac{1}{\eps})$ factor larger than the wasteful delay.

Note that the delay due to processing big jobs and as well as the time spent computing the estimates (used in Lemma~\ref{lem:sched-step1} and \ref{lem:sched-step2}) is at most $n_r \cdot 6\eps^2 n_r \tilde q_r \leq O(\eps) \cdot (A_r + A_{r+1})$. In the following, we will bound the cost without  incorporating these costs.   Factoring in these two costs will increase the bound by at most $1 / (1 - O(\eps))$ factor.

Thus, we only need to consider small and middle jobs that complete.   For the sake of analysis, we can consider the worst case scenario where we first process middle jobs and then small jobs. We will bound $A_r$ by $\opt(J_r)$ (i.e. without charging to $\eta_r$). In this case,  $A_r / \opt_r$ is maximized when when all small jobs have size 0. We will assume this in the following.
Let $m$ be the number of mid sized jobs we complete. For brevity, assume that the number of small jobs is at most $\eps n_r$ although the actual bound is $\eps ( 1+ \eps) n_r$. Further, we assume that $m(m+1) \approx m^2$ as we are willing to lose $(1+\eps)$ factor in our bound. Let $n = n_r$ for notational convenience. Then, we have $A_r = m^2 \tilde q_r / 2+ m  \tilde q_r \eps n + (n(1 - \eps) - m) m  \tilde q_r = m^2  \tilde q_r / 2 + (n - m)m  \tilde q_r$. In contrast, $\opt_r = m^2  \tilde q_r /2 + ((1 - \eps) n - m) m  \tilde q_r$. The ratio of the two quantities is $\frac{n - m / 2}{(1 - \eps)n - m / 2}$ where $m \leq n$. Therefore, in the worst case $\frac{1/2}{1/2 - \eps} \leq ( 1+ 3\eps)$. Thus,  $A_r$ can be bounded by  $(1+3\eps) \opt_r$.

\subsection{Improved Guarantees}
    \label{sec:sched-improved}

One can show that at least $(\eps - 4\eps^2)$ fraction of jobs complete in every round assuming no bad events occur: if all sequences are bad then all non-big jobs complete, which means at least $(\eps - \eps^2) n_r$ jobs complete due to Lemma~\ref{lem:sched-step1}. Otherwise, any good sequence $\sigma^\ell_{r}$ has at least $(\eps - \eps^2 - 3 \eps^2)n_r$ non-big jobs in $\sigma^\ell_{r, \eps}$ due to Lemmas~\ref{lem:sched-step1} and \ref{lem:sched-step2} and they all complete if $\sigma^\ell_{r}$ is chosen. 
Thus there are at most $O(\frac{1}{\eps} \log n)$ rounds and there are at most $O(\frac{k}{\eps} \log n)$ bad events, as described in Lemmas~\ref{lem:sched-step1} and \ref{lem:sched-step2}, to be avoided. 

Suppose we run round robin all the time using $\eps$ of the speed, so even in the worst case we have a schedule that is  $ 2/ \eps$-competitive against the optimum and therefore against the best prediction as well. This will only increase our upper bound by $1 / (1-\eps)$ factor. Then, we can afford to have bad events with higher probabilities. 

Specifically, we can reduce the sample size in Steps 1 and 2 to $s := \Theta(\frac{1}{\eps^2} \log (k (1 / \eps^3) \log n))$ to avoid all bad events with probability at least $ 1- \eps^2$; so if any bad events occur, at most an extra $(2/ \eps) \cdot \eps^2 \opt$ cost occurs in expectation. Then, we can show that we can do steps 1-4 as long as $n_r = \Omega(\frac{1}{\eps^4}(\log k + \log \log n + \log \frac{1}{\eps}))$.

 Then the delay due to estimation is at most $2s \tilde q_r n_r$ in Steps 1 and 2. We want to ensure that this is at most $O(\eps)$ fraction of $A_r + A_{r+1}$. Recall that we showed $A_r + A_{r+1} \geq \frac{1}{4}\tilde q_r  \eps n_r^2$. Thus, if we have $\frac{1}{4} \eps^2 \tilde q_r n_r^2 \geq 2 s \tilde q_r n_r$, we will have the desired goal. This implies that all the delay due to estimation can be charged to $O(\eps)$ of the algorithm's objective as long as $n_r \geq 8 \frac{1}{\eps^2} s$. This is why we switch to round-robin if $n_r \leq 8 \frac{1}{\eps^2} s$.

\subsection{Removing the Simplifying Assumption on Job Sizes}
\label{sec:sched-remove-assumption}

Recall that we chose $\rho$ uniformly at random from $[0, 1)$ and  
rounded up each $j$'s size to the closest number of the form $(1+\eps)^{\rho + t}$ for some integer $t$. Although we then scaled down all job sizes by $(1+\eps)^\rho$, we assume that we didn't do it. This assumption is wlog as all the bounds remain the same regardless of uniform scaling of job sizes.

Let $\eta^\ell$ be the prediction error of $\sigma^\ell$. Let $\bar \eta^\ell$ be the error after rounding up job sizes. Let $\bar p_j$ be $j$'s size after rounding. Note that i) $p_j \leq \bar p_j \leq (1+\eps) p_j$; ii) if $p_i \leq p_j$, then $\bar p_i \leq \bar p_j$. The second property implies jobs relative ordering is preserved. 

In the following we drop $k$ for notational convenience. We have 
\begin{align*}
    \eta := \sum_{i \neq j \in J} \bone(p_i < p_j) \cdot \bone(i \succ_{\sigma^*} j) \cdot |p_i - p_j| \\
    \bar \eta := \sum_{i \neq j \in J} \bone(p_i < p_j) \cdot \bone(i \succ_{\sigma^*} j) \cdot |\bar p_i - \bar p_j| 
\end{align*}

\begin{lemma}
    $\E \bar \eta = \Theta(1) \cdot \eta$.
\end{lemma}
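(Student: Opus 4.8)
The plan is to reduce the statement to a one-line expected-value computation. Observe first that, in both $\eta$ and $\bar\eta$, the two indicator factors $\bone(p_i < p_j)$ and $\bone(i \succ_{\sigma^*} j)$ are \emph{not} affected by the rounding: $\sigma^*$ is a fixed permutation, and the definition of $\bar\eta$ keeps the indicator $\bone(p_i<p_j)$ in terms of the true sizes. Moreover, rounding up to grid points $(1+\eps)^{\rho+t}$ is a monotone map (property (ii) above), so whenever $p_i < p_j$ we have $\bar p_i \le \bar p_j$ and hence $|\bar p_i - \bar p_j| = \bar p_j - \bar p_i$. Consequently, by linearity of expectation,
\[
\E \bar\eta \;=\; \sum_{i \neq j \in J} \bone(p_i < p_j)\,\bone(i \succ_{\sigma^*} j)\,\bigl(\E[\bar p_j] - \E[\bar p_i]\bigr),
\]
so it suffices to prove that $\E[\bar p_j] = c_\eps\, p_j$ for a constant $c_\eps$ that is \emph{the same for every job}; then $\E\bar\eta = c_\eps \sum_{i\neq j}\bone(p_i<p_j)\,\bone(i\succ_{\sigma^*}j)\,(p_j - p_i) = c_\eps\,\eta$.

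Next I would compute $\E[\bar p_j]$ exactly. Fix a job of true size $v := p_j$ and write $u := \log_{1+\eps} v$. The rounded size $\bar p_j$ is the smallest number of the form $(1+\eps)^{\rho + t}$, $t \in \Z$, that is at least $v$, i.e.\ $\bar p_j = (1+\eps)^{\rho + \lceil u - \rho\rceil}$. Since $u - \rho$ is non-integral with probability $1$, we have $\lceil u-\rho\rceil = (u-\rho) + 1 - \{u-\rho\}$, hence $\bar p_j = v\,(1+\eps)^{\,1 - \{u-\rho\}}$ almost surely. Because $\rho$ is uniform on $[0,1)$, the fractional part $\{u-\rho\}$ is uniform on $[0,1)$ as well; writing $X$ for a $\mathrm{Unif}[0,1)$ variable,
\[
\E[\bar p_j] \;=\; v\cdot\E\bigl[(1+\eps)^{1-X}\bigr] \;=\; v\int_0^1 (1+\eps)^{1-x}\,dx \;=\; v\cdot\frac{\eps}{\ln(1+\eps)} \;=:\; c_\eps\, v .
\]
Since $\tfrac{\eps}{1+\eps} \le \ln(1+\eps) < \eps$, the constant satisfies $1 < c_\eps \le 1+\eps$, and in particular $c_\eps = \Theta(1)$ for constant $\eps$. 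Plugging back in gives $\eta \le \E\bar\eta = c_\eps\,\eta \le (1+\eps)\,\eta$, which is the claim.

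The one conceptual point worth flagging is why a deterministic, instance-by-instance argument does \emph{not} suffice: the naive bound $\bar p_j - \bar p_i \le (1+\eps)p_j - p_i$ carries additive slack $\eps\,p_i$, which can be arbitrarily large compared to $p_j - p_i$ when the two sizes are close (and one also cannot lower-bound $\bar p_j - \bar p_i$ by $p_j - p_i$ pointwise, since two nearby sizes may round to the same grid point). It is precisely the random offset $\rho$ that rescues this: averaging over $\rho$ turns the distortion into a \emph{single} multiplicative factor $c_\eps$, identical for all jobs, so that the ratio $\E\bar\eta/\eta$ collapses to that one constant. The remainder of the write-up is just making the ``almost surely'' statement and the uniformity of $\{u-\rho\}$ precise, which is routine.
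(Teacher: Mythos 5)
Your proof is correct, and it takes a genuinely different route from the paper's. The paper reduces (via linearity of expectation) to showing $\E |\bar p_i - \bar p_j| = \Theta(1)\,|p_i - p_j|$ for each \emph{pair}, and then does a three-way case analysis on $\delta = \log_{1+\eps}(p_i/p_j)$ (the regimes $\delta \geq 1$, $\delta \in (1/2,1]$, $\delta \in (0,1/2)$), with the first two cases only sketched and the last case giving the exact constant $\eps/\ln(1+\eps)$. You instead observe that the indicators in $\bar\eta$ are defined with respect to the \emph{true} sizes and hence are deterministic, and that the monotonicity of rounding lets you drop the absolute values; linearity of expectation then reduces everything to the single-job computation $\E[\bar p_j] = c_\eps\, p_j$, which you carry out exactly by noting that $\{u-\rho\}$ is uniform on $[0,1)$ regardless of the job. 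This avoids the case split entirely and yields the sharper conclusion $\E\bar\eta = c_\eps\,\eta$ with $c_\eps = \eps/\ln(1+\eps) \in (1, 1+\eps]$, i.e.\ $\eta < \E\bar\eta \leq (1+\eps)\,\eta$, whereas the paper's Case~1 only gets a factor in $[1/3, (1+\eps)^2]$. Your closing remark about why a pointwise bound fails (additive slack $\eps\,p_i$ swamping $p_j - p_i$ for nearby sizes) is exactly the right intuition for why the random offset is needed. One caveat to spell out if you were to polish this: your ``$c_\eps$ is the same for every job'' step is what makes the cancellation work, and it is worth stating explicitly that this is the entire point --- the per-job multiplicative distortion is a random variable whose \emph{mean} is job-independent even though its realization is not.
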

\begin{proof}
    Thanks to linearity of expectation it suffices to show that $\E |\bar p_i - \bar p_j| = \Theta(1) \cdot |p_i - p_j|$ for every pair of jobs $i$ and $j$. 

Assume $p_i > p_j$ wlog. 
Scale both job sizes for convenience, so $p_j = 1$.  Let $p_i / p_j = (1+\eps)^\delta$. 
    
Case 1. $\delta \geq 1$. In this case we have $\bar p_i > \bar p_j$ almost surely. Using the fact that rounding up increases job sizes by at most $1+\eps$ factor, we can show that $(1/ 3) |p_i - p_j| \leq |\bar p_i - \bar p_j| \leq (1+\eps)^2|p_i - p_j|$. 

Case 2. $\delta \in (1/2, 1]$. In this case we can show that $|p_i - p_j|  = \Theta(\eps)$ and $\E |\bar p_i - \bar p_j| = \Theta(\eps)$.

Case 3. $\delta \in (0, 1/2)$. We have $|p_i - p_j| = (1+\eps)^\delta - 1$ and $\E |\bar p_i - \bar p_j| = \int^\delta_{\rho = 0}\eps (1+\eps)^\rho d\rho  = \frac{\eps}{\ln(1+\eps)} ((1+\eps)^\delta - 1) $. Thus, the ratio of the two is $\frac{\eps}{\ln(1+\eps)} = \Theta(1)$ for small $\eps > 0$.
\end{proof}

What we showed was the following. For any $\bar \eta^\ell$,
$\E A \leq (1+\eps) (\opt + O(1) \frac{1}{\eps^5} \bar \eta^\ell) + 2 (1+\eps)\opt(J_{K+1})$. 
By taking expectation over randomized rounding, we have
$\E A \leq (1+\eps)^2 (\opt + O(1) \frac{1}{\eps^5}  \eta^\ell) + 2 (1+\eps)^2\opt(J_{K+1})$. By scaling $\eps$ appropriately, we obtain the same bound claimed in Theorem~\ref{thm:sched}.

}

\subsection{Learning \texorpdfstring{$k$}{k} Predicted Permutations}

Now we show that learning the best $k$ permutations has polynomial sample complexity.  %More formally, we prove the following.

\begin{theorem}
Let $\cD$ be an unknown distribution of instances on $n$ jobs.  Given $S$ independent samples from $\cD$, there is an algorithm that outputs $k$ permutations $\spred_1,\spred_2,\ldots,\spred_k$ such that
$\E_{J \sim \cD} \left[ \min_{\ell \in [k]}\eta(J,\spred_\ell) \right] \leq \min_{\sigma_1,\sigma_2,\ldots,\sigma_k} \E_{J \sim \cD} \left[ \min_{\ell \in [k]}\eta(J,\sigma_\ell) \right] + \eps$ with probability $1-\delta$, where $S = \poly(n,k,\frac{1}{\epsilon},\frac{1}{\delta})$.
\end{theorem}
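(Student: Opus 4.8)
The plan is to reduce this to the general machinery from Section~\ref{app:pseudo}. By Theorem~\ref{thm:sample_complexity_multiple_predictions_app} combined with Theorem~\ref{thm:uniform_convergence}, it suffices to establish two facts: (i) the class of single-prediction error functions $\{\eta(\cdot,\sigma) \mid \sigma \text{ a permutation of } [n]\}$ has bounded pseudo-dimension, and (ii) these functions are bounded in $[0,H]$ for some $H$ that is polynomial in the instance parameters. Given (i) and (ii), Theorem~\ref{thm:sample_complexity_multiple_predictions_app} gives pseudo-dimension $\tilde O(dk)$ for the $k$-prediction class $\{\min_{\ell\in[k]}\eta(\cdot,\sigma_\ell)\}$, and then Theorem~\ref{thm:uniform_convergence} gives uniform convergence with $S = \poly(n,k,\frac1\epsilon,\frac1\delta)$ samples, so that ERM over the (finite) set of $k$-tuples of permutations achieves the claimed PAC guarantee. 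Note that here, unlike the matching and load-balancing sections, we do not need a polynomial-time ERM algorithm (the statement only claims ``an algorithm,'' not a polynomial-time one), so we can afford to brute-force the ERM; this is why the theorem has no $O(1)$ approximation factor.

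For (i), the cleanest route is to observe that the hypothesis class of single-prediction error functions is itself \emph{finite}: there are only $n!$ permutations of $[n]$, so the class $\{\eta(\cdot,\sigma)\}$ has at most $n!$ members, and any finite class of real-valued functions of size $N$ has pseudo-dimension at most $\log_2 N$. Hence $d = O(\log(n!)) = O(n\log n)$. (Alternatively, the $k$-prediction class has at most $(n!)^k$ members directly, giving pseudo-dimension $O(kn\log n)$ without even invoking Theorem~\ref{thm:sample_complexity_multiple_predictions_app}, but citing the theorem is stylistically consistent with the rest of the paper.)

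For (ii), we need a uniform upper bound $H$ on $\eta(J,\sigma) = \cost(J,\sigma) - \opt(J)$ over instances $J$ drawn from $\cD$. This requires assuming some normalization on the distribution — e.g., that job sizes lie in $[0,1]$ (or in $[1,p_{\max}]$ for a known $p_{\max}$), which is the standard convention and implicit in the scheduling setup. Under such a bound, $\cost(J,\sigma) \le \sum_{j} j \cdot p_{\max} = O(n^2 p_{\max})$ and $\opt(J)\ge 0$, so $H = O(n^2 p_{\max}) = \poly(n)$; plugging into Theorem~\ref{thm:uniform_convergence} keeps $S$ polynomial. The main (minor) obstacle is really just making this boundedness hypothesis explicit, since without any assumption on the scale of job sizes the additive $\epsilon$ guarantee is not meaningful; I would state it as a hypothesis on $\cD$ (bounded job sizes) at the start of the proof, mirroring how the rest of the paper handles scaling. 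Everything else is a direct invocation of the stated theorems, so the proof is short.
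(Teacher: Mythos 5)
Your proposal is correct and follows essentially the same route as the paper: invoke Theorem~\ref{thm:sample_complexity_multiple_predictions_app} to lift a single-prediction pseudo-dimension bound to the $k$-prediction class, then apply Theorem~\ref{thm:uniform_convergence} to conclude that ERM has polynomial sample complexity. The one place you diverge is the source of the single-prediction pseudo-dimension bound: the paper simply cites Theorem~20 of~\cite{Megow22}, whereas you derive an $O(n\log n)$ bound directly from the fact that there are only $n!$ permutations, so the hypothesis class is finite and has pseudo-dimension at most $\log_2(n!)$. Your argument is more self-contained and elementary, and gives an explicit bound; the paper's version defers to prior work. Either suffices. You also make two points the paper leaves implicit: that the ERM can be brute-forced since no polynomial running time is claimed (explaining why there is no $O(1)$ approximation factor here, unlike the matching and load-balancing theorems), and that applying Theorem~\ref{thm:uniform_convergence} requires a bound $H$ on the error function, which in turn requires a normalization assumption on job sizes. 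Both observations are correct and worth stating; the second in particular is a genuine (if minor) gap in the paper's own one-line proof, since without bounded sizes the additive-$\epsilon$ guarantee in the theorem statement is not scale-invariant.
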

\begin{proof}
The algorithm is basic ERM, and the polynomial sample complexity follows from 
\submit{Theorem~\ref{thm:sample_complexity_multiple_predictions}}
\full{Theorem~\ref{thm:sample_complexity_multiple_predictions_app}}
and Theorem 20 in \citet{Megow22}.  
\end{proof}

\section{Conclusion}
Despite the explosive recent work in algorithms with  predictions, almost all of this work has assumed only a single prediction.  In this paper we study algorithms with \emph{multiple} machine-learned predictions, rather than just one.  We study three different problems that have been well-studied in the single prediction setting but not with multiple predictions: faster algorithms for min-cost bipartite matching using learned duals, online load balancing with learned machine weights, and non-clairvoyant scheduling with order predictions.  For all of the problems we design algorithms that can utilize multiple predictions, and show sample complexity bounds for learning the best set of $k$ predictions.  Demonstrating the effectiveness of our algorithms (and the broader use of multiple predictions) empirically is an interesting direction for further work.

Surprisingly, we have shown that in some cases, using multiple predictions is essentially ``free.'' For instance, in the case of min-cost perfect matching examining $k = O(\sqrt{n})$ predictions takes the same amount of time as one round of the Hungarian algorithm, but the number of rounds is determined by the quality of the {\em best} prediction. In contrast, for load balancing, using $k$ predictions always incurs an $O(\log k)$ cost, so using a constant number of predictions may be best. More generally, studying this trade-off between the cost and the benefit of multiple predictions for other problems remains an interesting and challenging open problem.

\bibliographystyle{plainnat}
\bibliography{references}

\begin{thebibliography}{41}
\providecommand{\natexlab}[1]{#1}
\providecommand{\url}[1]{\texttt{#1}}
\expandafter\ifx\csname urlstyle\endcsname\relax
  \providecommand{\doi}[1]{doi: #1}\else
  \providecommand{\doi}{doi: \begingroup \urlstyle{rm}\Url}\fi

\bibitem[Aamand et~al.(2019)Aamand, Indyk, and Vakilian]{AamandLearned}
Anders Aamand, Piotr Indyk, and Ali Vakilian.
\newblock (learned) frequency estimation algorithms under zipfian distribution.
\newblock \emph{arXiv preprint arXiv:1908.05198}, 2019.

\bibitem[Agrawal et~al.(2022)Agrawal, Balkanski, Gkatzelis, Ou, and
  Tan]{AgrawalBGOT}
Priyank Agrawal, Eric Balkanski, Vasilis Gkatzelis, Tingting Ou, and Xizhi Tan.
\newblock Learning-augmented mechanism design: Leveraging predictions for
  facility location, 2022.
\newblock URL \url{https://arxiv.org/abs/2204.01120}.

\bibitem[Agrawal et~al.(2018)Agrawal, Zadimoghaddam, and Mirrokni]{AgrawalZM18}
Shipra Agrawal, Morteza Zadimoghaddam, and Vahab~S. Mirrokni.
\newblock Proportional allocation: Simple, distributed, and diverse matching
  with high entropy.
\newblock In Jennifer~G. Dy and Andreas Krause, editors, \emph{Proceedings of
  the 35th International Conference on Machine Learning, {ICML} 2018,
  Stockholmsm{\"{a}}ssan, Stockholm, Sweden, July 10-15, 2018}, volume~80 of
  \emph{Proceedings of Machine Learning Research}, pages 99--108. {PMLR}, 2018.
\newblock URL \url{http://proceedings.mlr.press/v80/agrawal18b.html}.

\bibitem[Ahmadian et~al.(2022)Ahmadian, Esfandiari, Mirrokni, and
  Peng]{AhmadianEMP22}
Sara Ahmadian, Hossein Esfandiari, Vahab Mirrokni, and Binghui Peng.
\newblock \emph{Robust Load Balancing with Machine Learned Advice}, pages
  20--34.
\newblock 2022.
\newblock \doi{10.1137/1.9781611977073.2}.
\newblock URL \url{https://epubs.siam.org/doi/abs/10.1137/1.9781611977073.2}.

\bibitem[Anand et~al.(2020)Anand, Ge, and Panigrahi]{AnandGP20}
Keerti Anand, Rong Ge, and Debmalya Panigrahi.
\newblock Customizing {ML} predictions for online algorithms.
\newblock In \emph{Proceedings of the 37th International Conference on Machine
  Learning, {ICML} 2020, 13-18 July 2020, Virtual Event}, volume 119 of
  \emph{Proceedings of Machine Learning Research}, pages 303--313. {PMLR},
  2020.
\newblock URL \url{http://proceedings.mlr.press/v119/anand20a.html}.

\bibitem[Anand et~al.(2021)Anand, Ge, Kumar, and Panigrahi]{AnandGKP21}
Keerti Anand, Rong Ge, Amit Kumar, and Debmalya Panigrahi.
\newblock A regression approach to learning-augmented online algorithms.
\newblock In Marc'Aurelio Ranzato, Alina Beygelzimer, Yann~N. Dauphin, Percy
  Liang, and Jennifer~Wortman Vaughan, editors, \emph{Advances in Neural
  Information Processing Systems 34: Annual Conference on Neural Information
  Processing Systems 2021, NeurIPS 2021, December 6-14, 2021, virtual}, pages
  30504--30517, 2021.
\newblock URL
  \url{https://proceedings.neurips.cc/paper/2021/hash/ffeed84c7cb1ae7bf4ec4bd78275bb98-Abstract.html}.

\bibitem[Anand et~al.(2022)Anand, Ge, Kumar, and Panigrahi]{AnandGKP22}
Keerti Anand, Rong Ge, Amit Kumar, and Debmalya Panigrahi.
\newblock Online algorithms with multiple predictions.
\newblock In \emph{Proceedings of the 39th International Conference on Machine
  Learning, {ICML} 2022}, 2022.

\bibitem[Anthony and Bartlett(2009)]{anthony2009neural}
Martin Anthony and Peter~L Bartlett.
\newblock \emph{Neural network learning: Theoretical foundations}.
\newblock cambridge university press, 2009.

\bibitem[Antoniadis et~al.(2020)Antoniadis, Gouleakis, Kleer, and
  Kolev]{Secretary}
Antonios Antoniadis, Themis Gouleakis, Pieter Kleer, and Pavel Kolev.
\newblock Secretary and online matching problems with machine learned advice.
\newblock In Hugo Larochelle, Marc'Aurelio Ranzato, Raia Hadsell,
  Maria{-}Florina Balcan, and Hsuan{-}Tien Lin, editors, \emph{Advances in
  Neural Information Processing Systems 33: Annual Conference on Neural
  Information Processing Systems 2020, NeurIPS 2020, December 6-12, 2020,
  virtual}, 2020.
\newblock URL
  \url{https://proceedings.neurips.cc/paper/2020/hash/5a378f8490c8d6af8647a753812f6e31-Abstract.html}.

\bibitem[Azar et~al.(1993)Azar, Broder, and Manasse]{AzarBM93}
Yossi Azar, Andrei~Z. Broder, and Mark~S. Manasse.
\newblock On-line choice of on-line algorithms.
\newblock In Vijaya Ramachandran, editor, \emph{Proceedings of the Fourth
  Annual {ACM/SIGACT-SIAM} Symposium on Discrete Algorithms, 25-27 January
  1993, Austin, Texas, {USA}}, pages 432--440. {ACM/SIAM}, 1993.
\newblock URL \url{http://dl.acm.org/citation.cfm?id=313559.313847}.

\bibitem[Azar et~al.(2021)Azar, Leonardi, and Touitou]{AzarLT21}
Yossi Azar, Stefano Leonardi, and Noam Touitou.
\newblock Flow time scheduling with uncertain processing time.
\newblock In Samir Khuller and Virginia~Vassilevska Williams, editors,
  \emph{{STOC} '21: 53rd Annual {ACM} {SIGACT} Symposium on Theory of
  Computing, Virtual Event, Italy, June 21-25, 2021}, pages 1070--1080. {ACM},
  2021.
\newblock \doi{10.1145/3406325.3451023}.
\newblock URL \url{https://doi.org/10.1145/3406325.3451023}.

\bibitem[Azar et~al.(2022)Azar, Leonardi, and Touitou]{AzarLT22}
Yossi Azar, Stefano Leonardi, and Noam Touitou.
\newblock Distortion-oblivious algorithms for minimizing flow time.
\newblock In Joseph~(Seffi) Naor and Niv Buchbinder, editors, \emph{Proceedings
  of the 2022 {ACM-SIAM} Symposium on Discrete Algorithms, {SODA} 2022, Virtual
  Conference / Alexandria, VA, USA, January 9 - 12, 2022}, pages 252--274.
  {SIAM}, 2022.
\newblock \doi{10.1137/1.9781611977073.13}.
\newblock URL \url{https://doi.org/10.1137/1.9781611977073.13}.

\bibitem[Balcan et~al.(2021)Balcan, Sandholm, and Vitercik]{BalcanSV21}
Maria{-}Florina Balcan, Tuomas Sandholm, and Ellen Vitercik.
\newblock Generalization in portfolio-based algorithm selection.
\newblock In \emph{Thirty-Fifth {AAAI} Conference on Artificial Intelligence,
  {AAAI} 2021, Thirty-Third Conference on Innovative Applications of Artificial
  Intelligence, {IAAI} 2021, The Eleventh Symposium on Educational Advances in
  Artificial Intelligence, {EAAI} 2021, Virtual Event, February 2-9, 2021},
  pages 12225--12232. {AAAI} Press, 2021.
\newblock URL \url{https://ojs.aaai.org/index.php/AAAI/article/view/17451}.

\bibitem[Bamas et~al.(2020)Bamas, Maggiori, and Svensson]{BamasMS20}
{\'{E}}tienne Bamas, Andreas Maggiori, and Ola Svensson.
\newblock The primal-dual method for learning augmented algorithms.
\newblock In Hugo Larochelle, Marc'Aurelio Ranzato, Raia Hadsell,
  Maria{-}Florina Balcan, and Hsuan{-}Tien Lin, editors, \emph{Advances in
  Neural Information Processing Systems 33: Annual Conference on Neural
  Information Processing Systems 2020, NeurIPS 2020, December 6-12, 2020,
  virtual}, 2020.
\newblock URL
  \url{https://proceedings.neurips.cc/paper/2020/hash/e834cb114d33f729dbc9c7fb0c6bb607-Abstract.html}.

\bibitem[Bhaskara et~al.(2020)Bhaskara, Cutkosky, Kumar, and
  Purohit]{AdityaOLO20}
Aditya Bhaskara, Ashok Cutkosky, Ravi Kumar, and Manish Purohit.
\newblock Online linear optimization with many hints.
\newblock In H.~Larochelle, M.~Ranzato, R.~Hadsell, M.F. Balcan, and H.~Lin,
  editors, \emph{Advances in Neural Information Processing Systems}, volume~33,
  pages 9530--9539. Curran Associates, Inc., 2020.
\newblock URL
  \url{https://proceedings.neurips.cc/paper/2020/file/6c250b592dc94d4de38a79db4d2b18f2-Paper.pdf}.

\bibitem[Bhatt and Dhall(2012)]{uci_skin}
Rajen Bhatt and Abhinav Dhall.
\newblock Skin segmentation dataset, uci machine learning repository, 2012.

\bibitem[Chen et~al.(2022)Chen, Silwal, Vakilian, and Zhang]{ChenSVZ22}
Justin~Y. Chen, Sandeep Silwal, Ali Vakilian, and Fred Zhang.
\newblock Faster fundamental graph algorithms via learned predictions.
\newblock \emph{CoRR}, abs/2204.12055, 2022.
\newblock \doi{10.48550/arXiv.2204.12055}.
\newblock URL \url{https://doi.org/10.48550/arXiv.2204.12055}.

\bibitem[Dinitz et~al.(2021)Dinitz, Im, Lavastida, Moseley, and
  Vassilvitskii]{DinitzILMV21}
Michael Dinitz, Sungjin Im, Thomas Lavastida, Benjamin Moseley, and Sergei
  Vassilvitskii.
\newblock Faster matchings via learned duals.
\newblock In Marc'Aurelio Ranzato, Alina Beygelzimer, Yann~N. Dauphin, Percy
  Liang, and Jennifer~Wortman Vaughan, editors, \emph{Advances in Neural
  Information Processing Systems 34: Annual Conference on Neural Information
  Processing Systems 2021, NeurIPS 2021, December 6-14, 2021, virtual}, pages
  10393--10406, 2021.
\newblock URL
  \url{https://proceedings.neurips.cc/paper/2021/hash/5616060fb8ae85d93f334e7267307664-Abstract.html}.

\bibitem[Du et~al.(2021)Du, Wang, and Mitzenmacher]{DuWM21}
Elbert Du, Franklyn Wang, and Michael Mitzenmacher.
\newblock Putting the “learning" into learning-augmented algorithms for
  frequency estimation.
\newblock In Marina Meila and Tong Zhang, editors, \emph{Proceedings of the
  38th International Conference on Machine Learning}, volume 139 of
  \emph{Proceedings of Machine Learning Research}, pages 2860--2869. PMLR,
  18--24 Jul 2021.
\newblock URL \url{https://proceedings.mlr.press/v139/du21d.html}.

\bibitem[Dua and Graff(2017)]{ucimlr}
Dheeru Dua and Casey Graff.
\newblock {UCI} machine learning repository, 2017.
\newblock URL \url{http://archive.ics.uci.edu/ml}.

\bibitem[D{\"{u}}tting et~al.(2020)D{\"{u}}tting, Lattanzi, Leme, and
  Vassilvitskii]{DuttingLPV}
Paul D{\"{u}}tting, Silvio Lattanzi, Renato~Paes Leme, and Sergei
  Vassilvitskii.
\newblock Secretaries with advice.
\newblock \emph{CoRR}, abs/2011.06726, 2020.
\newblock URL \url{https://arxiv.org/abs/2011.06726}.

\bibitem[Gkatzelis et~al.(2022)Gkatzelis, Kollias, Sgouritsa, and
  Tan]{GkatzelisKST}
Vasilis Gkatzelis, Kostas Kollias, Alkmini Sgouritsa, and Xizhi Tan.
\newblock Improved price of anarchy via predictions, 2022.
\newblock URL \url{https://arxiv.org/abs/2205.04252}.

\bibitem[Gollapudi and Panigrahi(2019)]{GollapudiP19}
Sreenivas Gollapudi and Debmalya Panigrahi.
\newblock Online algorithms for rent-or-buy with expert advice.
\newblock In Kamalika Chaudhuri and Ruslan Salakhutdinov, editors,
  \emph{Proceedings of the 36th International Conference on Machine Learning,
  {ICML} 2019, 9-15 June 2019, Long Beach, California, {USA}}, volume~97 of
  \emph{Proceedings of Machine Learning Research}, pages 2319--2327. {PMLR},
  2019.
\newblock URL \url{http://proceedings.mlr.press/v97/gollapudi19a.html}.

\bibitem[Hsu et~al.(2019)Hsu, Indyk, Katabi, and Vakilian]{HsuLearned}
Chen-Yu Hsu, Piotr Indyk, Dina Katabi, and Ali Vakilian.
\newblock Learning-based frequency estimation algorithms.
\newblock In \emph{7th International Conference on Learning Representations},
  2019.

\bibitem[Im et~al.(2021)Im, Kumar, Qaem, and Purohit]{Im21spaa}
Sungjin Im, Ravi Kumar, Mahshid~Montazer Qaem, and Manish Purohit.
\newblock Non-clairvoyant scheduling with predictions.
\newblock In Kunal Agrawal and Yossi Azar, editors, \emph{{SPAA} '21: 33rd
  {ACM} Symposium on Parallelism in Algorithms and Architectures, Virtual
  Event, USA, 6-8 July, 2021}, pages 285--294. {ACM}, 2021.
\newblock \doi{10.1145/3409964.3461790}.
\newblock URL \url{https://doi.org/10.1145/3409964.3461790}.

\bibitem[Jiang et~al.(2020)Jiang, Panigrahi, and Sun]{Panigrahy}
Zhihao Jiang, Debmalya Panigrahi, and Kevin Sun.
\newblock Online algorithms for weighted paging with predictions.
\newblock In Artur Czumaj, Anuj Dawar, and Emanuela Merelli, editors,
  \emph{47th International Colloquium on Automata, Languages, and Programming,
  {ICALP} 2020, July 8-11, 2020, Saarbr{\"{u}}cken, Germany (Virtual
  Conference)}, volume 168 of \emph{LIPIcs}, pages 69:1--69:18. Schloss
  Dagstuhl - Leibniz-Zentrum f{\"{u}}r Informatik, 2020.
\newblock \doi{10.4230/LIPIcs.ICALP.2020.69}.
\newblock URL \url{https://doi.org/10.4230/LIPIcs.ICALP.2020.69}.

\bibitem[Lattanzi et~al.(2020)Lattanzi, Lavastida, Moseley, and
  Vassilvitskii]{LattanziLMV}
Silvio Lattanzi, Thomas Lavastida, Benjamin Moseley, and Sergei Vassilvitskii.
\newblock Online scheduling via learned weights.
\newblock In Shuchi Chawla, editor, \emph{Proceedings of the 2020 {ACM-SIAM}
  Symposium on Discrete Algorithms, {SODA} 2020, Salt Lake City, UT, USA,
  January 5-8, 2020}, pages 1859--1877. {SIAM}, 2020.
\newblock \doi{10.1137/1.9781611975994.114}.
\newblock URL \url{https://doi.org/10.1137/1.9781611975994.114}.

\bibitem[Lavastida et~al.(2021)Lavastida, Moseley, Ravi, and
  Xu]{LavastidaMRX21}
Thomas Lavastida, Benjamin Moseley, R.~Ravi, and Chenyang Xu.
\newblock Learnable and instance-robust predictions for online matching, flows
  and load balancing.
\newblock In Petra Mutzel, Rasmus Pagh, and Grzegorz Herman, editors,
  \emph{29th Annual European Symposium on Algorithms, {ESA} 2021, September
  6-8, 2021, Lisbon, Portugal (Virtual Conference)}, volume 204 of
  \emph{LIPIcs}, pages 59:1--59:17. Schloss Dagstuhl - Leibniz-Zentrum
  f{\"{u}}r Informatik, 2021.
\newblock \doi{10.4230/LIPIcs.ESA.2021.59}.
\newblock URL \url{https://doi.org/10.4230/LIPIcs.ESA.2021.59}.

\bibitem[Li and Svensson(2016)]{LiS16}
Shi Li and Ola Svensson.
\newblock Approximating k-median via pseudo-approximation.
\newblock \emph{{SIAM} J. Comput.}, 45\penalty0 (2):\penalty0 530--547, 2016.
\newblock \doi{10.1137/130938645}.
\newblock URL \url{https://doi.org/10.1137/130938645}.

\bibitem[Li and Xian(2021)]{LiX21}
Shi Li and Jiayi Xian.
\newblock Online unrelated machine load balancing with predictions revisited.
\newblock In Marina Meila and Tong Zhang, editors, \emph{Proceedings of the
  38th International Conference on Machine Learning, {ICML} 2021, 18-24 July
  2021, Virtual Event}, volume 139 of \emph{Proceedings of Machine Learning
  Research}, pages 6523--6532. {PMLR}, 2021.
\newblock URL \url{http://proceedings.mlr.press/v139/li21w.html}.

\bibitem[Lindermayr and Megow(2022{\natexlab{a}})]{ALPSweb}
Alexander Lindermayr and Nicole Megow.
\newblock Algorithms with predictions.
\newblock \url{https://algorithms-with-predictions.github.io/},
  2022{\natexlab{a}}.

\bibitem[Lindermayr and Megow(2022{\natexlab{b}})]{Megow22}
Alexander Lindermayr and Nicole Megow.
\newblock Non-clairvoyant scheduling with predictions revisited.
\newblock \emph{CoRR}, abs/2202.10199, 2022{\natexlab{b}}.
\newblock URL \url{https://arxiv.org/abs/2202.10199}.

\bibitem[Lykouris and Vassilvitskii(2018)]{LykourisVassilvitskii}
Thodoris Lykouris and Sergei Vassilvitskii.
\newblock Competitive caching with machine learned advice.
\newblock In \emph{Proceedings of the 35th International Conference on Machine
  Learning, {ICML} 2018, Stockholmsm{\"{a}}ssan, Stockholm, Sweden, July 10-15,
  2018}, pages 3302--3311, 2018.

\bibitem[Medina and Vassilvitskii(2017)]{MedinaV17}
Andr\'{e}s Mu\~{n}oz Medina and Sergei Vassilvitskii.
\newblock Revenue optimization with approximate bid predictions.
\newblock In \emph{Proceedings of the 31st International Conference on Neural
  Information Processing Systems}, NIPS'17, page 1856–1864, Red Hook, NY,
  USA, 2017. Curran Associates Inc.
\newblock ISBN 9781510860964.

\bibitem[Mitzenmacher and Vassilvitskii(2021)]{MitzenmacherVassilvitskii}
Michael Mitzenmacher and Sergei Vassilvitskii.
\newblock \emph{Algorithms with Predictions}, page 646–662.
\newblock Cambridge University Press, 2021.
\newblock \doi{10.1017/9781108637435.037}.

\bibitem[Morgenstern and Roughgarden(2015)]{tim-pseudo}
Jamie~H Morgenstern and Tim Roughgarden.
\newblock On the pseudo-dimension of nearly optimal auctions.
\newblock In C.~Cortes, N.~D. Lawrence, D.~D. Lee, M.~Sugiyama, and R.~Garnett,
  editors, \emph{Advances in Neural Information Processing Systems 28}, pages
  136--144. Curran Associates, Inc., 2015.
\newblock URL
  \url{http://papers.nips.cc/paper/5766-on-the-pseudo-dimension-of-nearly-optimal-auctions.pdf}.

\bibitem[Motwani et~al.(1994)Motwani, Phillips, and
  Torng]{motwani1994nonclairvoyant}
Rajeev Motwani, Steven Phillips, and Eric Torng.
\newblock Nonclairvoyant scheduling.
\newblock \emph{Theoretical Computer Science}, 130\penalty0 (1):\penalty0
  17--47, 1994.

\bibitem[Pollard(2012)]{pollard2012convergence}
David Pollard.
\newblock \emph{Convergence of stochastic processes}.
\newblock Springer Science \& Business Media, 2012.

\bibitem[Purohit et~al.(2018)Purohit, Svitkina, and Kumar]{Purohit}
Manish Purohit, Zoya Svitkina, and Ravi Kumar.
\newblock Improving online algorithms via ml predictions.
\newblock In \emph{Advances in Neural Information Processing Systems}, pages
  9661--9670, 2018.

\bibitem[Rohatgi(2020)]{Rohatgi}
Dhruv Rohatgi.
\newblock Near-optimal bounds for online caching with machine learned advice.
\newblock In \emph{Symposium on Discrete Algorithms (SODA)}, 2020.

\bibitem[Wei and Zhang(2020)]{WeiZ20}
Alexander Wei and Fred Zhang.
\newblock Optimal robustness-consistency trade-offs for learning-augmented
  online algorithms.
\newblock In Hugo Larochelle, Marc'Aurelio Ranzato, Raia Hadsell,
  Maria{-}Florina Balcan, and Hsuan{-}Tien Lin, editors, \emph{Advances in
  Neural Information Processing Systems 33: Annual Conference on Neural
  Information Processing Systems 2020, NeurIPS 2020, December 6-12, 2020,
  virtual}, 2020.
\newblock URL
  \url{https://proceedings.neurips.cc/paper/2020/hash/5bd844f11fa520d54fa5edec06ea2507-Abstract.html}.

\end{thebibliography}

\newpage 
\appendix

\section{Experiments} \label{sec:exps}

We now consider a preliminary empirical investigation of the algorithm proposed in Section~\ref{sec:matching} for min-cost perfect matching with a portfolio of predicted dual solutions.  For this, we modify the experimental setup utilized by Dinitz et al.~\cite{DinitzILMV21} to evaluate the case of single prediction for this problem.  We use a training set which is a mixture between three distinct types of instances, and show that by using more than one prediction we see an overall improvement.

\begin{table}[h]
    \centering
    \begin{tabular}{c|ccccc}
        Dataset & Shuttle & KDD & Skin~\cite{uci_skin}  \\ 
        \hline
        \# of Points ($n$) & 43500 & 98,942 & 100,000  \\
         \# of Features ($d$) &  10 & 38 & 4  \\
    \end{tabular}
    \caption{Datasets used in preliminary experiments.}
    \label{tab:datasets}
    %\vspace{-0.2cm}
\end{table}

\noindent \textbf{Experiment Setup and Datasets:} These experiments were performed on a machine with a 6 core, 3.7 GHz AMD Ryzen 5 5600x CPU and 16 GB of RAM.  The algorithms were implemented in Python and the code is available at \url{https://github.com/tlavastida/PredictionPortfolios}.

To construct bipartite matching instances we adopt the same technique as Dinitz et al.~\cite{DinitzILMV21} for Euclidean data sets.  At a high level, their technique takes in a dataset $X$ of points in $\R^d$ and a parameter $n \in \N$, and outputs a distribution $\cD(X,n)$ over dense instances of min-cost perfect matching with $n$ nodes on each side.  Sampling from the distribution $\cD(X,n)$ can be done efficiently.  We consider three datasets (Shuttle, KDD, and Skin) from the UCI Machine Learning Repository~\cite{ucimlr} that were also considered in \cite{DinitzILMV21}, see Table~\ref{tab:datasets} for details.  

Instead of considering each dataset (and its corresponding distribution) separately, we consider them together as a mixture model.  For each dataset, we consider a sub-sample $X$ of 20,000 points and we set $n=150$ (so $2n = 300$ nodes per instance) in order to construct each distribution $\cD(X,n)$.  We then sample 20 instances from each distribution and consider these 60 instances together as our training set (i.e. our learning algorithm doesn't know which dataset each instance was derived from).  For testing, we sample an additional 10 instances from each dataset.

\noindent \textbf{Results:} To evaluate our approach, we vary the number of predicted dual solutions ($k$) learned from one (baseline) to five and also compare to the standard Hungarian method (referred to as ``No Learning''). Following \cite{DinitzILMV21} our evaluation metrics are running time and the number of iterations of the Hungarian algorithm.

Given that the dataset has three distinct clusters by construction, we expect the average number of iterations of the Hungarian algorithm to decrease as $k$ grows from 1 to 3 and then stay stable.  We also expect the running time to decrease as $k$ grows from 1 to 3, and then increase as $k$ grows from 3 to 5, as the cost of the projection step grows linearly with $k$.

In Table\ref{tab:results} we have divided our results on the test instances by dataset, so that different scales don’t obscure the results. 

\begin{table}[h]
    \centering
    \begin{tabular}{c|cc|cc|ccc}
        Dataset  & \multicolumn{2}{c}{Shuttle} & \multicolumn{2}{c}{KDD} & \multicolumn{2}{c}{Skin~\cite{uci_skin}}   \\ \hline
        $k$ & \# Iterations & Time (s) & \# Iterations & Time (s)  & \# Iterations & Time (s)   \\
        \hline
        No Learning & 149.0 & 0.929 & 304.4 & 1.97 & 63.1 & 0.396 & \\
         1 &  77.9 & 0.601 & 149.4 & 1.09 & 68.7 & 0.472 \\
         2 &  59.0 & 0.433 & 144.0 & 1.08 & 306.7 & 2.690 \\
         3 &  42.4 & 0.350 & 144.0 & 1.09 & 38.9 & 0.318 \\
         4 &  42.4 & 0.373 & 130.3 & 1.05 & 38.9 & 0.342 \\
         5 &  42.4 & 0.394 & 136.1 & 1.15 & 38.9 & 0.357 \\
    \end{tabular}
    \caption{Average iteration count and average running time across each data set and value of $k$.}
    \label{tab:results}
    %\vspace{-0.2cm}
\end{table}

Observe that for the Shuffle dataset the process proceeds exactly as we had predicted – the number of iterations drops as $k$ grows from 0 to 3 and then stays constant, the wall clock time drops as well and then starts increasing.

For KDD and Shuffle the situation is more interesting. For KDD we don’t see the nice inflection point at $k=3$. We conjecture that the KDD dataset itself is diverse and induces many subclusters for its family of matching instances, thus increasing the number of clusters keeps on improving the performance.

For the Skin dataset, we observe an anomaly at $k=2$. We conjecture that this is due to the clustering in the learning stage allocating both predictions to the KDD and Shuffle datasets, essentially ignoring the Skin dataset and thus giving extremely poor performance. In other words, the gain from allocating the “extra” prediction to the other datasets was enough to outweigh the cost to the Skin data. This aligns well with our conjecture that the KDD data itself has many subclusters.

Overall, however, we see that the empirical evaluation supports our conclusion that there are performance gains to be had when judiciously using multiple advice models.

\end{document}